\def\eqref#1{equation~\ref{#1}}
\def\1{\bm{1}}
\DeclareMathAlphabet{\mathsfit}{\encodingdefault}{\sfdefault}{m}{sl}
\SetMathAlphabet{\mathsfit}{bold}{\encodingdefault}{\sfdefault}{bx}{n}
\definecolor{darkcerulean}{rgb}{0.03, 0.27, 0.49}
\definecolor{smokyblack}{rgb}{0.06, 0.05, 0.03}
\definecolor{warmblack}{rgb}{0.0, 0.26, 0.26}
\definecolor{cobalt}{rgb}{0.0, 0.28, 0.67}
\newcommand{\cmark}{\ding{51}}%
\newcommand{\xmark}{\ding{55}}%
\newcommand\blfootnote[1]{
    \begingroup
    \renewcommand\thefootnote{}\footnotetext{#1}
    \addtocounter{footnote}{0}
    \endgroup
}
\newtheorem{remark}{Remark}
\newtheorem{thm}{Theorem}
\newtheorem{example}{Example}
\newtheorem{definition}{Definition}
\newtheorem{prop}{Proposition}
\newtheorem{lemma}{Lemma}
\newcommand{\pointing}[2]{\overrightarrow{#1#2}}
\newcommand{\eqdef}{\ensuremath{\stackrel{\mbox{\upshape\tiny def.}}{=}}}
\definecolor{FigGreenArrows}{HTML}{008C38}
\definecolor{FigRedArrows}{HTML}{A61F0A}
\definecolor{FigPurpleArrows}{HTML}{603778}
\definecolor{FigPurpleSpaceTime}{HTML}{E1D5E7}
\title{Neural Spacetimes \\ for DAG Representation Learning}
\author{Haitz Sáez de Ocáriz Borde
    \\ University of Oxford \\
\And 
    Anastasis Kratsios
    \\ McMaster University \& Vector Institute \\     
\AND 
Marc T. Law \\ NVIDIA \\ 
\And
    Xiaowen Dong \\ University of Oxford \\ 
\And
    Michael Bronstein  \\ University of Oxford \& AITHYRA\\  
}
\begin{document}

\maketitle

\begin{abstract}
We propose a class of trainable deep learning-based geometries called Neural SpaceTimes (NSTs), which can universally represent nodes in weighted Directed Acyclic Graphs (DAGs) as events in a spacetime manifold. While most works in the literature focus on undirected graph representation learning or causality embedding separately, our differentiable geometry can encode both graph edge weights in its spatial dimensions and causality in the form of edge directionality in its temporal dimensions. We use a product manifold that combines a quasi-metric (for space) and a partial order (for time). NSTs are implemented as three neural networks trained in an end-to-end manner: an embedding network, which learns to optimize the location of nodes as events in the spacetime manifold, and two other networks that optimize the space and time geometries in parallel, which we call a neural (quasi-)metric and a neural partial order, respectively. The latter two networks leverage recent ideas at the intersection of fractal geometry and deep learning to shape the geometry of the representation space in a data-driven fashion, unlike other works in the literature that use fixed spacetime manifolds such as Minkowski space or De Sitter space to embed DAGs. Our main theoretical guarantee is a universal embedding theorem, showing that any $k$-point DAG can be embedded into an NST with $1+\mathcal{O}(\log(k))$ distortion while exactly preserving its causal structure. The total number of parameters defining the NST is sub-cubic in $k$ and linear in the width of the DAG. If the DAG has a planar Hasse diagram, this is improved to $\mathcal{O}(\log(k) + 2)$ spatial and 2 temporal dimensions. We validate our framework computationally with synthetic weighted DAGs and real-world network embeddings; in both cases, the NSTs achieve lower embedding distortions than their counterparts using fixed spacetime geometries.
\end{abstract}

\section{Introduction}
\label{s:Intro}

\blfootnote{Email addresses: \texttt{chri6704@ox.ac.uk}, \texttt{kratsioa@mcmaster.ca}, \texttt{marcl@nvidia.com}, \texttt{xdong@robots.ox.ac.uk}, \texttt{michael.bronstein@cs.ox.ac.uk}}

Graphs are a ubiquitous mathematical abstraction used in practically every branch of science from the analysis of social networks~\citep{robins2007recent} to economic stability~\citep{hurd2016contagion} and genomics~\citep{genes}.
The breadth of social and physical phenomena encoded by large graphs has motivated the machine learning community to seek efficient representations (or {\em embeddings}) of graphs in continuous spaces. 
Since the structure of graphs typically has properties different from Euclidean geometry (e.g., trees exhibit exponential volume growth), a significant effort has been made to identify non-Euclidean geometries which can faithfully capture the structure of general graphs. 
Examples include hyperbolic embeddings of tree-like graphs~\citep{ganea2018hyperbolic,sonthalia2020tree,shimizu2021hyperbolic,kratsios2023small,kratsios2023capacity}, spherical and toroidal embeddings of graphs with cycles~\citep{SchoenbergSpheres_Duke_1942,GuellaMenegarttoPeron_SpheresSchoenberg,giovanni2022heterogeneous}, 
embeddings of graphs with several of these characteristics into product Riemannian geometries~\citep{borde2023neural,borde2023latent,digiovanni2022heterogeneous}, combinations of constant curvature Riemannian manifolds~\citep{lu2023ames} and manifolds with locally controllable Ricci curvature~\citep{giovanni2022heterogeneous}. 

\begin{figure}[hbpt!]
\centering
\centerline{\includegraphics[width=.7\linewidth]{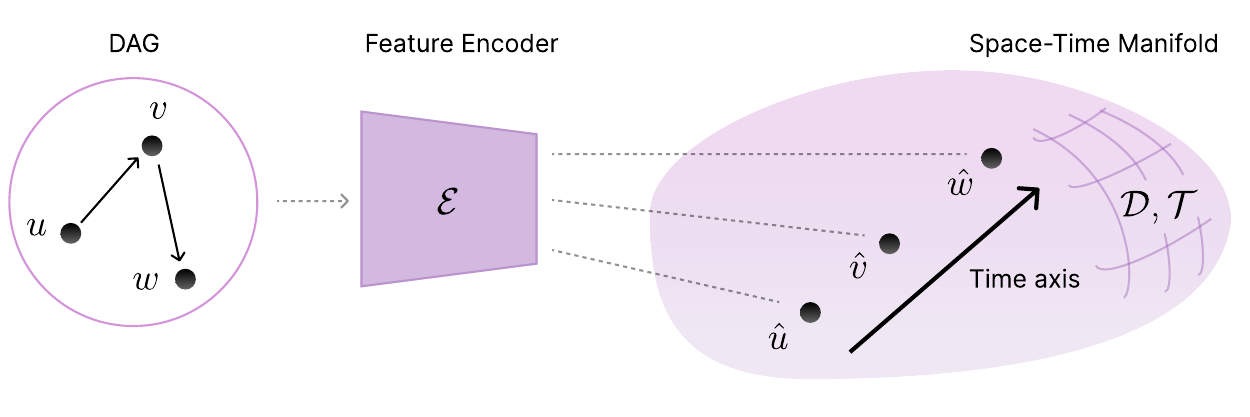}}
\vspace{-15pt}
\caption{A \textit{Neural Spacetime} (NST) is a learnable triplet $\mathcal{S}=(\mathcal{E},\mathcal{D},\mathcal{T})$, 
where $\mathcal{E}:\mathbb{R}^{N}\rightarrow\mathbb{R}^{D+T}$ is a \textit{(feature) encoder network}, $\mathcal{D}:\mathbb{R}^{D+T}\times \mathbb{R}^{D+T}\to [0,\infty)$ is a learnable \textit{quasi-metric} on $\mathbb{R}^{D}$ and $\mathcal{T}:\mathbb{R}^{D+T}\to\mathbb{R}^{T}$ is a learnable \textit{partial order} on $\mathbb{R}^{T}$. Given an input Directed Acyclic Graph~(DAG), $\mathcal{E}$ optimizes the location of the nodes $u,v,w$ as events in the spacetime manifold $\hat{u},\hat{v},\hat{w}$, while concurrently $\mathcal{D}$ and $\mathcal{T}$ learn the geometry of space and time themselves. The objective is to find a geometry that can faithfully represent, with minimal distortion, the metric geometry of the input DAG in space as well as its causal connectivity in time.}
\vspace{-15pt}
\label{fig:neural_spacetimes_schematic}
\end{figure}

\textbf{The Directed Graph Embedding Problem.} The description of many real-world systems requires directed graphs, for example, gene regulatory networks~\citep{genes}, flow networks~\citep{deleu2022bayesian}, stochastic processes~\citep{backhoff2020all}, or graph metanetworks~\citep{lim2024graph} to name a few. 
Directed graphs (or digraphs) play a significant role in causal reasoning, where they are used to study the relationship between a cause and its effect in a complex system. These problems make use of representation spaces with causal structures, modeling the directional edges of a discrete graph as causal connections in a continuous representation space~\citep{zheng2018dags}. However, most approaches in causal representation learning primarily focus on capturing directional information while neglecting distance information. Recent works~\citep{Sim2021DirectedGE,law2023spacetime} have suggested that Lorentzian \textit{spacetimes} from general relativity are suitable representation spaces for directed graphs, as these geometries possess an \textit{arrow of time} that provides them with causal structure (see Appendix~\ref{a:Related Work on Spacetime Representation Learning}) while also being capable of encoding distance. 

Instead of considering a single arrow of time, in this paper we propose to turn to richer causal structures with multiple time dimensions, allowing us to model greater directional complexity.

\textbf{Problem Formulation.} Our goal is to learn a spacetime manifold representation where nodes in a DAG can be embedded as events. This should be done while capturing edge directionality in the form of causal structure (which includes modelling lack of connectivity as anti-chains), as well as the edge weights in the original discrete structure: these are represented as temporal ordering and spatial distance in the continuous embedding space.

\textbf{The Neural Spacetime Model.}
We propose a trainable geometry with enough temporal and spatial dimensions to encode a broad class of weighted DAGs. In particular, the \textit{Neural SpaceTime}~(NST) model utilizes a MultiLayer Perceptron (MLP) encoder that maps graph node features to an intermediate Euclidean latent space, which is then fragmented into space and time and processed in parallel by a \textit{neural metric} network (inspired by the neural snowflake model from~\cite{de2023neural}) and a \textit{neural partial order}. These embed nodes jointly as causally connected events in the continuous representation while respecting the one-hop distance (given by the DAG weights) and directionality of the original discrete graph structure, see Figure~\ref{fig:neural_spacetimes_schematic}. The spatial component of our model, which considers $D$ spatial dimensions, leverages \textit{large-scale} and asymptotic embedding approaches relevant in coarse and hyperbolic geometry~\citep{Nowak_2005,eskenazis2019nonpositive}, as well as the \textit{small-scale} embedding-based approach of~\cite{de2023neural} inspired by fractal-type metric embeddings. These are often used to encode undirected graphs equipped with their (global) geodesic undirected distance~\citep{Bourgain_1986_IJM__EmbeddingMetricSpacesSuperreflexivBanachSpaces,Matouvek_1999_IJM__EmbeddingsTreesUCBanSpaces,Gupta_2000_ACM__QuantitiativefinitedimembeddingsTrees,krauthgamer2004measured,neiman2016low,elkin2021near,andoni2018snowflake,filtser2020face,kratsios2023small,abraham2022metric}, which implies that their local distance information can also be embedded~\citep{abraham2007local,charikar2010local}. Note that the opposite is not true: optimizing for the local geometry does not guarantee a faithful representation of the global geometry~\citep{ostrovska2019embeddings}.
On the other hand, the temporal component of our trainable geometry parameterizes an order structure $\lesssim^{\mathcal{T}}$ on multiple $T$ time dimensions.  Thus, causality in our representation space simply means that a point $x\in \mathbb{R}^{D+T}$ precedes another $y\in \mathbb{R}^{D+T}$ in the sense that $x\lesssim^{\mathcal{T}} y$ (which only considers order in time $T$, see Section~\ref{Neural Spacetimes}). When $T=1$, our geometry corresponds to using a Cauchy time function of the globally hyperbolic Lorentzian spacetimes to define a partial order~(see \citet[Page 65]{beem1996global} or \citet{burtscher2024time}).

\textbf{Our contributions} are: \textbf{(1)} We propose a tractable neural network architecture for spacetimes (with multiple time dimensions), ensuring that only valid geometries are learnable; hence, the term \textit{neural spacetimes}~(NSTs). Our approach decouples the representation into a product manifold, which models (quasi-)metrics and (partial) orders independently. \textbf{(2)}  Our main theoretical result, Theorem~\ref{theorem:Embedding_Lemma}, provides a global embeddability guarantee showing that any (finite) poset can be embedded into a neural spacetime with a sufficient number of temporal and spatial dimensions. The embeddings are \textit{causal} in time and \textit{asymptotically isometric} in space. Furthermore, the neural spacetime model only requires $\mathcal{O}(k^2)$ parameters to globally embed a weighted DAG with $k$ nodes. Theorem~\ref{thrm:low_dim_efficient} shows that a broad class of posets can be embedded into a NST with at most two temporal dimensions. \textbf{(3)} We experimentally validate our model on synthetic metric DAG datasets, as well as real-world directed graphs that involve web hyperlink connections and gene expression networks, respectively.

\section{Preliminaries: Directed Graphs, Posets, and Quasi-Metrics}
\label{s:Prelims}

We now introduce the relevant notions of causal and spatial structure required to formulate our main results and our model.  This section concludes by discussing the concept of \textit{spacetime embeddings}, which represent DAGs in a \textit{continuous space} encoding both their causal and spatial structure. 
\vspace{-5pt}
\subsection{Causal Structure} 
\label{s:Prelims__ss:Causal}

\textbf{Directed Graphs.} Consider a weighted directed graph, $G_D=(E_D,V,W_D)$, where $E_D$ represents a set of directed edges, $V$ a set of vertices, and $W_D$ the strength of the connections, respectively.  
We say that two vertices $u,v\in V$ are \textit{causally connected}, denoted by $ u\preccurlyeq v$, if there exists a sequence of directed edges, a path $
\big(
    (\nu_n,\nu_{n+1})
\big)_{n=1}^{N-1}
$ in $V$ with $\nu_1 = u$ and $\nu_N=v$. Notice that causal connectivity is a transitive relation; that is, if $u\preccurlyeq v$ and $v\preccurlyeq w$, then $u\preccurlyeq w$, for each $u,v,w\in V$. The neighborhood $\mathcal{N}(u)$ of a node $u\in V$ is the set of nodes sharing an edge with $u$; i.e.\ $\mathcal{N}(u) \eqdef \{v\in V:\, (u,v)\in E_D \mbox{ or } (v,u)\in E_D\}$. Every weighted directed graph $G_D=(E_D,V,W_D)$ is naturally a graph upon forgetting edge directions, meaning that it induces a weighted undirected graph $G = (E,V,W)$ where $E\eqdef\{ \{u,v\}\in V:\, (u,v)\in E_D \mbox{ or } (v,u)\in E_D\}$ and where $W$ is the symmetrization of $W_D$, $W(u,v)\eqdef \max\{W_D(u,v),W_D(v,u)\}$.

\begin{remark}[Feature Vectors and Nodes]
\label{rem:fvn}
In practice, e.g.\ in Graph Neural Network~(GNN)~\citep{gnns} use-cases, one equips the nodes in the graph $G=(E_D,V,W_D)$ with a set of feature vectors $\{x_v\}_{v\in V}$ in $\mathbb{R}^N$.  We thus henceforth identify the set of nodes $V$ in any graph with a \textit{set} of feature vectors in $\mathbb{R}^N$, for some positive integer $N$, via the identification $V\in v\leftrightarrow x_v\in \mathbb{R}^N$.
\end{remark}

\begin{wrapfigure}[21]{r}{0.37\textwidth}
\centering\vspace{-20pt}
\centerline{\includegraphics[width=.8\linewidth]{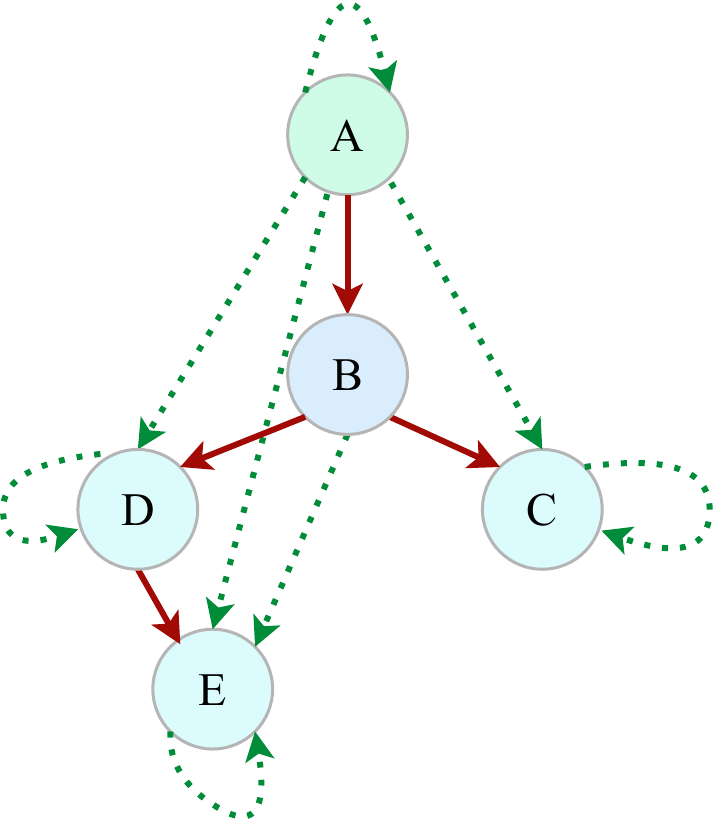}}
\vspace{-5pt}
\caption{\small The {\color{FigRedArrows}{red arrows}} illustrate the {\color{FigRedArrows}{Hasse diagram (DAG)}} with directed edge set {\color{FigRedArrows}{$\{ (A, B), (B, C), (B, D), (D, E) \}$}}. Its corresponding {\color{FigGreenArrows}{poset}} is depicted using {\color{FigGreenArrows}{green arrows}} on the same vertex set $\{A,B,C,D,E\}$. 
Our \textit{partial order} is not a \textit{total order} as there is no red or green arrow between $C$ and $E$. {\color{FigRedArrows}{Red arrows}} encode key DAG structure and {\color{FigGreenArrows}{green arrows}} encode all their possible compositions.}
\label{fig:Hasse_Poset}
\end{wrapfigure}

\textbf{Posets.} 
A broad class of logical relations can be encoded as directed graphs.  We consider the class of partially ordered sets (posets): the pair or tuple $(V, \lesssim)$. In particular, a key characteristic of a poset is that for any two elements $u$ and $v$ in the set the following properties are satisfied: reflexivity, $\forall u \in V, u \preccurlyeq u$; antisymmetry, $u \preccurlyeq v \land v \preccurlyeq u \implies u=v$; and transitivity, $\forall u,v,w \in V, u \preccurlyeq v \land v \preccurlyeq w \implies u \preccurlyeq w$. Every poset induces a DAG structure on the vertex set~$V$.   

\begin{example}[From DAGs to Posets]
\label{ex:DAG_to_poset}
Given the DAG $G_D = (E_D, V, W_D)$, then we define the relation $\preccurlyeq$ on vertices $u, v \in V$ such that $u \preccurlyeq v$ if either $u = v$ or there exists a directed path from $u$ to $v$ in the graph. This relation establishes $(V, \lesssim)$ as a poset.
\end{example}

The antisymmetry condition is not satisfied by all digraphs, but only by certain types of digraphs such as DAGs, since it is incompatible with directed cycles of length greater than 1 (loops other than self-loops). Conversely, posets induce directed graphs.  This identification is encoded through the \textit{Hasse diagram} of a poset, see Example~\ref{def:HasseDiagram}, which is a natural way to \textit{forget all composite relations} in the poset by identifying the key \textit{basic skeleton} relations encoding all its structure. The poset can then be recovered from its Hasse diagram by adding all compositions of edges and self-loops, see Figure~\ref{fig:Hasse_Poset}.

\begin{example}[From Posets to DAGS: Hasse Diagrams]
\label{def:HasseDiagram}
The \textit{Hasse diagram} of a poset $(V,\lesssim)$ is a directed graph $G_D=(E_D,V)$ with $V\eqdef E_D$ and $\forall u,v\in V$, $(u,v)\in E_D$ iff 1) $x_u\le x_v$, 2) $u\neq v$ and 3) there is no $w\in V\setminus \{u,v\}$ with $x_u\le x_w\le x_v$ (so $x_u\le x_v$ is a ``minimal relation''). 
\end{example}

Our theoretical results in Section~\ref{s:Main_Results} focus on embedding guarantees for posets induced by DAGs, which are of interest in causal inference~\citep{textor2016robust,oates2016estimating}, causal optimal transport on DAGs~\citep{eckstein2023optimal}, which is particularly important in sequential decisions making~\citep{acciaio2020causal,backhoff2020all,xu2020cot,eckstein2024computational,krvsek2024general,gunasingam2024adapted} and in robust finance~\citep{bartl2021wasserstein,acciaio2024designing}), in applied category theory~\citep{FongSpivak_AppliedcAtes_2019} since any thin category is a poset~\citep{chandler2019thin}, and in biology applications such as gene expressions~\citep{genes}.

\vspace{-5pt}
\subsection{Spatial Structure} 
\label{s:Prelims__ss:Spatial}
\textbf{Quasi-Metric Spaces.} Although Riemannian manifolds have been employed to formalize non-Euclidean distances between points, their additional structural characteristics, such as smoothness and infinitesimal angles, present considerable constraints. This complexity often makes it challenging to express the distance function for arbitrary Riemannian manifolds in closed-form. On the other hand, quasi-metric spaces isolate the pertinent properties of Riemannian distance functions without requiring any of their additional structures for graph embedding. A {\em quasi-metric space} is defined as a set $X$ equipped with a distance function $d:X\times X\rightarrow [0,\infty)$ satisfying the following conditions for every $x_u,x_v,x_w\in X$: i) $d(x_u,x_v)=0$ if and only if $x_u=x_v$, ii) $d(x_u,x_v)=d(x_v,x_u)$, iii) $d(x_u,x_v)\le C\big(d(x_u,x_w)+d(x_w,x_v)\big)$, for some constant $C\ge 1$. Property (iii) is called the $C$-relaxed triangle inequality. When $C=1$, $(X,d)$ is termed a {\em metric space}, and the $C$-relaxed triangle inequality becomes the standard triangle inequality.  Quasi-metrics arise naturally when considering local embeddings since the triangle inequality is only required to hold locally, allowing for small neighborhoods of distinct points in a point metric space to be embedded independently from one another.  Furthermore, these geometries share many of the important properties of metric spaces, e.g.\ the Arzela-Ascoli theorem holds~\citep{xia2009geodesic}. Moreover, if  property (i) is removed, such that several points may be indistinguishable by their distance information, then we say that $d$ is a {\em pseudo-quasi-metric}~\citep{kim1968pseudo,kunzi1992complete}.  
This naturally occurs when additional time dimensions are used to encode causality, but distance is ignored.

\begin{example}
    Every weighted graph $G = (E,V,W)$ induces a metric space $(V,d_G)$; e.g. using the \textit{shortest path (graph geodesic)} distance $d_G$, defined for each $u,v\in V$ by
\begin{equation}
        d_G(u,v) 
    \eqdef
        \inf\Biggl\{ \sum_{n=1}^{N-1}\, W(\nu_n,\nu_{n+1})  
        : \exists\,(\nu_1=u, \nu_2),\dots, (\nu_{N-1},\nu_N=v)\in E
        \Biggr\}.
        \label{eq:shortest_path}
\end{equation}
\end{example}
\vspace{-5pt}
One could define the $\inf$ of the empty set to be $\max_{v,u\in V}\,W(u,v)+1$ (instead of $\infty$, which is unsuitable for learning and embedding). However, following the spacetime representation literature, we are only interested in learning the distance between causally connected nodes (Section~\ref{Computational Implementation}). We emphasize that only \textit{simple} weighted digraphs are considered here, meaning that we do not allow for self-loops, and each ordered pair of nodes has at most one edge between them. As later discussed in Section~\ref{s:Main_Results} and Section~\ref{Computational Implementation}, NSTs can model causal connectivity in one direction (but no undirected edges), as well as lack of causal connectivity between events (anti-chains).

\vspace{-7pt}
\subsection{Spacetime Embeddings}
\vspace{-.5em}

We will work in a class of objects with the causal structure of DAGs/posets and the distance structure of weighted undirected graphs.  
We formalize this class of \textit{causal metric spaces} as follows.

\begin{definition}[Causal (Quasi-)Metric Space]
\label{defn:QMP}
A triple $(\mathcal{X},d,\lesssim)$ such that $(\mathcal{X},d)$ is a (quasi-)metric space and $(\mathcal{X},\lesssim)$ is a poset, is called a causal (quasi-)metric space.
\end{definition}
\vspace{-.5em}
Having defined a meaningful class of causal metric spaces, we now formalize what it means to approximately create a copy of a causal metric space into another.  The goal is to begin with a complicated \textit{discrete} structure, such as a DAG, and embed it into a well-behaved \textit{continuous structure}. 

\begin{definition}[Spacetime Embedding]
\label{defn:spacetimeEmbedding}
Let $(\mathcal{X},d,\lesssim^{x})$ and $(\mathcal{Y},\rho,\lesssim^{y})$ be causal (quasi-)metric spaces.  A map $f:\mathcal{X}\to \mathcal{Y}$ is a \textit{spacetime embedding} if there are constants $D\ge 1$ and $c>0$ such that for each $x_1,x_2\in \mathcal{X}$
\hfill\\
\noindent (i) \textbf{Causal Embedding:} $x_1\lesssim^x x_2 \Leftrightarrow f(x_1)\lesssim^y f(x_2)$ 
\hfill\\
\noindent (ii)\textbf{Metric Embedding:} $\color{black}{c\,\rho(f(x_1),f(x_2)) \le d(x_1,x_2) \le D \,c\rho(f(x_1),f(x_2)).}$
\hfill\\
The constant $D$ is called the distortion, and $c$ is the scale of the spacetime embedding. A spacetime embedding $f:\mathcal{X}\to\mathcal{Y}$ for which $D=c=1$ is called isocausal.
\end{definition}
\begin{remark}[Optimal Distortion]
\label{rem:Opt_Dist}
{\color{black}{Note that $D=1$ is the minimal possible distortion\footnote{The symbol $D$ is also used in other parts of the text to denote the spatial dimensionality of the NST. We stick to $D$ here since it is standard in the metric embedding literature, but the two should not be confused.}.  This is because $D<1$ is impossible and $D=1$ yields an equality between the rescaled target metric $c\,\rho$ and the original metric $d$; i.e.\ $c\rho(f(x_1),f(x_2))=d(x_1,x_2)$ for all $x_1,x_2\in \mathcal{X}$}.}    
\end{remark}

Our \textit{spacetime embeddings} are illustrated in Figure~\ref{fig:spacetimeembeddings}. We encode causality using the notion of an order embedding from order theory~\citep{davey2002introduction}, also used in general relativity~\citep{kronheimer1967structure,sorkin1991spacetime,reid2003manifold,henson2009causal,benincasa2010scalar}, jointly with the notion of a metric embedding core to theoretical computation science~\citep{gupta1992load,gupta1999embedding,gupta2004cuts} and representation learning~\citep{sonthalia2020tree}.

\begin{figure}[!htbp]
     \centering
     \begin{subfigure}[b]{0.45\textwidth}
         \centering
        \centerline{\includegraphics[width=.5\linewidth]{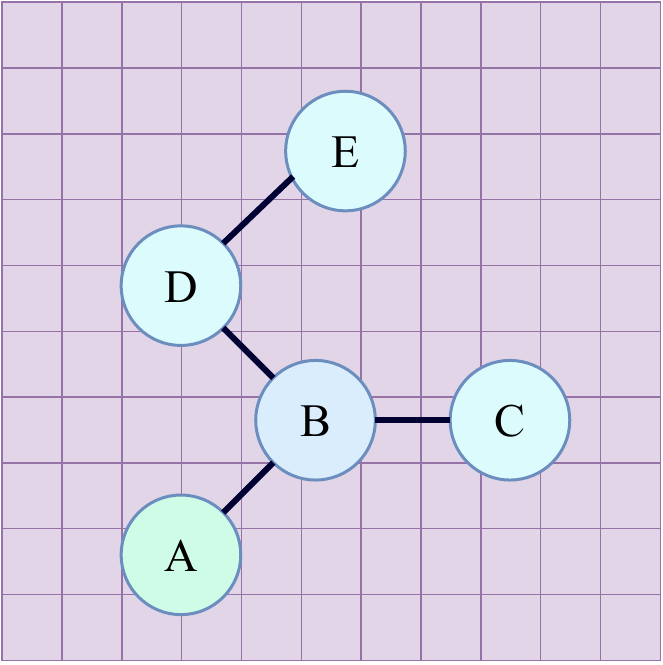}}
         \caption{\footnotesize \textit{Spatial Embedding:} the objective is to replicate the distance between nodes, counted in a minimal number of hops, with the Euclidean distance on $\mathbb{R}^2$. In the neural spacetime, however, the distance will be non-Euclidean.}
         \label{fig:SpacetimeEmbedding_Spatial}
     \end{subfigure}
     \hfill
    \begin{subfigure}[b]{0.05\textwidth}
         \centering
        \centerline{\includegraphics[width=.5\linewidth]{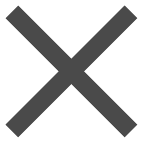}}
        \vspace{8em}
        \caption*{\hfill\\}
     \end{subfigure}
     \hfill
     \begin{subfigure}[b]{0.45\textwidth}
         \centering
         \centerline{\includegraphics[width=.5\linewidth]{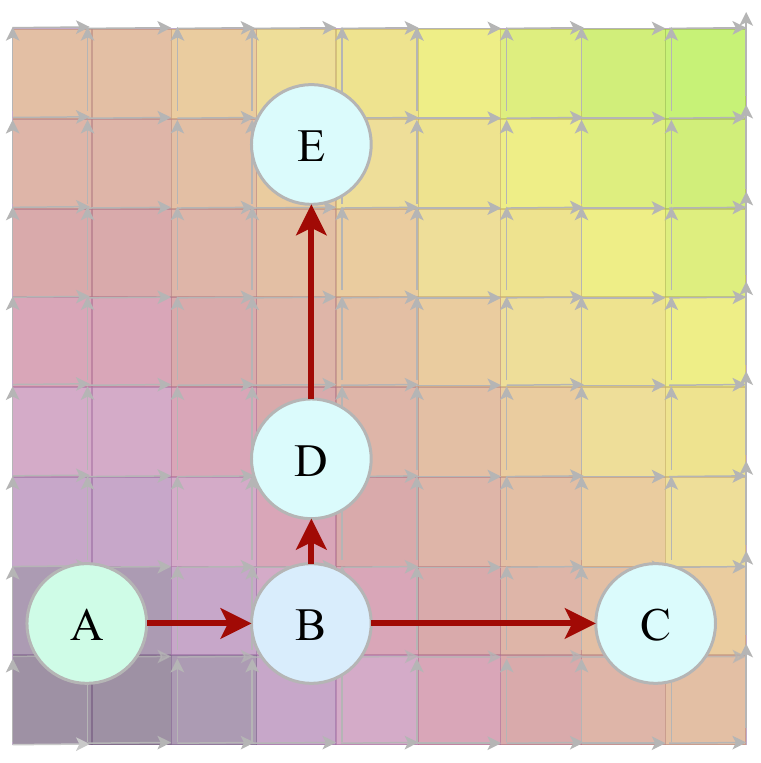}}
         \caption{\footnotesize{ \textit{Temporal Embedding: } the goal of the causal embedding is to match the directions in the embedded graph (direction of {\color{FigRedArrows}{red arrows}}) to the order in $\mathbb{R}^2$ with the product order (direction of {\color{gray}{gray arrows}}).  }}
         \label{fig:SpacetimeEmbedding_Temporal}
     \end{subfigure} 
     \vspace{-5pt}
        \caption{\textit{Spacetime Embeddings (Definition~\ref{defn:spacetimeEmbedding})}: We illustrate a spacetime embedding of the directed graph in Figure~\ref{fig:Hasse_Poset} into $\mathbb{R}^4=\mathbb{R}^2\boldsymbol{\times} \mathbb{R}^2$ with $2$-space dimensions and $2$ time dimensions.  Notice that the spatial component of the spacetime embedding is not a causal embedding and vice versa.}
        \label{fig:spacetimeembeddings}
\end{figure}

\vspace{-15pt}
\section{Neural Spacetimes}
\label{Neural Spacetimes}
\vspace{-5pt}

Let $N\in \mathbb{N}_+$ be the dimensionality of the feature vectors $
x_u,x_v \in\mathbb{R}^{N}$ associated with nodes $u,v\in V$, where $V$ represents the set of nodes of the DAG $G_D=(E_D,V,W_D)$. Fix a space dimension $D\in \mathbb{N}_+$ and a time dimension $T\in \mathbb{N}_+$.
A \textit{neural spacetime} (NST) is a learnable triplet $\mathcal{S}=(\mathcal{E},\mathcal{D},\mathcal{T})$, 
where $\mathcal{E}:\mathbb{R}^{N}\rightarrow\mathbb{R}^{D+T}$ is an \textit{encoder network}, e.g.\ an MLP, $\mathcal{D}:\mathbb{R}^{D+T}\times \mathbb{R}^{D+T}\to [0,\infty)$ is a learnable \textit{quasi-metric} on $\mathbb{R}^{D}$ and $\mathcal{T}:\mathbb{R}^{D+T}\to\mathbb{R}^{T}$ is a learnable \textit{partial order} on $\mathbb{R}^{T}$. 
We implement $\mathcal{D}$ and $\mathcal{T}$ as two neural networks that process the spatial and temporal dimensions of encoded feature vectors $\hat{x}_u,\hat{x}_v\eqdef \mathcal{E}(x_u),\mathcal{E}(x_v)\in\mathbb{R}^{D+T}$ in parallel.

The encoder network, $\mathcal{E}:\mathbb{R}^{N}\rightarrow\mathbb{R}^{D+T}$, maps the original node feature vectors of the input graph to the dimensions of space and time used by the NST representation. We employ this mapping as an intermediate Euclidean space upon which to learn the quasi-metric and partial order. The model learns to allocate relevant information to each dimension through gradient descent, rather than attempting to manually specify which of the original node feature dimensions should correspond to space and time.

To define $\mathcal{D}$, which will be implemented using a variation of the original neural snowflake (see equation~\ref{eq:neural_snowflake} from Appendix~\ref{a:AdittionalBackground__ss:NSnowflakes}).  
The original neural snowflake, and our upgrade thereof, employs two neural networks: the first represents the metric space as vectorial data in $\mathbb{R}^D$, and the second perturbs the Euclidean distance thereon.  Together, any finite metric space may be perfectly (isometrically) embedded in this manner, whereas only one of the networks alone is not enough; e.g. expander graphs cannot be isometrically embedded into any low-dimensional Euclidean space~\cite[Proposition 13]{kratsios2023small}.

We consider the following activation function, $\sigma_{s,l}$. 
The activation $\sigma_{s,l}$ depends on two trainable parameters $s,l>0$. In the spirit of the fractalesque structure of neural snowflakes, $s$ controls whether small-scale distances should be expanded or contracted relative to the distance on $\mathbb{R}^D$. Similarly, $l$ controls the large-scale distances of points and dictates whether those should be expanded or contracted. See Appendix~\ref{whatmakes} for an extended discussion.
\begin{definition}[Neural (Quasi-)Metric Activation]
\label{def:Snowflaked_activation}
For any $s,l >0$, we define the neural (quasi-) metric activation to be the map $\sigma_{s,l}:\mathbb{R}\to \mathbb{R}$ given for each $x\in \mathbb{R}$ by
\begin{equation}
        \sigma_{s,l}(x) 
    \eqdef
        \begin{cases}
             \operatorname{sgn}(x)\, 
             |x|^s & \mbox{ if } |x|<1
        \\
             \operatorname{sgn}(x)\, 
             |x|^l & \mbox{ if } |x|\geq 1
        \end{cases}
        \label{sn_activation}
\end{equation}
where the sign function $\operatorname{sgn}$ returns $1$ for $x~\ge~0$ and $-1$ for $x<0$.
\end{definition}
\vspace{-5pt}
If $s=l$ then $\sigma_{s,l}(x)=\operatorname{sgn}(x)\, |x|^{s}$ and one recovers the key component of the snowflake activation function of \cite{de2023neural} used in the majority of the proofs of its metric embedding guarantees.  
Note that $s$ and $l$ are not required to be coupled in any way.

A \textit{neural (quasi-)metric} is a map $\mathcal{D}:\mathbb{R}^{D+T}\times \mathbb{R}^{D+T}\to [0,\infty)$ with iterative representation:
\begin{equation}
\label{eq:neural_snowflake_v2}
\begin{aligned}
\mathcal{D}(\hat{x}_u,\hat{x}_v)
& \eqdef 
\mathsf{W}_J
\sigma_{s_{J},l_{J}}\bullet(u_{J-1});\,u_j \eqdef \mathsf{W}_j 
\sigma_{s_j,l_j}\bullet (u_{j-1})
\mbox{ for } j=1,\dots,J-1,
\\
u_0 & \eqdef |
            \sigma_{s_0,l_0}\bullet (\hat{x}_u)_{1:D}
        - 
            \sigma_{s_0,l_0}\bullet (\hat{x}_v)_{1:D} 
    |,
\end{aligned}
\end{equation}
for each $\hat{x}_u,\hat{x}_v\in \mathbb{R}^{D+T}$, where the \textit{depth} parameter $J\in \mathbb{N}_+$, where the $s_0,l_0,\dots,s_{J},l_{J}>0$, and $\bullet$ denotes component-wise composition (i.e.\ the function is applied element-wise, which is standard in Deep Learning), weight matrices $\mathsf{W}_j\in 
I^+_D
$ (an invertible positive matrix, see  Appendix~\ref{Invertible Positive Matrices}
) for $j<J$ and $\mathsf{W}_J\in (0,\infty)^{1\times d}$ all of which have \textit{positive} entries, and where the absolute value $|\cdot|$ is also applied component-wise. Note that $(\cdot)_{1:D}$ extracts the first $D$ dimensions of the input vector.

Moreover, we seek an ordering $\lesssim$ on $\mathbb{R}^{D+T}$ so that $u\preccurlyeq v$ in the poset if and only if their respective embeddings $\hat{x}_u$ and $\hat{x}_v$ are ordered in the feature space via $\lesssim$.  Our class of \textit{trainable} orders are parameterized with the following type of neural networks, which we call \textit{neural partial orders}.

Consider a map $\mathcal{T}:\mathbb{R}^{D+T}\to\mathbb{R}^{T}$ admitting the iterative representation

\begin{equation}
\label{eq:neural_spacetime}
\mathcal{T}(\hat{x}_u) \eqdef z_{\tilde{J}}, ~~~~~~~~~ \forall j \in \{ 1,\dots,\tilde{J} \}, 
z_j \eqdef \mathsf{V}_j \sigma_{\tilde{s}_j,\tilde{s}_j} \circ\textrm{LR}\bullet(z_{j-1})
+ b_j, ~~~~~~~~~ z_0 \eqdef (\hat{x}_u)_{D+1:D+T},
\end{equation}
where \textrm{LR} stands for \textrm{LeakyReLU} and $\circ$ for function composition, with the \textit{depth} parameter $\tilde{J}\in \mathbb{N}_+$, weight matrices $\mathsf{V}_j\in I^+_T$, and bias terms $b_1,\dots,b_{\tilde{J}}\in \mathbb{R}^T$. Also, note that $\mathcal{T}(\hat{x}_u) \in \mathbb{R}^{T}$, whereas when applying the subscript $\mathcal{T}(\hat{x}_u)_t\in\mathbb{R}$, the operation returns the entry of the coordinate embedding at dimension $t$. At each layer of $\mathcal{T}$ we use the (single trainable parameter, $s=l$ in equation~\ref{sn_activation}) activation $\tilde{s},\dots,\tilde{s}_{\tilde{J}}>0$. Moreover, we would like to highlight that $\mathcal{D}$ takes both $\hat{x}_u,\hat{x}_v\in \mathbb{R}^{D+T}$ as input to compute distances, whereas $\mathcal{T}$ processes inputs independently. Hence, given any such $\mathcal{T}$ we define the partial order $\lesssim^{\mathcal{T}}$ on $\mathbb{R}^{D+T}$ given, for each $\hat{x}_u,\hat{x}_v\in \mathbb{R}^{D+T}$, by
\begin{equation}
\label{eq:spacetime_order}
        \hat{x}_u \lesssim^{\mathcal{T}} \hat{x}_v
  \Longleftrightarrow  
        \mathcal{T}(\hat{x}_u)_{t}\leq 
        \mathcal{T}(\hat{x}_v)_{t}
        ,
        \forall t=D+1,\dots,D+T.
\end{equation}
The next proposition shows that $\lesssim^{\mathcal{T}}$ always defines a partial order on the \textit{time dimensions} of $\mathbb{R}^{D+T}$, for any $\mathcal{T}$.  This is formalized by noting that, for any $D\in \mathbb{N}_+$, the quotient space $\mathbb{R}^{D+T}/\sim$ under the equivalence relation $x \sim \tilde{x} \Leftrightarrow x_{1:D} = \tilde{x}_{1:D}$ is isomorphic (as a vector space) to $\mathbb{R}^T$.  Thus, there is no loss in generality in assuming that $D=0$.

\begin{prop}[Neural Spacetimes Always Implement Partial Orders]
\label{prop:neuralspacetime}
If $T\in \mathbb{N}_+$, $D=0$, and $\mathcal{T}:\mathbb{R}^{D+T}\to \mathbb{R}^T$ admits a representation as~\eqref{eq:neural_spacetime}, then $\lesssim^{\mathcal{T}}$ 
is a partial order on $\mathbb{R}^{D+T}$. See~\hyperlink{proof:prop:neuralspacetime}{page $21$} for proof.
\end{prop}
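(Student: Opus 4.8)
The plan is to recognize that, once we specialize to $D=0$, the relation $\lesssim^{\mathcal{T}}$ defined in~\eqref{eq:spacetime_order} is precisely the pullback along $\mathcal{T}$ of the coordinate-wise (product) order $\le$ on $\mathbb{R}^T$; that is, $\hat{x}_u \lesssim^{\mathcal{T}} \hat{x}_v$ holds if and only if $\mathcal{T}(\hat{x}_u)_t\le \mathcal{T}(\hat{x}_v)_t$ for all $t=1,\dots,T$. Since the product order on $\mathbb{R}^T$ is itself a partial order, two of the three axioms for $\lesssim^{\mathcal{T}}$ are inherited for free from the pullback structure. Reflexivity follows because $\mathcal{T}(\hat{x}_u)_t\le \mathcal{T}(\hat{x}_u)_t$ in every coordinate, and transitivity follows by chaining the scalar inequalities coordinate-by-coordinate through transitivity of $\le$ on $\mathbb{R}$. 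Neither property uses any structural feature of $\mathcal{T}$ beyond it being a well-defined function.

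The only nontrivial axiom is antisymmetry, and this is where the specific architecture of $\mathcal{T}$ enters. Suppose $\hat{x}_u \lesssim^{\mathcal{T}} \hat{x}_v$ and $\hat{x}_v \lesssim^{\mathcal{T}} \hat{x}_u$; then $\mathcal{T}(\hat{x}_u)_t\le \mathcal{T}(\hat{x}_v)_t$ and $\mathcal{T}(\hat{x}_v)_t\le \mathcal{T}(\hat{x}_u)_t$ for every $t$, so antisymmetry of $\le$ on $\mathbb{R}$ in each coordinate yields $\mathcal{T}(\hat{x}_u)=\mathcal{T}(\hat{x}_v)$. Hence antisymmetry of $\lesssim^{\mathcal{T}}$ is \emph{equivalent} to the injectivity of the map $\mathcal{T}$, and I would devote the bulk of the argument to proving that any $\mathcal{T}$ admitting the representation~\eqref{eq:neural_spacetime} is injective.

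To establish injectivity I would write $\mathcal{T}$ as the composition of its layer maps $L_j(z)\eqdef \mathsf{V}_j\,\big(\sigma_{\tilde{s}_j,\tilde{s}_j}\circ \mathrm{LR}\big)\bullet(z)+b_j$ and show each $L_j$ is injective, so that their composition is too. Each $L_j$ factors as a component-wise nonlinearity, followed by the linear map $\mathsf{V}_j$, followed by a translation by $b_j$. Translation is a bijection, and $\mathsf{V}_j\in I^+_T$ is invertible by definition of invertible positive matrices (Appendix~\ref{Invertible Positive Matrices}), hence injective. It then remains to check that the component-wise map $\big(\sigma_{\tilde{s}_j,\tilde{s}_j}\circ \mathrm{LR}\big)\bullet$ is injective on $\mathbb{R}^T$, which reduces to injectivity of the scalar map $\sigma_{\tilde{s}_j,\tilde{s}_j}\circ \mathrm{LR}$ on $\mathbb{R}$. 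Here $\mathrm{LR}$ is LeakyReLU with a strictly positive slope on both half-lines, hence a strictly increasing bijection of $\mathbb{R}$, and for any $s>0$ the activation $\sigma_{s,s}(x)=\operatorname{sgn}(x)\,|x|^{s}$ is a strictly increasing continuous bijection of $\mathbb{R}$ (strictly monotone on each half-line, vanishing only at $0$, with matching one-sided values there). Their composition is therefore strictly increasing and so injective, and a scalar injection applied coordinatewise is injective on $\mathbb{R}^T$.

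The main obstacle is precisely the antisymmetry step: reflexivity and transitivity are essentially automatic from the pullback structure, whereas antisymmetry forces one to prove injectivity of $\mathcal{T}$. Within that, the delicate point is confirming that every elementary building block is injective as a scalar function, since a single non-injective layer would break the chain. I would note for emphasis that the \emph{positivity} of the weight matrices $\mathsf{V}_j$ and the specific positive slope of $\mathrm{LR}$ are not needed for antisymmetry itself—only invertibility of $\mathsf{V}_j$ and strict monotonicity of the activations enter—whereas those stronger properties are what guarantee order preservation in the subsequent embedding results.
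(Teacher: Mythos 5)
Your proposal is correct and follows essentially the same route as the paper's proof: reflexivity and transitivity are inherited from the product order on $\mathbb{R}^T$, and antisymmetry is reduced to injectivity of $\mathcal{T}$, established layer-by-layer from the invertibility of the matrices $\mathsf{V}_j\in I^+_T$ (which the paper verifies via the Gershgorin circle theorem), the injectivity of LeakyReLU, and translation by $b_j$. If anything, your argument is slightly more careful than the paper's, since you explicitly verify that the scalar map $\sigma_{\tilde{s}_j,\tilde{s}_j}(x)=\operatorname{sgn}(x)|x|^{\tilde{s}_j}$ is strictly increasing, a step the paper subsumes without comment under ``componentwise composition.''
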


Just as the neural snowflakes of \cite{de2023neural}, neural (quasi-)metrics implements a quasi-metric on the \textit{spatial dimension} $\mathbb{R}^D$ in $\mathbb{R}^{D+T}$ (ignoring the time dimensions used only to encode causality). A key difference between the two is that our formulation allows for the weighting or discovery of the importance of each spatial dimension, as well as the rescaling of each spatial dimension individually. Thus, our model in equation~\ref{eq:neural_snowflake_v2} enables greater flexibility as it departs further from the somewhat Euclidean structure of \cite{de2023neural}, which is based on warping a precomputed Euclidean distance.

\begin{remark}[Comparing Neural Snowflakes to Neural (Quasi-)metrics in NSTs.] Neural (quasi-)metrics generalize the non-Euclidean metrics studied in~\citet[Proposition 1.2]{GozlanPOintcareDimFreeConcentrationNonEuc_AnnHP_2010} and implemented as part of the neural snowflake model in~\citet{de2023neural}. 
\end{remark}

This can be of interest in learning theory since it was shown in~\cite{GozlanPOintcareDimFreeConcentrationNonEuc_AnnHP_2010} that these types of distances exhibit favourable concentration of measure properties when measured in $1$-Wasserstein distance from optimal transport. The connection to learning occurs due to the technique of~\cite{amit2022integral,hou2023instance,benitez2023out,kratsios2024digital} whereby one may obtain uniform generalization bounds for H\"{o}lder learners through concentration of measure results.

\begin{prop}[A Neural (Quasi-)Metric is a quasi-metric on the spatial dimensions]
\label{prop:metric_geometry}
Any $\mathcal{D}$ with representation in~\eqref{eq:neural_snowflake_v2} is a quasi-metric on $\mathbb{R}^{D}$.
If each $\mathsf{W}_j$ is orthogonal~%
\footnote{Since any $\mathsf{W}_j$ must have only non-negative entries, then if $\mathsf{W}_j$ is additionally orthogonal, it must be a permutation matrix.}~%
and $0<s_j,l_j\le 1$ then $\mathcal{D}$ is a metric on $\mathbb{R}^{D}$.

Furthermore, if $T\in \mathbb{N}_+$, then $\mathcal{D}$ is a pseudo-quasi-metric on $\mathbb{R}^{D+T}$. See~\hyperlink{proof:prop:metric_geometry}{page $21$} for proof.
\end{prop}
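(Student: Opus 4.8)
The plan is to verify the three defining properties of a quasi-metric --- nondegeneracy, symmetry, and the $C$-relaxed triangle inequality --- by exploiting the layered structure of~\eqref{eq:neural_snowflake_v2}, in which every intermediate vector $u_j(\hat{x}_u,\hat{x}_v)$ is nonnegative and depends on the pair only through the symmetric, coordinatewise quantity $u_0 = |\sigma_{s_0,l_0}\bullet(\hat{x}_u)_{1:D} - \sigma_{s_0,l_0}\bullet(\hat{x}_v)_{1:D}|$. Symmetry is then immediate, since swapping $\hat{x}_u$ and $\hat{x}_v$ leaves $u_0$ invariant and all later layers are functions of $u_0$ alone.

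For nondegeneracy I would argue by backward induction through the network. First note that each $\sigma_{s,l}$ is a continuous, strictly increasing odd bijection of $\mathbb{R}$ with $\sigma_{s,l}(t)=0$ iff $t=0$ (its two branches $|t|^s$ and $|t|^l$ agree and equal $1$ at $|t|=1$). If $\mathcal{D}(\hat{x}_u,\hat{x}_v)=0$, then since $\mathsf{W}_J$ has strictly positive entries and $\sigma_{s_J,l_J}\bullet(u_{J-1})\ge 0$ coordinatewise, the inner product must vanish, forcing $\sigma_{s_J,l_J}\bullet(u_{J-1})=0$ and hence $u_{J-1}=0$ by injectivity. Invertibility of each $\mathsf{W}_j$ for $j<J$ then propagates $u_{J-1}=0\Rightarrow u_{J-2}=0\Rightarrow\cdots\Rightarrow u_0=0$, and injectivity of $\sigma_{s_0,l_0}$ gives $(\hat{x}_u)_{1:D}=(\hat{x}_v)_{1:D}$. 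The converse is clear since every map in~\eqref{eq:neural_snowflake_v2} fixes the origin.

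The main work is the relaxed triangle inequality, and the key lemma I would isolate is a quasi-subadditivity estimate for the activation: for every $s,l>0$ and $\lambda\ge 1$ there is a finite constant $K(\sigma_{s,l},\lambda)$ with $\sigma_{s,l}(\lambda(a+b))\le K(\sigma_{s,l},\lambda)\,\bigl(\sigma_{s,l}(a)+\sigma_{s,l}(b)\bigr)$ for all $a,b\ge 0$. This follows from monotonicity via $\sigma_{s,l}(\lambda(a+b))\le\sigma_{s,l}(2\lambda\max(a,b))$ together with $\sigma_{s,l}(a)+\sigma_{s,l}(b)\ge\sigma_{s,l}(\max(a,b))$, once one checks that $\sup_{t>0}\sigma_{s,l}(\mu t)/\sigma_{s,l}(t)$ is finite for each fixed $\mu\ge 1$ --- the only nontrivial regime being when $t$ and $\mu t$ straddle the threshold $1$, over which the ratio ranges on a compact set. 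Armed with this, I would prove by induction on $j$ that each coordinate of $u_j$ obeys a $C_j$-relaxed triangle inequality $u_j(\hat{x}_u,\hat{x}_v)\preceq C_j\,\bigl(u_j(\hat{x}_u,\hat{x}_w)+u_j(\hat{x}_w,\hat{x}_v)\bigr)$ coordinatewise: the base case $C_0=1$ is the ordinary triangle inequality for $|\cdot|$ on $\mathbb{R}$ applied coordinatewise; the inductive step pushes the inequality through the monotone $\sigma_{s_j,l_j}$ using the lemma (with $\lambda=C_{j-1}$), and then through $\mathsf{W}_j$ using only that its entries are nonnegative, so it preserves coordinatewise inequalities between nonnegative vectors. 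The last layer then yields the $C$-relaxed triangle inequality for $\mathcal{D}$ with $C=C_J$.

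For the metric refinement, when $0<s_j,l_j\le 1$ each $\sigma_{s_j,l_j}$ is subadditive on $[0,\infty)$: indeed $t\mapsto\sigma_{s_j,l_j}(t)/t$ is nonincreasing on $(0,\infty)$ (its branches $t^{s-1}$ and $t^{l-1}$ are nonincreasing and agree at $t=1$), which is a standard sufficient condition for subadditivity of a function vanishing at the origin. Hence the lemma holds with constant $1$ at the level $C_{j-1}=1$, and since each orthogonal $\mathsf{W}_j$ is a permutation matrix it preserves coordinatewise inequalities exactly; the induction then gives $C_j=1$ throughout, so $\mathcal{D}$ is a genuine metric. Finally, on $\mathbb{R}^{D+T}$ with $T\ge 1$ the map $\mathcal{D}$ depends only on the first $D$ coordinates, so symmetry and the relaxed triangle inequality are inherited verbatim, while nondegeneracy fails --- any two points agreeing on their spatial coordinates but differing in time satisfy $\mathcal{D}=0$ --- which is exactly the definition of a pseudo-quasi-metric. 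The expected main obstacle is the relaxed triangle inequality, precisely because the piecewise definition of $\sigma_{s,l}$ forces one to control the activation across its threshold $|x|=1$ and to verify that the accumulated constants $C_j$ remain finite layer by layer.
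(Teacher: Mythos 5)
Your proposal is correct, and for the heart of the statement---the relaxed triangle inequality---it takes a genuinely different route from the paper. The paper handles symmetry and nonnegativity exactly as you do, proves point separation by the same injectivity-propagation idea (run forward from $u_0$ rather than backward from $\mathcal{D}=0$, and justifying invertibility of each $\mathsf{W}_j\in I^+_D$ explicitly via the Gershgorin circle theorem---a step you assume rather than prove, so you should add the one-line strict-diagonal-dominance argument coming from $\lambda>0$), and treats the $T>0$ pseudo-quasi-metric case identically to you. For the triangle inequality, however, the paper compares $\mathcal{D}$ to the Euclidean norm: it bounds $\|\sigma_{s_j,l_j}\bullet(u)\|\le 2^{\beta_j}\|u\|$ using \citep[Example 2.2]{xia2009geodesic} and chains operator norms to obtain $\mathcal{D}(x,y)\le 2^{\sum_{j=0}^J\beta_j}\bigl(\prod_{j=1}^J\|\mathsf{W}_j\|_{op}\bigr)\,\|x-y\|$, with constant $1$ under orthogonality and $s_j,l_j\le 1$. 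You instead isolate a quasi-subadditivity lemma for the activation, $\sigma_{s,l}(\lambda(a+b))\le K\bigl(\sigma_{s,l}(a)+\sigma_{s,l}(b)\bigr)$ with $K$ finite because the ratio $\sigma_{s,l}(\mu t)/\sigma_{s,l}(t)$ stays bounded across the threshold at $1$, and push coordinatewise $C_j$-relaxed triangle inequalities through the layers by induction, using only monotonicity of the activations and nonnegativity of the weights; both your lemma and your subadditivity criterion ($\sigma(t)/t$ nonincreasing when $s,l\le 1$) check out. Comparing what each buys: the paper's route gives a quantitative comparison to the Euclidean metric with explicit constants, but note that the one-sided bound $\mathcal{D}(x,y)\le K\|x-y\|$ does not by itself yield a relaxed triangle inequality for $\mathcal{D}$---one also needs a matching lower bound, which the paper's write-up leaves implicit---whereas your induction produces the relaxed triangle inequality directly and self-containedly, with an explicit finite constant $C_J$. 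Your argument also reveals something the paper's does not: orthogonality of the $\mathsf{W}_j$ is not actually needed for the metric refinement, since nonnegative entries together with $0<s_j,l_j\le 1$ (hence subadditive activations) already force $C_j=1$ at every layer.
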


\vspace{-10pt}
\subsection{Embedding Guarantees}
\label{s:Main_Results}
We now present our main theoretical result which is the (global) embedding guarantees for NSTs.  

\begin{definition}[Width]
\label{defn:Width_Poset}
Let $(P,\lesssim)$ be a poset.  A $S \subseteq P$ is called an \textit{anti-chain} if for each $u,v\in S$ neither $u\preccurlyeq v$ nor $v\preccurlyeq u$.  The \textit{width} of $(P,\lesssim)$ is the maximal cardinality of any anti-chain in $P$.
\end{definition}

Note that to model anti-chains with our framework, more than one time dimension is needed. This allows our model to represent lack of causal connectivity between nodes that do not share a directed path between them in the graph. The result below uses the notion of doubling constant and metric space dimension from fractal geometry, see Appendix~\ref{Dimension of a Metric Space}.

\begin{thm}[Universal Spacetime Embeddings]
\label{theorem:Embedding_Lemma}
Fix $W,N\in \mathbb{N}_+$, $K>0$, and let $(P,d,\lesssim)$ be a finite causal metric space with doubling constant $K$, width $W$, and $P\eqdef \{x_v\}_{v\in V}\subseteq \mathbb{R}^N$.  
There exists a $D\in \mathcal{O}(\log(K))$, an MLP $\mathcal{E}:\mathbb{R}^N\to \mathbb{R}^{D+W}$, $\mathcal{T}:\mathbb{R}^{D+W}\to \mathbb{R}^W$ with representation~\eqref{eq:neural_spacetime}, and $\mathcal{D}:\mathbb{R}^{D+W}\times \mathbb{R}^{D+W}\to [0,\infty)$ with representation~\eqref{eq:neural_snowflake_v2} such that for each $x_u,x_v\in P$
\hfill\\
\noindent (i) \textbf{Order Embedding:} $x_u\preccurlyeq x_v$ if and only if $\mathcal{E}(x_u)\lesssim^{\mathcal{T}}\mathcal{E}(x_v)$,
\hfill\\
\noindent 
   (ii) \textbf{Metric Embedding:} 
    $d(x_u,x_v)
    \leq 
        \mathcal{D}\big(
                \mathcal{E}(x_u)
            ,
                \mathcal{E}(x_v)
        \big)
    \le 
        \mathcal{O}\big(
            \log(K)^5
        \big)
        \,
        d(x_u,x_v).$
\hfill\\
\noindent 
Moreover, setting $D = k$, (ii) can be improved to an isocausal embedding, i.e. $d(x_u,x_v)
    =
        \mathcal{D}\big(
                \mathcal{E}(x_u)
            ,
                \mathcal{E}(x_v)
        \big)$. 
In either case, the number of non-zero parameters determining the neural spacetime triplet $\mathcal{S}=(\mathcal{E},\mathcal{D},\mathcal{T})$ is
$
    \tilde{\mathcal{O}}\big(
            D
        +
            W
        +
            k^{5/2}D^4N
    \big)
.
$ See~\hyperlink{proof:theorem:Embedding_Lemma}{page $25$} for proof.
\end{thm}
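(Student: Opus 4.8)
The plan is to exploit the product structure of the neural spacetime: the order $\lesssim^{\mathcal{T}}$ reads only the time coordinates $D+1,\dots,D+W$ of the encoded vectors, while the quasi-metric $\mathcal{D}$ reads only the space coordinates $1,\dots,D$. I would therefore construct the causal embedding and the metric embedding completely independently, store their target values in disjoint coordinate blocks, and let a single encoder $\mathcal{E}$ realize the concatenated placement. Since $P=\{x_v\}_{v\in V}$ is a finite set of $k$ points in $\mathbb{R}^N$, producing an $\mathcal{E}$ with the prescribed outputs is a memorization/interpolation problem; I would invoke a quantitative MLP interpolation bound to obtain an encoder whose size accounts for the dominant $\tilde{\mathcal{O}}(k^{5/2}D^4N)$ term in the parameter count.

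For the causal claim (i), the combinatorial input I would use is that the order (Dushnik--Miller) dimension of a finite poset is at most its width $W$: by Dilworth's theorem $(P,\lesssim)$ decomposes into $W$ chains, from which one builds $W$ linear extensions $L_1,\dots,L_W$ whose intersection is exactly $\lesssim$. Recording the rank of each element in these extensions yields an order embedding $P\hookrightarrow(\mathbb{R}^{W},\le)$ into the product order, since $x_u\preccurlyeq x_v$ iff $\mathrm{rank}_{L_i}(x_u)\le\mathrm{rank}_{L_i}(x_v)$ for all $i$. I would place these (positive, integer) ranks in the time block of $\mathcal{E}$ and take $\mathcal{T}$ to be the coordinatewise identity realizable by~\eqref{eq:neural_spacetime} with $\mathsf{V}_j=I\in I_W^+$, zero bias, and exponent $\tilde s=1$, so that $\sigma_{1,1}\circ\mathrm{LR}$ is the identity on the positive orthant. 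Then $\mathcal{T}$ preserves the product order, Proposition~\ref{prop:neuralspacetime} certifies that $\lesssim^{\mathcal{T}}$ is a partial order, and~\eqref{eq:spacetime_order} reproduces $\lesssim$ exactly on $\mathcal{E}(P)$.

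For the metric claim (ii), I would appeal to a quantitative Assouad-type snowflake embedding (Naor--Neiman), embedding the snowflake $(P,d^{\alpha})$ into $\mathbb{R}^{D}$ with $D\in\mathcal{O}(\log K)$ \emph{independent} of the exponent $\alpha$ and with distortion controlled by $\log K$ and $1/(1-\alpha)$. Storing this embedding in the space block of $\mathcal{E}$, I would then tune the activation exponents and the positive matrices $\mathsf{W}_j$ of~\eqref{eq:neural_snowflake_v2} so that $\mathcal{D}$ \emph{undoes} the snowflaking, applying $t\mapsto t^{1/\alpha}$ to the Euclidean distance of the embedded points; balancing $\alpha$ then yields the two-sided bound with distortion $\mathcal{O}(\log(K)^5)$. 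This is exactly the spatial metric-embedding guarantee of the neural snowflake of \citet{de2023neural} transplanted to the space coordinates. For the isocausal refinement with $D=k$, I would instead use the Fr\'echet/Kuratowski embedding $x\mapsto(d(x,x_i))_{i=1}^{k}$, an exact isometry into $\ell_\infty^{k}$, and realize the induced distance exactly with $\mathcal{D}$ following the exact finite-metric construction of \citet{de2023neural}, so that $\mathcal{D}(\mathcal{E}(x_u),\mathcal{E}(x_v))=d(x_u,x_v)$.

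The hard part will be the metric block, where one must simultaneously keep the spatial dimension at $\mathcal{O}(\log K)$, reach the polylogarithmic distortion, and respect the \emph{restricted} form of $\mathcal{D}$ — nonnegative/positive weight matrices, componentwise absolute value only at the input layer, and the two-exponent activation $\sigma_{s,l}$. In particular I would need to check that the $C$-relaxed triangle inequality survives the reparametrization $t\mapsto t^{1/\alpha}$ (consistently with Proposition~\ref{prop:metric_geometry}) and that the exact $\ell_\infty$ distance is attainable within this positive-weight, power-activation family. The remaining step is routine bookkeeping: summing the $\mathcal{O}(W)$ parameters of $\mathcal{T}$, the $\mathrm{poly}(D)$ parameters of $\mathcal{D}$, and the memorization cost of $\mathcal{E}$ gives the claimed total $\tilde{\mathcal{O}}(D+W+k^{5/2}D^4N)$.
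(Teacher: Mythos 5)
Your handling of claims (i) and (ii) coincides with the paper's own proof: the causal block comes from Dilworth's theorem (width bounds the order dimension, realized as the product order on $\mathbb{R}^W$ with $\mathcal{T}$ acting as a coordinatewise identity), the spatial block is the Naor--Neiman Assouad embedding of a snowflake of $d$ into $\mathbb{R}^{D}$ with $D\in\mathcal{O}(\log K)$ independent of the snowflake exponent, the de-snowflaking power $t\mapsto t^{1/\alpha}$ is absorbed into $\mathcal{D}$ (exactly what Lemma~\ref{lem:NSembedding_spacetime} supplies), the exponents are balanced to give $\mathcal{O}(\log(K)^5)$ distortion, and the encoder is produced by the same quantitative MLP memorization bound, which dominates the parameter count.

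The gap is in your isocausal refinement for $D=k$. You propose the Fr\'echet--Kuratowski embedding $x\mapsto (d(x,x_i))_{i=1}^{k}$ into $\ell_\infty^k$ and then assert that $\mathcal{D}$ can realize the induced $\ell_\infty$ distance exactly. The architecture in~\eqref{eq:neural_snowflake_v2} cannot do this: the only subtraction occurs at the input layer (where $u_0$ is a componentwise absolute difference), and every subsequent layer is a matrix with nonnegative entries composed with a monotone power activation. Such a network produces positive combinations of powers of the coordinates of $u_0$ --- i.e.\ $\ell_p$-type quantities and their snowflakes, which is precisely what Lemma~\ref{lem:NSembedding_spacetime} implements --- but it has no mechanism for forming a coordinate maximum, since that requires comparing (subtracting) intermediate coordinates, which positive weights forbid; and no $\ell_p$ surrogate is exact, because $\|v\|_p>\|v\|_\infty$ strictly whenever $v$ has at least two nonzero coordinates, as Kuratowski difference vectors generically do. Even the weaker requirement of matching only the finitely many pairwise distances of $P$ is left without a construction. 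Your appeal to ``the exact finite-metric construction of \citet{de2023neural}'' does not fill this hole, because that construction is not an $\ell_\infty$ realization: it rests on the fact that for a $k$-point metric space the snowflake $(P,d^{\alpha})$ with $\alpha=\log_2(1+1/(k-1))/2$ embeds \emph{isometrically} into Euclidean space. That is exactly the route the paper takes for $D\le k$: embed $(P,d^{\alpha})$ isometrically into $\ell_2$, project onto the at most $k$-dimensional span of the image, and let $\mathcal{D}$ implement $\|\cdot\|_2^{1/\alpha}$, which --- unlike the $\ell_\infty$ norm --- lies inside the positive-weight, power-activation family. Replacing your Kuratowski step with this snowflake-into-Euclidean step repairs your argument and recovers the paper's proof.
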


Theorem~\ref{theorem:Embedding_Lemma} guarantees that one never needs more than $W$ time dimensions\footnote{$W$ here is used for time dimensions and not edge weigths.} and $\#P$ space dimensions to have a (perfect) isocausal spacetime embedding. However, note that $W+\mathcal{O}(\log(\#P))$ spacetime dimensions are guaranteed to provide an embedding with a very small distortion, which is likely good enough for most practical problems. Theorem~\ref{theorem:Embedding_Lemma} is \textit{extremal}, in the sense that it holds for all causal metric spaces.  One may, therefore, ask: 
\textit{How much can the result be improved for causal metric spaces with favourable properties?}

In analogy with the Minkowski spacetime representations in \cite{law2023spacetime}, which can accommodate directed line graphs, we now investigate when few (at most two) time dimensions are enough to guarantee a causal embedding for a poset.  Using~\cite{Baker_FishburnRoberts_LowerBound_1970_PosetDim2}, we first characterized posets which admit a spacetime embedding with two time dimensions via their \textit{Hasse diagrams}.
\begin{prop}[When two time dimensions are not enough]
\label{prop:impossiblity__twotime_dimensions}
If the Hasse diagram of a poset $(P,\lesssim)$ is not a planar graph, then there is no spacetime embedding $\mathcal{E}:P\to \mathbb{R}^{D+W}$, $\mathcal{D}:\mathbb{R}^{D+W}\times \mathbb{R}^{D+W}\to [0,\infty)$, $\mathcal{T}:\mathbb{R}^{D+W}\to \mathbb{R}^W$ with $W=1,2$. See~\hyperlink{proof:prop:impossiblity__twotime_dimensions}{page $26$} for proof.
\end{prop}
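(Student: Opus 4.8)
The plan is to reduce the non-existence of a spacetime embedding to a statement purely about the \emph{order dimension} of $(P,\lesssim)$, and then to invoke the planarity criterion of \cite{Baker_FishburnRoberts_LowerBound_1970_PosetDim2}. I would argue by contraposition: assume $\mathcal{E},\mathcal{D},\mathcal{T}$ exist with $W\in\{1,2\}$ and deduce that the Hasse diagram is planar. By the causal-embedding requirement in Definition~\ref{defn:spacetimeEmbedding}(i) together with~\eqref{eq:spacetime_order}, the composite $\Phi\eqdef \mathcal{T}\circ\mathcal{E}\colon P\to\mathbb{R}^W$ satisfies $x_u\preccurlyeq x_v \Leftrightarrow \Phi(x_u)\le\Phi(x_v)$, where $\le$ is the coordinatewise (product) order on $\mathbb{R}^W$. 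The key observation is that $\lesssim^{\mathcal{T}}$ is exactly the pullback of the product order along $\mathcal{T}$: regarded as a function of the $W$ time coordinates, each layer of~\eqref{eq:neural_spacetime} is a bijection of $\mathbb{R}^W$ (the matrices $\mathsf{V}_j$ are invertible and $\sigma_{\tilde s_j,\tilde s_j}\circ\textrm{LR}$ is a strictly increasing bijection of $\mathbb{R}$ applied componentwise), so $\mathcal{T}$ is itself a bijection and, by the very definition of $\lesssim^{\mathcal{T}}$, an order-isomorphism from $(\mathbb{R}^W,\lesssim^{\mathcal{T}})$ onto $(\mathbb{R}^W,\le)$. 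Hence $\Phi$ is a genuine \emph{order embedding} of the finite poset $(P,\lesssim)$ into $(\mathbb{R}^W,\le)$.

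Next I would convert embeddability into a dimension bound. A finite poset order-embeds into $(\mathbb{R}^W,\le)$ if and only if its Dushnik--Miller order dimension is at most $W$ (the standard realizer characterization, see e.g.\ \citealp{davey2002introduction}); thus the existence of $\Phi$ forces $\dim(P,\lesssim)\le W\le 2$. For $W=1$ this already makes $P$ a chain, whose Hasse diagram is a path and hence planar, so this case is immediate (equivalently, a non-planar Hasse diagram is not a path, so $P$ is not a chain and $\dim(P)\ge 2$). For $W=2$ I would then invoke the characterization of dimension-two posets in terms of planarity of their Hasse diagrams from \cite{Baker_FishburnRoberts_LowerBound_1970_PosetDim2}: in the relevant direction, $\dim(P,\lesssim)\le 2$ implies that the Hasse diagram of $(P,\lesssim)$ is planar. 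Chaining the implications --- spacetime embedding with $W\le 2$ $\Rightarrow$ order embedding into $(\mathbb{R}^W,\le)$ $\Rightarrow \dim(P)\le 2 \Rightarrow$ planar Hasse diagram --- and taking the contrapositive yields precisely the statement: a non-planar Hasse diagram admits no spacetime embedding with $W\in\{1,2\}$.

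The reduction in the first paragraph is the routine part; the only points to check are that $\mathcal{T}$ is a bijection on the time coordinates and that the product order pulls back to $\lesssim^{\mathcal{T}}$, both of which follow directly from~\eqref{eq:neural_spacetime},~\eqref{eq:spacetime_order}, and Proposition~\ref{prop:neuralspacetime}. The hard part will be the planarity step, because the correspondence between order dimension two and planarity is subtler than it first looks. The obvious attempt --- place each $v$ at $(\pi_1(v),\pi_2(v))$, its ranks in the two realizing linear extensions, and draw each cover relation as a straight segment --- does \emph{not} in general produce a plane drawing, since two increasing segments coming from distinct covers can cross. One therefore has to appeal to the refined combinatorial argument of \cite{Baker_FishburnRoberts_LowerBound_1970_PosetDim2} (equivalently, pass to the bounded extension of $P$ obtained by adjoining a global minimum and maximum, on which a dominance drawing can be rerouted to be crossing-free). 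Pinning down the planarity criterion for exactly the class of Hasse diagrams at issue, and reconciling it with the notion of planarity used in the statement, is the crux of the argument and the step on which I would spend the most care.
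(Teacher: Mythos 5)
Your proposal is essentially the paper's own proof: the paper likewise converts a spacetime embedding with $W\in\{1,2\}$ into an order embedding of $(P,\lesssim)$ into $(\mathbb{R}^W,\lesssim^{\times})$ (there via the canonical projection onto the time coordinates rather than via $\mathcal{T}\circ\mathcal{E}$, but with identical content) and then concludes in one step by citing \citet[Theorem 6.1]{Baker_FishburnRoberts_LowerBound_1970_PosetDim2}, read as the equivalence between embeddability in $(\mathbb{R}^2,\lesssim^{\times})$ and planarity of the Hasse diagram. The caution in your final paragraph---isolating the direction ``order dimension $\le 2$ implies planar Hasse diagram'' as the delicate step---is well placed, but the paper does not carry out any such refinement either; it simply invokes the cited equivalence, so your attempt matches, and in the reduction step slightly exceeds, the level of detail of the paper's argument.
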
 

We now improve both the temporal and spatial guarantees of the spacetime embedding in Theorem~\ref{theorem:Embedding_Lemma} for the posets characterized by the preceding proposition. To leverage the planar structure of the Hasse diagrams of these posets and thus enhance the spatial component of our spacetime embeddings, we instead equip any such poset $(P,\lesssim)$ with the metric $d_{\operatorname{H}}$, defined as the graph geodesic distance on the undirected Hasse diagram of $(P,\lesssim)$.
\begin{thm}[Low-Distortion Spacetime Embeddings in Time Dimensions]
\label{thrm:low_dim_efficient}
Fix $k,N\in \mathbb{N}_+$, and let $(P,\lesssim,d_{\operatorname{H}})$ is $k$-point poset whose Hasse diagram for $(P,\lesssim)$ is planar, $P\subset \mathbb{R}^N$.
There is a ReLU MLP $\mathcal{E}:\mathbb{R}^N\to \mathbb{R}^{\mathcal{O}(\log(k))+2}$, $\mathcal{D}:\mathbb{R}^{\mathcal{O}(\log(k))+2}\times \mathbb{R}^{\mathcal{O}(\log(k))+2}\to [0,\infty)$ with representation~\eqref{eq:neural_snowflake_v2}, and $\mathcal{T}:\mathbb{R}^{\mathcal{O}(\log(k))+2}\to \mathbb{R}^2$ as in~\eqref{eq:neural_spacetime} satisfying:
for each $x_u,x_v\in P$
\hfill\\
\noindent 
(i) \textbf{Causality:} $
x_u\preccurlyeq x_v$ if and only if $\mathcal{E}(x_u)\lesssim^{\mathcal{T}} \mathcal{E}(x_v)$,
\hfill\\
\noindent 
(ii) \textbf{Low-Distortion:} $
            d_{\operatorname{H}}(x_u,x_v) 
        \le 
            \mathcal{D}\big(
                \mathcal{E}(x_u)
            ,
                \mathcal{E}(x_v)
            \big)
        \le 
            \mathcal{O}(
                \log(k)^2
            )
            \,
            d_{\operatorname{H}}(x_u,x_v) 
.
    $
\hfill\\
\noindent 
The number of parameters in $\mathcal{S}$ is $
\mathcal{O}\big(
    k^{5/2}D^4N\,\log(N)
    \,\,
         \log\big(
            k^2\,
            \operatorname{diam}(P,d_{\operatorname{H}})
          \big)
\big).
$ Proof on~\hyperlink{proof:thrm:low_dim_efficient}{page $26$}.
\end{thm}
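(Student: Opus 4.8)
The plan is to build the two geometric components of the spacetime embedding essentially independently, exploiting the fact that by construction $\lesssim^{\mathcal{T}}$ reads only the last two coordinates of $\mathcal{E}$ (via~\eqref{eq:neural_spacetime} and~\eqref{eq:spacetime_order}), while $\mathcal{D}$ reads only the first $\mathcal{O}(\log(k))$ coordinates (via~\eqref{eq:neural_snowflake_v2}). Thus I would design the ReLU encoder $\mathcal{E}$ to output, in two disjoint coordinate blocks, (a) a spatial embedding of $(P,d_{\operatorname{H}})$ realizing the metric guarantee (ii), and (b) a two-dimensional temporal embedding realizing the causal guarantee (i), and then verify that the two blocks do not interfere.

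For the temporal block I would exploit planarity of the Hasse diagram through the order-dimension characterization of \cite{Baker_FishburnRoberts_LowerBound_1970_PosetDim2} (the converse direction to Proposition~\ref{prop:impossiblity__twotime_dimensions}), which guarantees that a poset whose Hasse diagram is planar has order dimension at most $2$. Concretely, there exist two linear extensions $L_1,L_2$ of $(P,\lesssim)$ with $x_u\preccurlyeq x_v$ if and only if $x_u\le_{L_1}x_v$ and $x_u\le_{L_2}x_v$. I would place in the last two coordinates of $\mathcal{E}(x_v)$ the pair of ranks $(\operatorname{rank}_{L_1}(v),\operatorname{rank}_{L_2}(v))\in\mathbb{R}^2$, so that the product order on $\mathbb{R}^2$ reproduces $\preccurlyeq$ exactly. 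Taking $\mathcal{T}$ coordinatewise monotone (identity weight matrices $\mathsf{V}_j$, exponents $\tilde{s}_j=1$, zero biases) makes $\lesssim^{\mathcal{T}}$ agree with this product order on the two time coordinates, and Proposition~\ref{prop:neuralspacetime} then certifies that $\lesssim^{\mathcal{T}}$ is a genuine partial order, yielding (i).

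For the spatial block I would first embed the planar metric $(P,d_{\operatorname{H}})$ into low-dimensional Euclidean space with controlled distortion -- e.g. a Bourgain-type embedding \cite{Bourgain_1986_IJM__EmbeddingMetricSpacesSuperreflexivBanachSpaces} followed by a Johnson--Lindenstrauss projection down to $\mathbb{R}^{\mathcal{O}(\log(k))}$ -- and then realize the resulting (possibly snowflaked) norm with the neural quasi-metric $\mathcal{D}$ of~\eqref{eq:neural_snowflake_v2}, exactly as in the metric half of Theorem~\ref{theorem:Embedding_Lemma} and \cite{de2023neural}. Choosing coupled snowflake exponents $s_j=l_j$ and positive weight matrices so that $\mathcal{D}$ reproduces the embedded distance, the distortion of $\mathcal{D}\circ\mathcal{E}$ factors as the metric-embedding distortion times the snowflake-realization factor, which I expect to bound by $\mathcal{O}(\log(k)^2)$. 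The encoder $\mathcal{E}$ must memorize both coordinate blocks for the $k$ points; bounding its width and depth by the standard ReLU memorization argument at the numerical precision set by $\log\big(k^2\operatorname{diam}(P,d_{\operatorname{H}})\big)$ and $\log(N)$ then yields the stated parameter count.

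The hard part will be the spatial realization. Unlike Theorem~\ref{theorem:Embedding_Lemma}, I cannot route through the doubling constant, since planar graphs (e.g. stars) can have doubling constant as large as $k$, so the improvement must come from planarity-specific embedding bounds rather than from an Assouad-type argument. The delicate step is showing that the fractalesque activation $\sigma_{s,l}$ can reproduce the target (warped) Euclidean norm of the precomputed embedding with only a poly-logarithmic distortion penalty and without inflating the number of nonzero parameters beyond $\mathcal{O}\big(k^{5/2}D^4N\log(N)\log(k^2\operatorname{diam}(P,d_{\operatorname{H}}))\big)$. By comparison, checking that the two disjoint coordinate blocks emitted by a single shared MLP do not interact -- and that the monotone $\mathcal{T}$ leaves the order block intact -- is comparatively routine.
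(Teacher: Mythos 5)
Your proposal is correct and follows the same architecture as the paper's proof: decouple the two coordinate blocks; realize the causal half via \citep{Baker_FishburnRoberts_LowerBound_1970_PosetDim2} exactly as you describe (the paper's Step 1 invokes the same theorem to get an order embedding into $(\mathbb{R}^2,\lesssim^{\times})$, which is precisely your two-linear-extension rank construction); memorize the concatenated point map with a ReLU MLP (the paper uses \citep[Lemma 20]{kratsios2023small}, with the aspect ratio collapsing to $\operatorname{diam}(P,d_{\operatorname{H}})$ because the Hasse diagram is unweighted, which is your ``numerical precision'' term); and let $\mathcal{T}$, $\mathcal{D}$ implement the product order and a norm exactly via Lemma~\ref{lem:NSembedding_spacetime}. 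The one genuine divergence is the spatial embedding lemma. The paper exploits planarity through Rao's theorem \citep[Theorem 9]{rao1999small}, which embeds planar-graph metrics into $\ell_2$ with distortion $\mathcal{O}(\sqrt{\log(k)})$, then applies Johnson--Lindenstrauss and a $\|\cdot\|_2\le\|\cdot\|_1\le \mathcal{O}(\sqrt{\log (k)})\|\cdot\|_2$ comparison to land on the $p=1$ case of Lemma~\ref{lem:NSembedding_spacetime}, accumulating the stated $\mathcal{O}(\log(k)^2)$. You instead propose Bourgain plus Johnson--Lindenstrauss, which needs no planarity at all and yields distortion $\mathcal{O}(\log(k))$ in $\mathbb{R}^{\mathcal{O}(\log(k))}$, realized exactly by $\mathcal{D}$ with $p=2$, $\alpha=1$; this is well within the budget and in fact beats the paper's exponent, since the paper's $\sqrt{\log(k)}$ advantage from planarity is dissipated by its own JL and norm-switching steps. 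Two of your caveats are therefore unnecessary: the ``snowflake-realization factor'' you feared is exactly $1$, because Lemma~\ref{lem:NSembedding_spacetime} implements $\|\cdot\|_p^{\alpha}$ with zero additional distortion and only $\mathcal{O}(D+T)$ parameters, and your claim that the improvement ``must come from planarity-specific embedding bounds'' is not right --- planarity is essential only for the two-time-dimension causal embedding, while the metric guarantee at the $\mathcal{O}(\log(k)^2)$ level is achievable for arbitrary $k$-point metrics by your own Bourgain route.
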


Our training procedure (Section~\ref{Computational Implementation}) focuses on embedding the \textit{local} structure of directed graphs and is supported by our theoretical guarantees that NSTs are expressive enough to encode the \textit{global} structure of weighted DAGs. By focusing on local embeddings, our neural spacetime model can be trained to embed very large directed graphs, since local distance information is readily available, but global distance information is generally expensive to compute. Interestingly, 

\begin{remark}[Local vs.\ Global NST.] The global and local embeddability of directional information is equivalent in our case due to the transitivity properties of DAGs and our representation spaces. 
\label{remark_local_goblal}
\end{remark}
\vspace{-5pt}
For completeness, we compare neural spacetimes to hyperbolic representation in Appendix~\ref{Hyperbolic Spaces as Compared to Neural Spacetime Embeddings}.

\vspace{-10pt}
\subsection{Computational Implementation}
\label{Computational Implementation}
\vspace{-5pt}

In this section, we provide a discussion on how to bridge the theoretical embedding guarantees presented earlier into a computationally tractable model that can be optimized via gradient descent. Further details are provided in Appendix~\ref{Additional Computational Implementation Details of Neural Spacetimes}.

\textbf{Representing graph edge directionality as a partial order in the embedding manifold. }Given a weighted directed graph $G_D=(E_D,V,W_D)$, let $x_u,x_v\in\mathbb{R}^{N}$ be the feature vectors associated with the nodes $u,v\in V$, which we want to embed as events in our spacetime: $\mathcal{S}$ takes as input these features and maps them to an embedding in the manifold. The edge set $E_D$ induces a binary non-symmetric adjacency matrix $\mathbf{A}$. In our case, $G_D$ is not any digraph but a DAG, hence if $A_{uv}=1\Rightarrow A_{vu}=0$. Both entries can only be equal when they are 0, $A_{uv}=A_{vu}=0$. In the latter case, the nodes may either be causally disconnected, or causally connected but not neighbors. If the input graph does not satisfy these properties our spacetime embedding parametrized by the NST will not be able to faithfully embed the input graph in terms of its edge directionality.

\textbf{Representing graph edge weights as distances in the embedding manifold. }Likewise, $W_D$ induces a distance matrix $\mathbf{D}$ with entries $D_{uv}$ being the causal distance between events $u,v$. Although theoretically this could be computed as the shortest path graph geodesic distance for two causality connected and distant $u$ and $v$ (similar to equation~\ref{eq:shortest_path} but for weighted directed graphs, see Appendix~\ref{Tree Embedding}), in practice we only optimize for the one-hop neighborhood. In particular, if $u=v \Rightarrow D_{uv}=0$, if $u \preccurlyeq v \land v\in\mathcal{N}(u) \land u\neq v \Rightarrow D_{uv} > 0$. The distance $D_{uv}$ is ignored otherwise, that is, we do not model the distance between nodes in the original graph that are causally connected but outside the one-hop neighborhood of each other nor do we model the distance between events that are not causality connected (anti-chains). This is in line with the literature on graph construction via Lorentzian pre-length spaces \citep{law2023spacetime}. In the case of NSTs, it is achieved using the connectivity $A_{uv}$ as a mask in the loss function. 

\textbf{Local geometry optimization and global geometry implications.} Note that although we optimize for the one-hop neighborhood of each node only, transivity of the causal connectivity of nodes across hops will be satisfied by definition of the partial order (Remark~\ref{remark_local_goblal}). Additionally, although the partial order between anti-chains is not directly optimized, as the number of time dimensions increases, it is increasingly probable that $\lesssim^{\mathcal{T}}$ will not be satisfied for nodes not causally connected, as desired. Conversely, there is no guarantee that when directly evaluating the distance between causally connected nodes using $\mathcal{D}$ we will obtain the graph geodesic distance between nodes. {\color{black}In summary, if $x_u,x_v,x_w\in\mathbb{R}^{N}$ are feature vectors for nodes $u,v,w\in V$, and $\hat{x}_u,\hat{x}_v,\hat{x}_w\in\mathbb{R}^{D+T}$ are their $\mathcal{E}$ encodings in the intermediate Euclidean space used by the NST, if $(\hat{x}_u\lesssim^{\mathcal{T}}\hat{x}_v) \land (\hat{x}_v\lesssim^{\mathcal{T}}\hat{x}_w) \Rightarrow \hat{x}_u\lesssim^{\mathcal{T}}\hat{x}_w$. On the other hand, in general $\mathcal{D}(\hat{x}_u,\hat{x}_v)+\mathcal{D}(\hat{x}_v,\hat{x}_w) = D_{uv}+D_{vw}$, and provided that this particular path corresponds to the graph geodesic $D_{uv}+D_{vw}=d_{G}(u,w)$, but $\mathcal{D}(\hat{x}_u,\hat{x}_v)+\mathcal{D}(\hat{x}_v,\hat{x}_w)\neq \mathcal{D}(\hat{x}_u,\hat{x}_w)$. The latter inequality is acceptable and not required to faithfully encode the graph edge weights.}

\textbf{Training and Loss Function.} Let us use the matrix $\mathbf{X}\in\mathbb{R}^{M\times N}$ to denote the collection of feature vectors for $M=|\mathcal{V}|$ nodes associated with the DAG, $G_D$. The NST minimizes the following loss:
\vspace{-5pt}
\begin{equation}
    \mathcal{L}_{G_D} (\mathbf{X},\mathbf{A},\mathbf{D}) \eqdef \sum_{u=u_1}^{u_M}\sum_{v=v_1}^{v_M}\mathcal{L}_{uv}\left(\mathcal{S}(x_u,x_v),(A_{uv},D_{uv})\right),
\end{equation}
where $x_u$ extracts the row feature vector for $u$ from matrix $\mathbf{X}$. The goal is to learn appropriate embeddings so that nodes connected by directed edges are represented as causally connected events in the time dimensions of the manifold, and respect the distance $D_{uv}$ induced by the graph weights in the space dimensions. To do so, we can further divide the loss function into a (causal) distance loss, $\mathcal{L}^{D}_{uv}$, and a causality loss, $\mathcal{L}^{C}_{uv}$. We remind the reader of the definition $\hat{x}_u,\hat{x}_v\eqdef \mathcal{E}(x_u),\mathcal{E}(x_v)\in\mathbb{R}^{D+T}$. Hence the loss can be partitioned into:
%
\begin{equation}
\smash{
    \mathcal{L}_{uv} \eqdef \mathcal{L}^{D}_{uv}\left(\mathcal{D}(\hat{x}_u,\hat{x}_v),(A_{uv},D_{uv})\right) + \mathcal{L}^{C}_{uv}\left(\mathcal{T}(\hat{x}_u),\mathcal{T}(\hat{x}_v),A_{uv}\right)
.
}
\end{equation}
We dissect the two loss terms. The distance loss for a pair of nodes is the mean squared error (MSE) loss between the predicted and ground truth distance, with masking given by the adjacency matrix:
\begin{equation}
\smash{
    \mathcal{L}^{D}_{uv} \eqdef A_{uv}\textrm{MSE}\left(\mathcal{D}(\hat{x}_u,\hat{x}_v),D_{uv}\right).
}
\end{equation}
On the other hand, the causality loss (with respect to the one-hop neighborhood) is
\vspace{-6pt}
\begin{equation}
\label{causality_loss}
    \mathcal{L}^{C}_{uv}\eqdef \, 
        A_{uv}\mathcal{L}^{*}_C
        \Big(
        \sum_{t=1}^T\,
                \operatorname{SteepSigmoid}
                (
                    \mathcal{T}(\hat{x}_u)_t-
                    \mathcal{T}(\hat{x}_v)_t
                )
        \Big)
,
\end{equation}
where $\mathcal{L}^{*}_C$ is a function that takes as input the expression above and $\operatorname{SteepSigmoid}(x) = \frac{1}{1+e^{-10x}}$ (we omit some details here for simplicity, see Appendix~\ref{Causal Loss and Time Embedding} for details). The loss for two causally connected events $u \preccurlyeq v$ in the first neighborhood of each other ($A_{uv}=1$) is minimized when $\hat{x}_u \lesssim^{\mathcal{T}} \hat{x}_v $ is satisfied (equation~\ref{eq:spacetime_order}). If the time coordinates of $\hat{x}_v$ associated with the event $v$ are all greater than those of $\hat{x}_u$ for $u$, then all $\operatorname{SteepSigmoid}$ activation functions will return $\approx 0$.

\begin{table}[!t]
    \centering
\caption{DAG embedding results. Embedding dimension $D=T=2,4,10$.} 
\centering
\vspace{-8pt}
\scalebox{1}{
\scriptsize
\begin{tabular}{ccccc c}\toprule
\textbf{Metric}            & \textbf{Embedding Dim} & \textbf{Distortion (average $\pm$ std)} & \textbf{Max Distortion} & \textbf{Directionality} \\ \toprule
\multirow{3}{*}{$\|\mathbf{x} - \mathbf{y}\|^{0.5}\log(1+\|\mathbf{x} - \mathbf{y}\|)^{0.5}$} & 2             &     1.09 $\pm$ 0.24                                 &             3.18   &      1.0      & \multirow{9}{*}{\rotatebox{270}{Neural Spacetime}}   \\ \cline{2-5} 
                  & 4             &      1.02 $\pm$ 0.06                                &        1.51        &   1.0             \\ \cline{2-5} 
                  & 10            &       1.00 $\pm$ 0.03                               &    1.24            &        1.0         \\ \cline{1-5}
\multirow{3}{*}{$\|\mathbf{x} - \mathbf{y}\|^{0.1}\log(1+\|\mathbf{x} - \mathbf{y}\|)^{0.9}$} & 2             &     1.16 $\pm$ 0.45                                 &        6.21        &       1.0         \\ \cline{2-5} 
                  & 4             &       1.02 $\pm$ 0.07                               &   1.75             & 1.0               \\ \cline{2-5} 
                  & 10            &          1.00 $\pm$ 0.04                            &                1.47&         1.0       \\ \cline{1-5}
\multirow{3}{*}{$1 - \exp{\frac{-(\|\mathbf{x} - \mathbf{y}\|-1)}{\log(\|\mathbf{x} - \mathbf{y}\|)}}$} & 2             &    1.51 $\pm$ 1.18                                  &        13.55        &          1.0      \\ \cline{2-5} 
                  & 4             &       1.11 $\pm$ 0.41                               &       8.92         &       1.0         \\ \cline{2-5} 
                  & 10            &           1.01 $\pm$   0.05                        &         1.31       &        1.0        \\ \midrule
\multirow{3}{*}{$\|\mathbf{x} - \mathbf{y}\|^{0.5}\log(1+\|\mathbf{x} - \mathbf{y}\|)^{0.5}$} & 2             &      2.86 $\pm$  5.22                                &        72.66        &       0.99     
& \multirow{9}{*}{\rotatebox{270}{Minkowski}}   \\ \cline{2-5} 
                  & 4             &    1.70   $\pm$  2.77                               &      71.09          &      0.99          \\ \cline{2-5} 
                  & 10            &     1.21   $\pm$  1.33                              &    35.58            &       0.99          \\ \cline{1-5}
\multirow{3}{*}{$\|\mathbf{x} - \mathbf{y}\|^{0.1}\log(1+\|\mathbf{x} - \mathbf{y}\|)^{0.9}$} & 2             &    6.77  $\pm$  133.68                                &         1669.83       &       0.99         \\ \cline{2-5} 
                  & 4             &   1.70     $\pm$ 5.21                               &           77.03     &     0.99           \\ \cline{2-5} 
                  & 10            &       1.19   $\pm$  1.09                         &               25.18 &   0.99             \\ \cline{1-5}
\multirow{3}{*}{$1 - \exp{\frac{-(\|\mathbf{x} - \mathbf{y}\|-1)}{\log(\|\mathbf{x} - \mathbf{y}\|)}}$} & 2             &    11.37 $\pm$    114.98                               &         1876.54       &     0.98           \\ \cline{2-5} 
                  & 4             &     2.49   $\pm$ 8.72                               &   198.04             &      0.98          \\ \cline{2-5} 
                  & 10            &       1.18     $\pm$  2.49                         &    82.67            &     0.99           \\ 
 \midrule
\multirow{3}{*}{$\|\mathbf{x} - \mathbf{y}\|^{0.5}\log(1+\|\mathbf{x} - \mathbf{y}\|)^{0.5}$} & 2             &    $\infty$   $\pm$                                  &       $\infty$        &            0.99  
& \multirow{9}{*}{\rotatebox{270}{De Sitter}} \\ \cline{2-5} 
                  & 4             &      -4.33 $\pm$ 816.47                                &     10235.71           &       0.99         \\ \cline{2-5} 
                  & 10            &     288.17   $\pm$ 9794.97                               &      324027.5          &  0.99               \\ \cline{1-5}
\multirow{3}{*}{$\|\mathbf{x} - \mathbf{y}\|^{0.1}\log(1+\|\mathbf{x} - \mathbf{y}\|)^{0.9}$} & 2             &     9.40 $\pm$   2226.84                               &          63968.21      &         0.99       \\ \cline{2-5} 
                  & 4             &       174.69 $\pm$ 3637.32                               &  115851.88              &       0.99         \\ \cline{2-5} 
                  & 10            &       -10.66   $\pm$ 739.71                          &    8997.62            &      0.99          \\ \cline{1-5}
\multirow{3}{*}{$1 - \exp{\frac{-(\|\mathbf{x} - \mathbf{y}\|-1)}{\log(\|\mathbf{x} - \mathbf{y}\|)}}$} & 2             &   -183.71  $\pm$  9600.71                                 &          39648.66      &     0.94           \\ \cline{2-5} 
                  & 4             &     83.04   $\pm$  4313.82                              &   97524.21            &         0.94       \\ \cline{2-5} 
                  & 10            &       418.26     $\pm$  6599.80                         &      150543.73         &        0.94        \\ \bottomrule
\end{tabular}}
\vspace{-15pt}
\label{tab:syntheticDAG}
\end{table}

\vspace{-10pt}
\section{Experimental Results}
\label{ExperimentalValidation}
\vspace{-5pt}
\textbf{Synthetic Weighted DAG Embedding.} We generate DAGs (see Appendix~\ref{Metric DAG Embedding}) embedded in the 2D plane with associated local distances between nodes given by several metrics. We measure the embedding capabilities of NSTs compared to closed-form spacetimes such as the Minkowski and De Sitter spaces in~\citet{law2023spacetime}. Although we do not optimize the geometries in the case of the baselines, we do use a neural network encoder to map points to events in the manifolds. We quantify both the average and maximum metric distortion (the ratio between true and predicted distances) following~\citet{kratsios2023capacity}, as well as the accuracy of the time embedding. As can be seen in Table~\ref{tab:syntheticDAG}, we are always able to embed edge directionality (0 for no edges embedded correctly, 1 for all edges embedded correctly). In terms of metric distortion, as the embedding dimension increases, both average and maximum distortion decrease. NSTs are particularly good at retraining low distortions in low-dimensional embedding spaces. See Table \ref{tab:syntheticDAG_appendix} for more results.

\begin{table}[!t]
\caption{Embedding results for real-world web page hyperlink and gene regulatory networks.}
\centering
\vspace{-11pt}
\scriptsize
\scalebox{0.92}{
\begin{tabular}{cc | ccc | ccc | ccc}\toprule
& & \multicolumn{3}{c}{Neural Spacetime} & \multicolumn{3}{c}{Minkowski} & \multicolumn{3}{c}{De Sitter} \\
\textbf{Dataset}            & \rotatebox{90}{\textbf{Embed. Dim}} & \rotatebox{90}{\textbf{Distortion}} & \rotatebox{90}{\textbf{Max Distortion}} & \rotatebox{90}{\textbf{Directionality}} & \rotatebox{90}{\textbf{Distortion}} & \rotatebox{90}{\textbf{Max Distortion}} & \rotatebox{90}{\textbf{Directionality}}  & \rotatebox{90}{\textbf{Distortion}} & \rotatebox{90}{\textbf{Max Distortion}} & \rotatebox{90}{\textbf{Directionality}}   \\ \toprule
\multirow{3}{*}{Cornell} & 2             &    1.00  $\pm$   0.07                              &        1.31        &       0.93   & 1.07  $\pm$  0.70                               &      9.43         &     0.94 &  -55.83   $\pm$    890.45                             &    3950.88           &      0.92 \\ \cline{2-11} 
                  & 4             &     1.00 $\pm$  0.04                               &    1.08           &       0.94    &    1.00  $\pm$  0.00                               &          1.01     &      0.94 & -20.60  $\pm$ 249.49                                &       403.46        &      0.94             \\ \cline{2-11} 
                  & 10            &  1.00     $\pm$  0.04                              &       1.08         &      0.94      &    1.00  $\pm$  0.00                               &          1.00     &      0.94 &    0.80   $\pm$ 126.26                               &      1543.07          &             0.93      \\ \cline{1-11} 
\multirow{3}{*}{Texas} & 2             &    1.01  $\pm$   0.10                               &            2.27    &       0.89    &   1.12   $\pm$ 1.73                                  &       31.27         &         0.90      &    -0.29  $\pm$     84.42                             &         818.10       &    0.90      \\ \cline{2-11} 
                  & 4             &    1.00    $\pm$  0.01                              &        1.05        &          0.90     &     1.00   $\pm$  0.00                              &  1.00              &   0.90         &    42.03    $\pm$  795.51                              &    13939.25            &        0.90      \\ \cline{2-11} 
                  & 10            &     1.00    $\pm$   0.00                         &        1.00        &      0.90             &      1.01   $\pm$ 0.01                           &  1.05              &       0.90        &      2.60   $\pm$      70.33                      &       1107.60         &  0.90      \\ \cline{1-11} 
\multirow{3}{*}{Wisconsin} & 2             &                1.00    $\pm$ 0.10                   &        1.67        &      0.89         &     5.07               $\pm$   65.99                 &    1410.03            &        0.90       &                    2.06 $\pm$      63.46              &         1291.31       &   0.89     \\ \cline{2-11} 
                  & 4             &         1.00    $\pm$   0.04                      &       1.16         &      0.89       &    1.00         $\pm$   0.04                      &  1.19              &        0.90      &        -0.78     $\pm$ 27.91                        &     114.24           &             0.90      \\ \cline{2-11} 
                  & 10            &           1.00        $\pm$  0.04                  &        1.20       &     0.89       &                   1.13 $\pm$ 0.69                   &           16.28    &  0.90              &            0.04       $\pm$  215.94                  &     2862.19          &            0.89      \\ \cline{1-11}

\multirow{3}{*}{In silico} & 2             &   1.06 $\pm$ 0.47                                  &    18.54            &      1.00     &    105.42  $\pm$ 4671.85                                  &        209248.72        &      0.94         &     -63.59 $\pm$ 1866.69                                 &  56626.97              &    0.92      \\ \cline{2-11} 
                  & 4             &     1.00   $\pm$  0.09                              &       1.73         &       1.00        &    0.25    $\pm$   54.57                             &  1315.76              &     0.95       &     -468.81   $\pm$  33021.14                              &       65289.22         &    0.92         \\ \cline{2-11} 
                  & 10            &    1.00     $\pm$   0.05                         &       1.32         &  1.00                 &     1.00    $\pm$ 0.05                           &   3.69             &          0.99     &      -129.13   $\pm$ 9623.30                           &   261531.59            &   0.93      
    
                  \\ \cline{1-11}

\multirow{3}{*}{E. coli} & 2             &    1.02  $\pm$  0.45                                &        15.37        &  1.00         &     -4.25 $\pm$ 149.61                                  &         438.34       &    0.97           &   34.65   $\pm$  2637.50                                &         119047.23       &      0.91    \\ \cline{2-11} 
                  & 4             &     1.00   $\pm$   0.06                             &     2.62           &        1.00       &     1.00   $\pm$ 0.01                               &     1.08           &      0.98      &       -2.00 $\pm$ 3294.81                               &  130509.59              &       0.91      \\ \cline{2-11} 
                  & 10            &     1.00    $\pm$   0.05                         &  1.17              & 1.00                   &     1.00    $\pm$  0.01                          &   1.01             &        0.99       &     -8.26    $\pm$ 94.57                           &    652.96            &         0.91
    
                  \\ \cline{1-11}

\multirow{3}{*}{S. cerevisiae} & 2             &     1.05 $\pm$ 0.34                                 &        10.18        & 1.00          &    -2.38  $\pm$  173.57                                 &    151.43            &       0.91        &    55.36  $\pm$    3960.09                              &   160278.39             &    0.90      \\ \cline{2-11} 
                  & 4             &      1.00  $\pm$  0.07                              &   1.63             &        1.00       &     1.04   $\pm$  2.25                              &     63.17           &   0.98         &      -28.60  $\pm$   1175.67                             &    63086.54            &       0.90      \\ \cline{2-11} 
                  & 10            &    1.00     $\pm$  0.05                          &       1.57         &   1.00                &      1.01   $\pm$  0.02                          &      1.39          &         0.99      &       -121.17  $\pm$  7550.16                          &   84724.25             &        0.91 
    
                  \\ \bottomrule
\end{tabular}}
\vspace{-18pt}
\label{tab:real_results}

\end{table}

\textbf{Real-World Network Embedding.} We test our approach on real-world networks. In Table~\ref{tab:real_results}, we present results for the Cornell, Texas, and Wisconsin (WebKB) datasets~\citep{musae}, which are based on webpages represented as nodes and directed hyperlinks between them. All the nodes have bag-of-word features that we use as input of the neural spacetime encoder. The neural partial order encodes the hyperlink directionality between websites, and the neural (quasi-)metric learns the connectivity strength as the cosine similarity between connected webpage features. We achieve very low distortions, showcasing the embedding capabilities of our network. Note that, from a metric learning perspective, real-world datasets are generally less challenging than the metrics presented in Table~\ref{tab:syntheticDAG}, which we chose to be particularly unconventional on purpose. In terms of the time embedding, we manage to mostly encode directionality; however, these datasets are not pure DAGs since they contain some directed cycles, so it is not possible to embed them perfectly. We also work with real-world gene regulatory network datasets~\citep{genes} in line with spacetime representation learning literature~\citep{law2023spacetime,Sim2021DirectedGE}. We achieve good spatial and causal embeddings for the In silico, Escherichia coli, and Saccharomyces cerevisiae datasets, see Table~\ref{tab:real_results}. Experimental details and hyperparameters for all setups can be found in Appendix~\ref{Experimental Details}. Additionally, we present tree (spatial only) embedding experiments comparing neural snowflakes to neural~(quasi-)metrics in NSTs, as well as hyperbolic neural networks~(HNNs), in Appendix~\ref{Tree Embedding}.

\vspace{-15pt}
\section{Conclusion}

\vspace{-10pt}
We have introduced the concept of neural spacetimes, a computational framework utilizing neural networks to construct spacetime geometries with multiple time dimensions for DAG representation learning. We decouple our representation into a product manifold of space, equipped with a quasi-metric, and time, which captures causality via a partial order. We propose techniques to build, optimize, and stabilize artificial neural networks with fractalesque activation functions, ensuring the learnability of valid geometries. Our main theoretical contributions include a global embeddability guarantee for posets into neural spacetimes, with embeddings being causal in time and asymptotically isometric in space. We demonstrate the efficacy of our approach through experiments on synthetic metric DAG embedding datasets and real-world directed graphs, showcasing its superiority over existing spacetime representation learning methods with fixed closed-form geometries.

\textbf{Limitations.} Our guarantees are restricted to embeddings for DAGs rather than arbitrary digraphs. We also find that, from an optimization perspective, it is easier to optimize the geometry locally rather than globally. This limitation is computational; as the number of nodes in the DAG grows, it becomes increasingly challenging to compute the ground truth shortest-path geodesic distance and the global causal structure between nodes, which are used as ground truth for training. However, all our theoretical guarantees apply both globally and locally. Additionally, the local transitivity of causal connectivity will implicitly hold globally, even when performing local optimization only.

\section*{Ethics Statement}

We believe that the potential societal consequences are minimal and do not require specific highlighting at this time. We commit to ongoing awareness of the broader implications of our work and will remain vigilant in assessing any future societal impacts that may emerge as our theoretical framework is applied in practical settings.

\section*{Acknowledgments}

The authors thank James Lucas and the anonymous reviewers for helpful feedback on early versions of this manuscript. M. Bronstein acknowledges this research is partially supported by the EPSRC Turing AI World-Leading Research Fellowship No. EP/X040062/1 and the EPSRC AI Hub on Mathematical Foundations of Intelligence: An ``Erlangen Programme'' for
AI No. EP/Y028872/1. X. Dong acknowledges support from the Oxford-Man Institute of Quantitative Finance and the EPSRC (EP/T023333/1). A.\ Kratsios acknowledges financial support from an NSERC Discovery Grant No.\ RGPIN-2023-04482 and No.\ DGECR-2023-00230.

\bibliography{iclr2024_conference}

\begin{thebibliography}{94}
\providecommand{\natexlab}[1]{#1}
\providecommand{\url}[1]{\texttt{#1}}
\expandafter\ifx\csname urlstyle\endcsname\relax
  \providecommand{\doi}[1]{doi: #1}\else
  \providecommand{\doi}{doi: \begingroup \urlstyle{rm}\Url}\fi

\bibitem[Abraham et~al.(2007)Abraham, Bartal, and Neiman]{abraham2007local}
Ittai Abraham, Yair Bartal, and Ofer Neiman.
\newblock Local embeddings of metric spaces.
\newblock In \emph{Proceedings of the thirty-ninth annual ACM Symposium on Theory of Computing}, pp.\  631--640, 2007.

\bibitem[Abraham et~al.(2022)Abraham, Filtser, Gupta, and Neiman]{abraham2022metric}
Ittai Abraham, Arnold Filtser, Anupam Gupta, and Ofer Neiman.
\newblock Metric embedding via shortest path decompositions.
\newblock \emph{SIAM Journal on Computing}, 51\penalty0 (2):\penalty0 290--314, 2022.

\bibitem[Acciaio et~al.(2020)Acciaio, Backhoff-Veraguas, and Zalashko]{acciaio2020causal}
Beatrice Acciaio, Julio Backhoff-Veraguas, and Anastasiia Zalashko.
\newblock Causal optimal transport and its links to enlargement of filtrations and continuous-time stochastic optimization.
\newblock \emph{Stochastic Processes and their Applications}, 130\penalty0 (5):\penalty0 2918--2953, 2020.

\bibitem[Acciaio et~al.(2024)Acciaio, Kratsios, and Pammer]{acciaio2024designing}
Beatrice Acciaio, Anastasis Kratsios, and Gudmund Pammer.
\newblock Designing universal causal deep learning models: The geometric (hyper) transformer.
\newblock \emph{Mathematical Finance}, 34\penalty0 (2):\penalty0 671--735, 2024.

\bibitem[Amit et~al.(2022)Amit, Epstein, Moran, and Meir]{amit2022integral}
Ron Amit, Baruch Epstein, Shay Moran, and Ron Meir.
\newblock Integral probability metrics pac-bayes bounds.
\newblock \emph{Advances in Neural Information Processing Systems}, 35:\penalty0 3123--3136, 2022.

\bibitem[Andoni et~al.(2018)Andoni, Naor, and Neiman]{andoni2018snowflake}
Alexandr Andoni, Assaf Naor, and Ofer Neiman.
\newblock Snowflake universality of wasserstein spaces.
\newblock \emph{Ann. Sci. {\'E}c. Norm. Sup{\'e}r.(4)}, 51\penalty0 (3):\penalty0 657--700, 2018.

\bibitem[Backhoff-Veraguas et~al.(2020)Backhoff-Veraguas, Bartl, Beiglb{\"o}ck, and Eder]{backhoff2020all}
Julio Backhoff-Veraguas, Daniel Bartl, Mathias Beiglb{\"o}ck, and Manu Eder.
\newblock All adapted topologies are equal.
\newblock \emph{Probability Theory and Related Fields}, 178:\penalty0 1125--1172, 2020.

\bibitem[Baker et~al.(1970)Baker, Fishburn, and Roberts]{Baker_FishburnRoberts_LowerBound_1970_PosetDim2}
Kirby~A. Baker, Peter~C. Fishburn, and Fred~S. Roberts.
\newblock A new characterization of partial orders of dimension two.
\newblock \emph{Ann. New York Acad. Sci.}, 175:\penalty0 23--24, 1970.
\newblock ISSN 0077-8923,1749-6632.

\bibitem[Bartl et~al.(2021)Bartl, Beiglb{\"o}ck, and Pammer]{bartl2021wasserstein}
Daniel Bartl, Mathias Beiglb{\"o}ck, and Gudmund Pammer.
\newblock The wasserstein space of stochastic processes.
\newblock \emph{arXiv preprint arXiv:2104.14245}, 2021.

\bibitem[Beem et~al.(1996)Beem, Ehrlich, and Easley]{beem1996global}
John~K Beem, Paul~E Ehrlich, and Kevin~L Easley.
\newblock \emph{Global lorentzian geometry}.
\newblock Routledge, 1996.

\bibitem[Benincasa \& Dowker(2010)Benincasa and Dowker]{benincasa2010scalar}
Dionigi~MT Benincasa and Fay Dowker.
\newblock Scalar curvature of a causal set.
\newblock \emph{Physical review letters}, 104\penalty0 (18):\penalty0 181301, 2010.

\bibitem[Benitez et~al.(2023)Benitez, Furuya, Faucher, Kratsios, Tricoche, and de~Hoop]{benitez2023out}
J~Benitez, Takashi Furuya, Florian Faucher, Anastasis Kratsios, Xavier Tricoche, and Maarten~V de~Hoop.
\newblock Out-of-distributional risk bounds for neural operators with applications to the helmholtz equation.
\newblock \emph{arXiv preprint arXiv:2301.11509}, 2023.

\bibitem[Borde \& Kratsios(2023)Borde and Kratsios]{de2023neural}
Haitz S{\'a}ez de~Oc{\'a}riz Borde and Anastasis Kratsios.
\newblock Neural snowflakes: Universal latent graph inference via trainable latent geometries.
\newblock In \emph{The Twelfth International Conference on Learning Representations}, 2023.

\bibitem[Borde et~al.(2023{\natexlab{a}})Borde, Arroyo, L{\'o}pez, Posner, and Dong]{borde2023neural}
Haitz S{\'a}ez de~Oc{\'a}riz Borde, Alvaro Arroyo, Ismael~Morales L{\'o}pez, Ingmar Posner, and Xiaowen Dong.
\newblock Neural latent geometry search: Product manifold inference via gromov-hausdorff-informed bayesian optimization.
\newblock In \emph{Thirty-seventh Conference on Neural Information Processing Systems}, 2023{\natexlab{a}}.
\newblock URL \url{https://openreview.net/forum?id=Gij638d76O}.

\bibitem[Borde et~al.(2023{\natexlab{b}})Borde, Kazi, Barbero, and Lio]{borde2023latent}
Haitz S{\'a}ez de~Oc{\'a}riz Borde, Anees Kazi, Federico Barbero, and Pietro Lio.
\newblock Latent graph inference using product manifolds.
\newblock In \emph{The Eleventh International Conference on Learning Representations}, 2023{\natexlab{b}}.
\newblock URL \url{https://openreview.net/forum?id=JLR_B7n_Wqr}.

\bibitem[Bourgain(1986)]{Bourgain_1986_IJM__EmbeddingMetricSpacesSuperreflexivBanachSpaces}
J.~Bourgain.
\newblock The metrical interpretation of superreflexivity in {B}anach spaces.
\newblock \emph{Israel J. Math.}, 56\penalty0 (2):\penalty0 222--230, 1986.
\newblock ISSN 0021-2172.
\newblock \doi{10.1007/BF02766125}.
\newblock URL \url{https://doi.org/10.1007/BF02766125}.

\bibitem[Burtscher \& Garcia-Heveling(2024)Burtscher and Garcia-Heveling]{burtscher2024time}
Annegret Burtscher and Leonardo Garcia-Heveling.
\newblock Time functions on lorentzian length spaces.
\newblock In \emph{Annales Henri Poincar{\'e}}, pp.\  1--40. Springer, 2024.

\bibitem[Chandler(2019)]{chandler2019thin}
Alex Chandler.
\newblock \emph{On {T}hin {P}osets and {C}ategorification}.
\newblock ProQuest LLC, Ann Arbor, MI, 2019.
\newblock ISBN 978-1392-60821-0.
\newblock Thesis (Ph.D.)--North Carolina State University.

\bibitem[Charikar et~al.(2010)Charikar, Makarychev, and Makarychev]{charikar2010local}
Moses Charikar, Konstantin Makarychev, and Yury Makarychev.
\newblock Local global tradeoffs in metric embeddings.
\newblock \emph{SIAM Journal on Computing}, 39\penalty0 (6):\penalty0 2487--2512, 2010.

\bibitem[Davey \& Priestley(2002)Davey and Priestley]{davey2002introduction}
B.~A. Davey and H.~A. Priestley.
\newblock \emph{Introduction to lattices and order}.
\newblock Cambridge University Press, New York, second edition, 2002.
\newblock ISBN 0-521-78451-4.
\newblock \doi{10.1017/CBO9780511809088}.
\newblock URL \url{https://doi.org/10.1017/CBO9780511809088}.

\bibitem[Defferrard et~al.(2016)Defferrard, Bresson, and Vandergheynst]{Defferrard2016ConvolutionalNN}
Micha{\"e}l Defferrard, Xavier Bresson, and Pierre Vandergheynst.
\newblock Convolutional neural networks on graphs with fast localized spectral filtering.
\newblock In \emph{Neural Information Processing Systems}, 2016.
\newblock URL \url{https://api.semanticscholar.org/CorpusID:3016223}.

\bibitem[Deleu et~al.(2022)Deleu, G{\'o}is, Emezue, Rankawat, Lacoste-Julien, Bauer, and Bengio]{deleu2022bayesian}
Tristan Deleu, Ant{\'o}nio G{\'o}is, Chris~Chinenye Emezue, Mansi Rankawat, Simon Lacoste-Julien, Stefan Bauer, and Yoshua Bengio.
\newblock Bayesian structure learning with generative flow networks.
\newblock In \emph{The 38th Conference on Uncertainty in Artificial Intelligence}, 2022.
\newblock URL \url{https://openreview.net/forum?id=HElfed8j9g9}.

\bibitem[Dilworth(1950)]{DilworthTheorem_1950_AnnMath}
R.~P. Dilworth.
\newblock A decomposition theorem for partially ordered sets.
\newblock \emph{Ann. of Math. (2)}, 51:\penalty0 161--166, 1950.
\newblock ISSN 0003-486X.
\newblock \doi{10.2307/1969503}.
\newblock URL \url{https://doi.org/10.2307/1969503}.

\bibitem[Dubhashi \& Panconesi(2009)Dubhashi and Panconesi]{DubhashiPanconesi_2009_Concentration}
Devdatt~P. Dubhashi and Alessandro Panconesi.
\newblock \emph{Concentration of measure for the analysis of randomized algorithms}.
\newblock Cambridge University Press, Cambridge, 2009.
\newblock ISBN 978-0-521-88427-3.
\newblock \doi{10.1017/CBO9780511581274}.
\newblock URL \url{https://doi-org.libaccess.lib.mcmaster.ca/10.1017/CBO9780511581274}.

\bibitem[Dyubina \& Polterovich(2001)Dyubina and Polterovich]{Dyubina_Polterovich_2001_BulLondMathSoc__ExplicitUniversalRTrees}
Anna Dyubina and Iosif Polterovich.
\newblock Explicit constructions of universal {$\mathbb{R}$}-trees and asymptotic geometry of hyperbolic spaces.
\newblock \emph{Bull. London Math. Soc.}, 33\penalty0 (6):\penalty0 727--734, 2001.
\newblock ISSN 0024-6093,1469-2120.
\newblock \doi{10.1112/S002460930100844X}.
\newblock URL \url{https://doi.org/10.1112/S002460930100844X}.

\bibitem[Eckstein \& Cheridito(2023)Eckstein and Cheridito]{eckstein2023optimal}
Stephan Eckstein and Patrick Cheridito.
\newblock Optimal transport and wasserstein distances for causal models.
\newblock \emph{arXiv preprint arXiv:2303.14085}, 2023.

\bibitem[Eckstein \& Pammer(2024)Eckstein and Pammer]{eckstein2024computational}
Stephan Eckstein and Gudmund Pammer.
\newblock Computational methods for adapted optimal transport.
\newblock \emph{The Annals of Applied Probability}, 34\penalty0 (1A):\penalty0 675--713, 2024.

\bibitem[Elkin \& Neiman(2021)Elkin and Neiman]{elkin2021near}
Michael Elkin and Ofer Neiman.
\newblock Near isometric terminal embeddings for doubling metrics.
\newblock \emph{Algorithmica}, 83\penalty0 (11):\penalty0 3319--3337, 2021.

\bibitem[Eskenazis et~al.(2019)Eskenazis, Mendel, and Naor]{eskenazis2019nonpositive}
Alexandros Eskenazis, Manor Mendel, and Assaf Naor.
\newblock Nonpositive curvature is not coarsely universal.
\newblock \emph{Inventiones mathematicae}, 217:\penalty0 833--886, 2019.

\bibitem[Felsner et~al.(2003)Felsner, Raghavan, and Spinrad]{FelsnerRaghavanSpinrad_RecPosetWidth_2003_Order}
Stefan Felsner, Vijay Raghavan, and Jeremy Spinrad.
\newblock Recognition algorithms for orders of small width and graphs of small {D}ilworth number.
\newblock \emph{Order}, 20\penalty0 (4):\penalty0 351--364, 2003.
\newblock ISSN 0167-8094,1572-9273.
\newblock \doi{10.1023/B:ORDE.0000034609.99940.fb}.
\newblock URL \url{https://doi.org/10.1023/B:ORDE.0000034609.99940.fb}.

\bibitem[Filtser(2020)]{filtser2020face}
Arnold Filtser.
\newblock A face cover perspective to $\ell^1$ embeddings of planar graphs.
\newblock In \emph{Proceedings of the Fourteenth Annual ACM-SIAM Symposium on Discrete Algorithms}, pp.\  1945--1954. SIAM, 2020.

\bibitem[Floyd(1962)]{floyd}
Robert~W. Floyd.
\newblock Algorithm 97: Shortest path.
\newblock \emph{Commun. ACM}, 5\penalty0 (6):\penalty0 345, jun 1962.
\newblock ISSN 0001-0782.
\newblock \doi{10.1145/367766.368168}.
\newblock URL \url{https://doi.org/10.1145/367766.368168}.

\bibitem[Fong \& Spivak(2019)Fong and Spivak]{FongSpivak_AppliedcAtes_2019}
Brendan Fong and David~I. Spivak.
\newblock \emph{An invitation to applied category theory}.
\newblock Cambridge University Press, Cambridge, 2019.
\newblock ISBN 978-1-108-71182-1; 978-1-108-48229-5.
\newblock \doi{10.1017/9781108668804}.
\newblock URL \url{https://doi.org/10.1017/9781108668804}.
\newblock Seven sketches in compositionality.

\bibitem[Ganea et~al.(2018)Ganea, Bécigneul, and Hofmann]{ganea2018hyperbolic}
Octavian-Eugen Ganea, Gary Bécigneul, and Thomas Hofmann.
\newblock Hyperbolic neural networks, 2018.

\bibitem[Geroch(1970)]{geroch1970domain}
Robert Geroch.
\newblock Domain of dependence.
\newblock \emph{Journal of Mathematical Physics}, 11\penalty0 (2):\penalty0 437--449, 1970.

\bibitem[Gershgorin(1931)]{gershgorin_1931_uber}
S~Gershgorin.
\newblock Uber die abgrenzung der eigenwerte einer matrix.
\newblock \emph{lzv. Akad. Nauk. USSR. Otd. Fiz-Mat. Nauk}, 7:\penalty0 749--754, 1931.

\bibitem[Giovanni et~al.(2022{\natexlab{a}})Giovanni, Luise, and Bronstein]{digiovanni2022heterogeneous}
Francesco~Di Giovanni, Giulia Luise, and Michael Bronstein.
\newblock Heterogeneous manifolds for curvature-aware graph embedding, 2022{\natexlab{a}}.

\bibitem[Giovanni et~al.(2022{\natexlab{b}})Giovanni, Luise, and Bronstein]{giovanni2022heterogeneous}
Francesco~Di Giovanni, Giulia Luise, and Michael~M. Bronstein.
\newblock Heterogeneous manifolds for curvature-aware graph embedding.
\newblock In \emph{ICLR 2022 Workshop on Geometrical and Topological Representation Learning}, 2022{\natexlab{b}}.

\bibitem[Gottlieb \& Krauthgamer(2013)Gottlieb and Krauthgamer]{gottlieb2013proximity}
Lee-Ad Gottlieb and Robert Krauthgamer.
\newblock Proximity algorithms for nearly doubling spaces.
\newblock \emph{SIAM Journal on Discrete Mathematics}, 27\penalty0 (4):\penalty0 1759--1769, 2013.

\bibitem[Gozlan(2010)]{GozlanPOintcareDimFreeConcentrationNonEuc_AnnHP_2010}
Nathael Gozlan.
\newblock Poincar\'{e} inequalities and dimension free concentration of measure.
\newblock \emph{Ann. Inst. Henri Poincar\'{e} Probab. Stat.}, 46\penalty0 (3):\penalty0 708--739, 2010.
\newblock ISSN 0246-0203,1778-7017.

\bibitem[Guella et~al.(2016)Guella, Menegatto, and Peron]{GuellaMenegarttoPeron_SpheresSchoenberg}
J.~C. Guella, V.~A. Menegatto, and A.~P. Peron.
\newblock An extension of a theorem of {S}choenberg to products of spheres.
\newblock \emph{Banach J. Math. Anal.}, 10\penalty0 (4):\penalty0 671--685, 2016.

\bibitem[Gunasingam \& Wong(2024)Gunasingam and Wong]{gunasingam2024adapted}
Madhu Gunasingam and Ting-Kam~Leonard Wong.
\newblock Adapted optimal transport between gaussian processes in discrete time.
\newblock \emph{arXiv preprint arXiv:2404.06625}, 2024.

\bibitem[Gupta(2000)]{Gupta_2000_ACM__QuantitiativefinitedimembeddingsTrees}
A.~Gupta.
\newblock Embedding tree metrics into low-dimensional {E}uclidean spaces.
\newblock \emph{Discrete Comput. Geom.}, 24\penalty0 (1):\penalty0 105--116, 2000.

\bibitem[Gupta \& Hambrusch(1992)Gupta and Hambrusch]{gupta1992load}
Ajay~K Gupta and Susanne~E Hambrusch.
\newblock Load balanced tree embeddings.
\newblock \emph{Parallel computing}, 18\penalty0 (6):\penalty0 595--614, 1992.

\bibitem[Gupta(1999)]{gupta1999embedding}
Anupam Gupta.
\newblock Embedding tree metrics into low dimensional euclidean spaces.
\newblock In \emph{Proceedings of the thirty-first annual ACM symposium on Theory of computing}, pp.\  694--700, 1999.

\bibitem[Gupta et~al.(2004)Gupta, Newman, Rabinovich, and Sinclair]{gupta2004cuts}
Anupam Gupta, Ilan Newman, Yuri Rabinovich, and Alistair Sinclair.
\newblock Cuts, trees and l1-embeddings of graphs.
\newblock \emph{Combinatorica}, 24\penalty0 (2):\penalty0 233--269, 2004.

\bibitem[Har-Peled \& Mendel(2005)Har-Peled and Mendel]{har2005fast}
Sariel Har-Peled and Manor Mendel.
\newblock Fast construction of nets in low dimensional metrics, and their applications.
\newblock In \emph{Proceedings of the twenty-first annual symposium on Computational geometry}, pp.\  150--158, 2005.

\bibitem[Henson(2009)]{henson2009causal}
Joe Henson.
\newblock The causal set approach to quantum gravity.
\newblock \emph{Approaches to quantum gravity: Toward a new understanding of space, time and matter}, pp.\  393--413, 2009.

\bibitem[Hou et~al.(2023)Hou, Kassraie, Kratsios, Krause, and Rothfuss]{hou2023instance}
Songyan Hou, Parnian Kassraie, Anastasis Kratsios, Andreas Krause, and Jonas Rothfuss.
\newblock Instance-dependent generalization bounds via optimal transport.
\newblock \emph{Journal of Machine Learning Research}, 24:\penalty0 1--50, 2023.

\bibitem[Hu et~al.(2021)Hu, Fey, Zitnik, Dong, Ren, Liu, Catasta, and Leskovec]{hu2021opengraphbenchmarkdatasets}
Weihua Hu, Matthias Fey, Marinka Zitnik, Yuxiao Dong, Hongyu Ren, Bowen Liu, Michele Catasta, and Jure Leskovec.
\newblock Open graph benchmark: Datasets for machine learning on graphs, 2021.
\newblock URL \url{https://arxiv.org/abs/2005.00687}.

\bibitem[Hurd et~al.(2016)]{hurd2016contagion}
Thomas~R Hurd et~al.
\newblock \emph{Contagion!: Systemic Risk in Financial Networks}, volume~42.
\newblock Springer, 2016.

\bibitem[Kim(1968)]{kim1968pseudo}
Yong-Woon Kim.
\newblock Pseudo quasi metric spaces.
\newblock \emph{Proceedings of the Japan Academy}, 44\penalty0 (10):\penalty0 1009--1012, 1968.

\bibitem[Kipf \& Welling(2017)Kipf and Welling]{kipf2017semisupervised}
Thomas~N. Kipf and Max Welling.
\newblock Semi-supervised classification with graph convolutional networks.
\newblock In \emph{International Conference on Learning Representations}, 2017.
\newblock URL \url{https://openreview.net/forum?id=SJU4ayYgl}.

\bibitem[Kratsios et~al.(2023{\natexlab{a}})Kratsios, Debarnot, and Dokmani\'{c}]{kratsios2023small}
Anastasis Kratsios, Valentin Debarnot, and Ivan Dokmani\'{c}.
\newblock Small transformers compute universal metric embeddings.
\newblock \emph{J. Mach. Learn. Res.}, 24:\penalty0 Paper No. [170], 48, 2023{\natexlab{a}}.
\newblock ISSN 1532-4435,1533-7928.

\bibitem[Kratsios et~al.(2023{\natexlab{b}})Kratsios, Hong, and Borde]{kratsios2023capacity}
Anastasis Kratsios, Ruiyang Hong, and Haitz S{\'a}ez de~Oc{\'a}riz Borde.
\newblock Capacity bounds for hyperbolic neural network representations of latent tree structures.
\newblock \emph{arXiv preprint arXiv:2308.09250}, 2023{\natexlab{b}}.

\bibitem[Kratsios et~al.(2024)Kratsios, Neuman, and Pammer]{kratsios2024digital}
Anastasis Kratsios, A~Martina Neuman, and Gudmund Pammer.
\newblock Tighter generalization bounds on digital computers via discrete optimal transport.
\newblock \emph{arXiv preprint arXiv:2402.05576}, 2024.

\bibitem[Krauthgamer et~al.(2005)Krauthgamer, Lee, Mendel, and Naor]{KrauthgameLeeNaor2004}
R.~Krauthgamer, J.~R. Lee, M.~Mendel, and A.~Naor.
\newblock Measured descent: a new embedding method for finite metrics.
\newblock \emph{Geom. Funct. Anal.}, 15\penalty0 (4):\penalty0 839--858, 2005.

\bibitem[Krauthgamer et~al.(2004)Krauthgamer, Lee, Mendel, and Naor]{krauthgamer2004measured}
Robert Krauthgamer, James~R Lee, Manor Mendel, and Assaf Naor.
\newblock Measured descent: A new embedding method for finite metrics.
\newblock In \emph{45th Annual IEEE Symposium on Foundations of Computer Science}, pp.\  434--443. IEEE, 2004.

\bibitem[Kronheimer \& Penrose(1967)Kronheimer and Penrose]{kronheimer1967structure}
Erwin~H Kronheimer and Roger Penrose.
\newblock On the structure of causal spaces.
\newblock In \emph{Mathematical Proceedings of the Cambridge Philosophical Society}, volume 63-2, pp.\  481--501. Cambridge University Press, 1967.

\bibitem[Kr{\v{s}}ek \& Pammer(2024)Kr{\v{s}}ek and Pammer]{krvsek2024general}
Daniel Kr{\v{s}}ek and Gudmund Pammer.
\newblock General duality and dual attainment for adapted transport.
\newblock \emph{arXiv preprint arXiv:2401.11958}, 2024.

\bibitem[K{\~u}nzi(1992)]{kunzi1992complete}
H~K{\~u}nzi.
\newblock Complete quasi-pseudo-metric spaces.
\newblock \emph{Acta Mathematica Hungarica}, 59\penalty0 (1-2):\penalty0 121--146, 1992.

\bibitem[Law(2021)]{law2021ultrahyperbolic}
Marc~T Law.
\newblock Ultrahyperbolic neural networks.
\newblock \emph{Advances in Neural Information Processing Systems}, 2021.

\bibitem[Law \& Lucas(2023)Law and Lucas]{law2023spacetime}
Marc~T. Law and James Lucas.
\newblock Spacetime representation learning.
\newblock In \emph{The Eleventh International Conference on Learning Representations}, 2023.
\newblock URL \url{https://openreview.net/forum?id=qV_M_rhYajc}.

\bibitem[Law \& Stam(2020)Law and Stam]{law2020ultrahyperbolic}
Marc~T Law and Jos Stam.
\newblock Ultrahyperbolic representation learning.
\newblock \emph{Advances in Neural Information Processing Systems}, 2020.

\bibitem[Le~Donne et~al.(2018)Le~Donne, Rajala, and Walsberg]{LeDonneetAlIsometricEmbedding}
Enrico Le~Donne, Tapio Rajala, and Erik Walsberg.
\newblock Isometric embeddings of snowflakes into finite-dimensional {B}anach spaces.
\newblock \emph{Proc. Amer. Math. Soc.}, 146\penalty0 (2):\penalty0 685--693, 2018.
\newblock ISSN 0002-9939,1088-6826.
\newblock \doi{10.1090/proc/13778}.
\newblock URL \url{https://doi.org/10.1090/proc/13778}.

\bibitem[Lim et~al.(2024)Lim, Maron, Law, Lorraine, and Lucas]{lim2024graph}
Derek Lim, Haggai Maron, Marc~T. Law, Jonathan Lorraine, and James Lucas.
\newblock Graph metanetworks for processing diverse neural architectures.
\newblock In \emph{The Twelfth International Conference on Learning Representations}, 2024.
\newblock URL \url{https://openreview.net/forum?id=ijK5hyxs0n}.

\bibitem[Lu et~al.(2023)Lu, de~Oc{\'a}riz~Borde, and Lio]{lu2023ames}
Yuan Lu, Haitz~S{\'a}ez de~Oc{\'a}riz~Borde, and Pietro Lio.
\newblock {AMES}: A differentiable embedding space selection framework for latent graph inference.
\newblock In \emph{NeurIPS 2023 Workshop on Symmetry and Geometry in Neural Representations}, 2023.
\newblock URL \url{https://openreview.net/forum?id=tIrGgIn8jr}.

\bibitem[Maehara(2013)]{maehara2013euclidean}
Hiroshi Maehara.
\newblock Euclidean embeddings of finite metric spaces.
\newblock \emph{Discrete Mathematics}, 313\penalty0 (23):\penalty0 2848--2856, 2013.

\bibitem[Marbach et~al.(2012)Marbach, Costello, Küffner, Vega, Prill, Camacho, Allison, Aderhold, Bonneau, Chen, Collins, Cordero, Crane, Dondelinger, Drton, Esposito, Foygel, de~la Fuente, Gertheiss, and Zimmer]{genes}
Daniel Marbach, James Costello, Robert Küffner, Nicole Vega, Robert Prill, Diogo Camacho, Kyle Allison, Andrej Aderhold, Richard Bonneau, Yukun Chen, James Collins, Francesca Cordero, Martin Crane, Frank Dondelinger, Mathias Drton, Roberto Esposito, Rina Foygel, Alberto de~la Fuente, Jan Gertheiss, and Ralf Zimmer.
\newblock Wisdom of crowds for robust gene network inference.
\newblock \emph{Nature Methods}, 9:\penalty0 796--804, 07 2012.
\newblock \doi{10.1038/nmeth.2016}.

\bibitem[Matou\v{s}ek(1999)]{Matouvek_1999_IJM__EmbeddingsTreesUCBanSpaces}
Ji\v{r}\'{\i} Matou\v{s}ek.
\newblock On embedding trees into uniformly convex {B}anach spaces.
\newblock \emph{Israel J. Math.}, 114:\penalty0 221--237, 1999.
\newblock ISSN 0021-2172,1565-8511.
\newblock \doi{10.1007/BF02785579}.
\newblock URL \url{https://doi.org/10.1007/BF02785579}.

\bibitem[Naor \& Neiman(2012)Naor and Neiman]{naor2012assouad}
Assaf Naor and Ofer Neiman.
\newblock Assouad's theorem with dimension independent of the snowflaking.
\newblock \emph{Rev. Mat. Iberoam.}, 28\penalty0 (4):\penalty0 1123--1142, 2012.
\newblock ISSN 0213-2230,2235-0616.
\newblock \doi{10.4171/RMI/706}.
\newblock URL \url{https://doi.org/10.4171/RMI/706}.

\bibitem[Neiman(2016)]{neiman2016low}
Ofer Neiman.
\newblock Low dimensional embeddings of doubling metrics.
\newblock \emph{Theory of Computing Systems}, 58:\penalty0 133--152, 2016.

\bibitem[Nowak(2005)]{Nowak_2005}
Piotr~W. Nowak.
\newblock Coarse embeddings of metric spaces into {B}anach spaces.
\newblock \emph{Proc. Amer. Math. Soc.}, 133\penalty0 (9):\penalty0 2589--2596, 2005.
\newblock ISSN 0002-9939,1088-6826.

\bibitem[Oates et~al.(2016)Oates, Smith, and Mukherjee]{oates2016estimating}
Chris~J Oates, Jim~Q Smith, and Sach Mukherjee.
\newblock Estimating causal structure using conditional dag models.
\newblock \emph{Journal of Machine Learning Research}, 17\penalty0 (54):\penalty0 1--23, 2016.

\bibitem[Ostrovska \& Ostrovskii(2019)Ostrovska and Ostrovskii]{ostrovska2019embeddings}
Sofiya Ostrovska and Mikhail~I Ostrovskii.
\newblock On embeddings of locally finite metric spaces into lp.
\newblock \emph{Journal of Mathematical Analysis and Applications}, 474\penalty0 (1):\penalty0 666--673, 2019.

\bibitem[Rao(1999)]{rao1999small}
Satish Rao.
\newblock Small distortion and volume preserving embeddings for planar and {E}uclidean metrics.
\newblock In \emph{Proceedings of the {F}ifteenth {A}nnual {S}ymposium on {C}omputational {G}eometry ({M}iami {B}each, {FL}, 1999)}, pp.\  300--306. ACM, New York, 1999.
\newblock \doi{10.1145/304893.304983}.
\newblock URL \url{https://doi.org/10.1145/304893.304983}.

\bibitem[Reid(2003)]{reid2003manifold}
David~D Reid.
\newblock Manifold dimension of a causal set: Tests in conformally flat spacetimes.
\newblock \emph{Physical Review D}, 67\penalty0 (2):\penalty0 024034, 2003.

\bibitem[Robins et~al.(2007)Robins, Snijders, Wang, Handcock, and Pattison]{robins2007recent}
Garry Robins, Tom Snijders, Peng Wang, Mark Handcock, and Philippa Pattison.
\newblock Recent developments in exponential random graph (p*) models for social networks.
\newblock \emph{Social networks}, 29\penalty0 (2):\penalty0 192--215, 2007.

\bibitem[Rozemberczki et~al.(2021)Rozemberczki, Allen, and Sarkar]{musae}
Benedek Rozemberczki, Carl Allen, and Rik Sarkar.
\newblock {Multi-Scale Attributed Node Embedding}.
\newblock \emph{Journal of Complex Networks}, 9\penalty0 (2), 2021.

\bibitem[Sarkar(2011)]{sarkar2011low}
Rik Sarkar.
\newblock Low distortion delaunay embedding of trees in hyperbolic plane.
\newblock In \emph{International symposium on graph drawing}, pp.\  355--366. Springer, 2011.

\bibitem[Scarselli et~al.(2009)Scarselli, Gori, Tsoi, Hagenbuchner, and Monfardini]{gnns}
Franco Scarselli, Marco Gori, Ah~Chung Tsoi, Markus Hagenbuchner, and Gabriele Monfardini.
\newblock The graph neural network model.
\newblock \emph{IEEE Transactions on Neural Networks}, 20\penalty0 (1):\penalty0 61--80, 2009.
\newblock \doi{10.1109/TNN.2008.2005605}.

\bibitem[Schoenberg(1942)]{SchoenbergSpheres_Duke_1942}
I.~J. Schoenberg.
\newblock Positive definite functions on spheres.
\newblock \emph{Duke Math. J.}, 9:\penalty0 96--108, 1942.

\bibitem[Semmes(1996)]{semmes1996nonexistence}
Stephen Semmes.
\newblock On the nonexistence of bilipschitz parameterizations and geometric problems about $a-\infty$-weights.
\newblock \emph{Revista Matematica Iberoamericana}, 12\penalty0 (2):\penalty0 337--410, 1996.

\bibitem[Shimizu et~al.(2021)Shimizu, Mukuta, and Harada]{shimizu2021hyperbolic}
Ryohei Shimizu, YUSUKE Mukuta, and Tatsuya Harada.
\newblock Hyperbolic neural networks++.
\newblock In \emph{International Conference on Learning Representations}, 2021.
\newblock URL \url{https://openreview.net/forum?id=Ec85b0tUwbA}.

\bibitem[Sim et~al.(2021)Sim, Wiatrak, Brayne, Creed, and Paliwal]{Sim2021DirectedGE}
Aaron Sim, Maciej Wiatrak, Angus Brayne, P{\'a}id{\'i} Creed, and Saee Paliwal.
\newblock Directed graph embeddings in pseudo-riemannian manifolds.
\newblock In \emph{International Conference on Machine Learning}, 2021.
\newblock URL \url{https://api.semanticscholar.org/CorpusID:235446829}.

\bibitem[Sonthalia \& Gilbert(2020)Sonthalia and Gilbert]{sonthalia2020tree}
Rishi Sonthalia and Anna Gilbert.
\newblock Tree! i am no tree! i am a low dimensional hyperbolic embedding.
\newblock \emph{Advances in Neural Information Processing Systems}, 33:\penalty0 845--856, 2020.

\bibitem[Sorkin(1991)]{sorkin1991spacetime}
Rafael~D Sorkin.
\newblock Spacetime and causal sets.
\newblock \emph{Relativity and gravitation: Classical and quantum}, pp.\  150--173, 1991.

\bibitem[Textor et~al.(2016)Textor, Van~der Zander, Gilthorpe, Li{\'s}kiewicz, and Ellison]{textor2016robust}
Johannes Textor, Benito Van~der Zander, Mark~S Gilthorpe, Maciej Li{\'s}kiewicz, and George~TH Ellison.
\newblock Robust causal inference using directed acyclic graphs: the r package ‘dagitty’.
\newblock \emph{International journal of epidemiology}, 45\penalty0 (6):\penalty0 1887--1894, 2016.

\bibitem[Tyson \& Wu(2005)Tyson and Wu]{tyson2005characterizations}
Jeremy~T Tyson and Jang-Mei Wu.
\newblock Characterizations of snowflake metric spaces.
\newblock \emph{Annales Fennici Mathematici}, 30\penalty0 (2):\penalty0 313--336, 2005.

\bibitem[Veličković et~al.(2018)Veličković, Cucurull, Casanova, Romero, Liò, and Bengio]{veličković2018graph}
Petar Veličković, Guillem Cucurull, Arantxa Casanova, Adriana Romero, Pietro Liò, and Yoshua Bengio.
\newblock Graph attention networks.
\newblock In \emph{International Conference on Learning Representations}, 2018.
\newblock URL \url{https://openreview.net/forum?id=rJXMpikCZ}.

\bibitem[Xia(2009)]{xia2009geodesic}
Qinglan Xia.
\newblock The geodesic problem in quasimetric spaces.
\newblock \emph{J. Geom. Anal.}, 19\penalty0 (2):\penalty0 452--479, 2009.
\newblock ISSN 1050-6926.
\newblock \doi{10.1007/s12220-008-9065-4}.
\newblock URL \url{https://doi-org.libaccess.lib.mcmaster.ca/10.1007/s12220-008-9065-4}.

\bibitem[Xu et~al.(2020)Xu, Wenliang, Munn, and Acciaio]{xu2020cot}
Tianlin Xu, Li~Kevin Wenliang, Michael Munn, and Beatrice Acciaio.
\newblock Cot-gan: Generating sequential data via causal optimal transport.
\newblock \emph{Advances in neural information processing systems}, 33:\penalty0 8798--8809, 2020.

\bibitem[Zheng et~al.(2018)Zheng, Aragam, Ravikumar, and Xing]{zheng2018dags}
Xun Zheng, Bryon Aragam, Pradeep Ravikumar, and Eric~P. Xing.
\newblock {DAGs with NO TEARS: Continuous Optimization for Structure Learning}.
\newblock In \emph{Advances in Neural Information Processing Systems}, 2018.

\bibitem[Zhu et~al.(2021)Zhu, Rossi, Rao, Mai, Lipka, Ahmed, and Koutra]{zhu2021graph}
Jiong Zhu, Ryan~A Rossi, Anup Rao, Tung Mai, Nedim Lipka, Nesreen~K Ahmed, and Danai Koutra.
\newblock Graph neural networks with heterophily.
\newblock In \emph{Proceedings of the AAAI Conference on Artificial Intelligence}, volume~35, pp.\  11168--11176, 2021.

\end{thebibliography}
\bibliographystyle{iclr2025_conference}

\clearpage
\appendix

\section{Additional Background}
\label{a:AttitionalBackground}

In this section, we discuss relevant background on the dimension of a metric space and invertible matrices, which are used as part of our proofs in Appendix~\ref{s:Proofs}. We also review the original neural snowflake model, pseudo-Riemannian manifolds, Lorentzian manifolds, and spacetimes in physics. Additionally, we provide an extended discussion on past literature in spacetime representation learning in the context of machine learning, and compare neural spacetimes to hyperbolic spaces.

\subsection{Dimension and Size of a Metric Spaces}
\label{Dimension of a Metric Space}

The doubling constant in fractal geometry quantifies the complexity of a metric space by indicating how many smaller balls are needed to cover a larger ball. It is related to the Assouad dimension and other fractal dimensions. A small doubling constant suggests a low fractal dimension, while a large constant indicates a higher fractal dimension. This constant helps analyze self-similarity in fractals by providing a scale-invariant description of the space.

Our quantitative results use the following fractal notion of dimension/size of a metric space.

\begin{definition}[Doubling Constant]
\label{defn:doubling_constant}
The doubling constant \( K \geq 1 \) of a metric space \( (X, d) \) is the smallest integer \( k \geq 1 \) for which the following holds: for every center \( x \in X \) and each radius \( r > 0 \), every (open) ball \( B(x, r) \eqdef \{ u \in X : d(x, u) < r \} \) can be covered by at most \( k \) (open) balls of radius \( r/2 \); i.e., there exist \( x_1, \dots, x_k \in X \) such that \( B(x, r) \subseteq \bigcup_{i=1}^k B(x_i, r/2) \).
\end{definition}

By this definition, \( K \) is the smallest possible value of \( k \) that satisfies the covering condition for all \( x \in X \) and all \( r > 0 \). Note that every finite metric space, such as every weighted graph with the shortest path/geodesic distance, has a finite doubling constant. For example, if \( X \) is finite and has at least two points, then \( K \geq 2 \).

The size of a metric space can also be quantified by its diameter and its separation.  These respectively quantify the maximal and minimal distances between points therein.
\begin{definition}[Diameter]
\label{defn:diam}
Let $(X,d)$ be a metric space.  The diameter of $(X,d)$ is defined to be $\operatorname{diam}(X,d)\eqdef \operatorname{sup}_{x,\tilde{x}\in X}\, d(x,\tilde{x})$.
\end{definition}

\begin{definition}[Separation]
\label{defn:sep}
Let $(X,d)$ be a metric space.  If $X$ has at-least two points, then its separation is defined to be $\operatorname{sep}(X,d)\eqdef 
\operatorname{inf}_{x, \tilde{x}\in X,\, x\neq \tilde{x}}\, d(x,\tilde{x})$, otherwise $\operatorname{sep}(X,d)\eqdef 1$.
\end{definition}

\subsection{Invertible Positive Matrices}
\label{Invertible Positive Matrices}
We employ square matrices that are both invertible and have positive entries for constructing our neural quasi-metric. These matrices guarantee that our snowflake produces trainable quasi-metrics, allowing the implemented distance function to distinguish between points (i.e., identify when two vectors are equal).
\begin{definition}[Invertible Positive Matrices \( I^+_D \)]
\label{defn:IPD}
Let \( D \in \mathbb{N}_+ \). A \( D \times D \) matrix \( \mathsf{W} \) is invertible and positive if it can be represented as
\[
    \mathsf{W} = \lambda\, I_D + |\tilde{\mathsf{W}}|,
\]
where \( \lambda > 0 \), \( \tilde{\mathsf{W}} \) is an arbitrary \( D \times D \) matrix, and \( |\cdot| \) denotes entrywise absolute value applied to the matrix \( \tilde{\mathsf{W}} \). The set of all such \( D \times D \) invertible positive matrices $W$ is denoted by \( I^+_D \).
\end{definition}

In Appendix~\ref{s:Proofs}, we demonstrate that all matrices in \( I^+_D \) are invertible and preserve \( [0,\infty)^D \) under matrix multiplication.

\subsection{Neural Snowflakes}
\label{a:AdittionalBackground__ss:NSnowflakes}

Neural spacetimes build on the neural snowflake model for weighted undirected graph embedding. Here, we describe the original construction in~\cite{de2023neural}.

\textbf{Neural Snowflakes.} Neural snowflakes leverage quasi-metric spaces and implement a learnable adaptive geometry $\|\hat{x}_u-\hat{x}_v\|_f\eqdef f(\|\hat{x}_u-\hat{x}_v\|),$ where $\hat{x}_u,\hat{x}_v\in \mathbb{R}^D$. In particular, it is a map $f:[0,\infty)\rightarrow[0,\infty)$, with iterative representation

\begin{equation}
\label{eq:neural_snowflake}
\begin{aligned}
    f(t) & =  u_{J}^{1+|p|}
    \\
        u_{j} 
    & =  
                \mathsf{B}_j
                \,
                \psi_{a_j,b_j}(\mathsf{A}_ju_{j-1})
                \,
                \mathsf{C}_j
        &\qquad \mbox{ for } j=1,\dots,J
    \\
    u_{0} & = u
\end{aligned}
\end{equation}

where for $j=1,\dots,J$, $\mathsf{A}_j$ is a $\tilde{d}_{j}\times d_{j-1}$ matrix, $\mathsf{B}_j$ is a $d_{j}\times \tilde{d}_{j}$-matrix, and $\mathsf{C}_j$ is a $3\times 1$ matrix all of which have non-negative weights and at least one non-zero weight. Furthermore, for $j=1,\dots,J$, $0<a_j\le 1$, $0\le b_j\le 1$, $d_1,\dots,d_J\in \mathbb{N}_+$, and $d_0=1=d_{J}$. Also, $p\in \mathbb{R}$. Note that $p$ controls the relaxation of the triangle inequality ($C=2^{p}$) and that for $p\neq 0$, the neural snowflake is a quasi-metric. The input to the neural snowflake, denoted as $u$, represents the Euclidean distance between $\hat{x}_u$ and $\hat{x}_v$. This value serves as an intermediate representation within the neural snowflake, which is subsequently warped. In principle, $u$ need not be the Euclidean distance. In the original neural snowflake implementation, $\psi_{a,b}:\mathbb{R}\rightarrow[0,\infty)$ is a tensorized trainable activation function which sends any vector $u\in \mathbb{R}^K$, for some $K\in \mathbb{N}_+$, to the $K\times 3$ matrix $\psi_{a,b}(u)$ whose $k^{th}$ row is 
\begin{equation}
\label{eq:activation}
        \psi_{a,b}(u)_{k}
    \eqdef 
        \big(
                1
                -
                e^{-|u_k|}
            ,
        |u_k|^{a}
            ,
                \log(1+|u_k|)^{b}
        \big),
\end{equation}
with $0<a$ and $0\le b\le 1$ being trainable. Such a construction provides universal graph embedding guarantees for weighted undirected graphs~\citep{de2023neural}. In this work, we will generalize this learnable adaptive geometry formulation to directed graphs borrowing ideas from spacetimes and pseudo-Riemannian manifolds.

\subsection{Pseudo-Riemannian Manifolds and Lorentzian Spacetimes}
\label{a:AttitionalBackground__ss:LSpacetimes}

Next, we discuss pseudo-Riemannian manifolds and spacetimes, from which we draw inspiration for our neural spacetime construction (Section~\ref{Neural Spacetimes}). Our framework generalizes neural snowflakes and their weighted undirected graph embedding properties to also incorporate causal connectivity and graph directionality. In other words, they implement an isocausal trainable embedding.

\textbf{Pseudo-Riemannian manifolds} are a generalization of Riemannian manifolds where the metric tensor $g$ is not constrained to be positive definite. A $d$-dimensional pseudo-Riemannian manifold is a smooth manifold \( M \) equipped with a pseudo-Riemannian metric \( g \). Formally, a pseudo-Riemannian metric tensor is a smooth, symmetric, non-degenerate bilinear form (called a \emph{scalar product}) defined on the tangent bundle \( TM \) (disjoint union of all tangent spaces). The metric tensor at each point \( p \in M \) is denoted by \( g_p \). The metric assigns to each tangent space \( T_pM \) a signature \( (\pm, \pm, \dots, \pm) \). 
In particular, the tangent space $T_pM$ admits an orthonormal basis $\{\mathbf{e}_1,\mathbf{e}_2,\dots,\mathbf{e}_d\}$ that satisfies $\forall i \in \{ 1, \dots, d \},~g_p(\mathbf{e}_i,\mathbf{e}_i)=\pm 1$. 
If $g_p(\mathbf{e}_i,\mathbf{e}_i)= 1$ for all $i$, then $M$ is a Riemannian manifold, its metric tensor is positive definite, and its metric signature is $(+, +, \dots, +)$.

\textbf{Lorentzian manifolds} are a subset of pseudo-Riemannian manifolds with a specific metric signature, typically denoted as \( (-, +, +, \dots, +) \) or \( (+, -, -, \dots, -) \). In our case, we define distances between causally connected events to be positive, and hence adopt the second sign convention. 
With this convention, a nonzero tangent vector $\mathbf{v}$ of a Lorentzian manifold is called causal if it satisfies $g_p(\mathbf{v},\mathbf{v}) \geq 0$, and it is called chronological if $g_p(\mathbf{v},\mathbf{v}) > 0$. 

\textbf{Spacetimes.} The definition of a \emph{spacetime} in general relativity varies depending on the author. Spacetimes are a subset of Lorentzian manifolds often defined so that they are equipped with a \emph{causal structure} \citep{law2023spacetime}. Some spacetimes are called globally hyperbolic \citep{geroch1970domain}. Global hyperbolicity ensures that the spacetime possesses a well-defined causal structure, meaning there are no closed timelike curves (CTCs) and every inextendible causal curve intersects every Cauchy surface (a spacelike hypersurface that slices through the entire manifold exactly once). This property allows for the deterministic evolution of physical fields and particles forward in time. In mathematical terms, if there exists a smooth vector field $\mathcal{V}$ on $M$ such that $g_p(\mathcal{V}(p),\mathcal{V}(p)) > 0$ for all $p \in M$, then the Lorentzian manifold is called a \emph{spacetime} and it contains a \emph{causal structure}. 4-dimensional spacetimes with signature $(-, +, +, +)$ or $(+, -, -, -)$ are used to model the \textit{physical} world in general relativity. Spacetimes can exhibit various geometries, from flat \textit{Minkowski spacetime} (used in special relativity) to curved spacetimes that account for gravitational effects predicted by Einstein's field equations. In summary, a spacetime often refers to a Lorentzian manifold that additionally satisfies certain physical and causal conditions. 
We refer the reader to \citet{beem1996global} for details.

\textbf{Relation to Neural Spacetimes.} Neural Spacetimes are an adaptive geometry aiming to learn, in a differentiable manner, a suitable embedding in which nodes in a DAG can be represented as causally connected events. Although Neural Spacetimes (Section~\ref{Neural Spacetimes}) draw inspiration from the concepts above—separating space and time into different dimensions and capturing causal connectivity between events—strictly speaking, Neural Spacetimes utilize higher time dimensions ($T>1$) than classical spacetimes used to model the physical world. Moreover, they also incorporate a quasi-metric to model distance in the space dimensions, which relaxes the smoothness and other requirements inherent to pseudo-Riemannian metrics.

\subsection{Related Work on Spacetime Representation Learning}
\label{a:Related Work on Spacetime Representation Learning}

Related work in the machine learning literature that considers pseudo-Riemannian manifolds with multiple time dimensions includes \emph{ultrahyperbolic representations} \citep{law2020ultrahyperbolic,law2021ultrahyperbolic}. Ultrahyperbolic geometry is a generalization of the hyperbolic and elliptic geometries to pseudo-Riemannian manifolds. However, the works in \cite{law2020ultrahyperbolic,law2021ultrahyperbolic} mostly focus on the optimization of such representations, they only consider undirected graphs and do not consider partial ordering. 

Spacetime representation learning \citep{law2023spacetime} considers Lorentzian spacetimes (i.e. with one time dimension) to represent directed graphs. Following the formalism in Appendix \ref{a:AttitionalBackground__ss:LSpacetimes} with a metric signature $(+, -, \dots, -)$, and noting $\pointing{x_u}{x_v}$, the logarithmic map of $x_v$ at $x_u$, they consider that there exists an edge from $u$ to $v$ iff $g_{x_u}(\pointing{x_u}{x_v},t) > 0$ and $0 < g_{x_u}(\pointing{x_u}{x_v},\pointing{x_u}{x_v}) < \varepsilon$ where $t$ is an arbitrary tangent vector that defines the future direction and $\varepsilon > 0$ is a hyperparameter that defines the maximal length of the geodesic from $x_u$ to $x_v$ to draw a directed edge. 
The hyperparameter $\varepsilon$ allows them to avoid connecting $u$ to all the descendants of $v$.
Their work is limited to the optimization of embeddings (i.e. not neural networks). 
We draw inspiration from \cite{law2023spacetime} but consider multiple time dimensions in a framework that is easy to optimize for neural networks. Our local geometry optimization framework also allows us to ignore the hyperparameter $\varepsilon$, which is necessary when there is only one time dimension. In our case causality is controlled by the multi-dimensional neural partial order $\lesssim^{\mathcal{T}}$ instead.

Also, note that all the previous works discussed above require selecting an appropriate manifold to model the embedding space a priori. Our work, on the other hand, not only optimizes the location of points in space but also the manifold itself. In other words, our framework models a trainable geometry.
\subsection{Hyperbolic Spaces as Compared to Neural Spacetime Embeddings}
\label{Hyperbolic Spaces as Compared to Neural Spacetime Embeddings}

Hyperbolic geometry has been shown to be relevant for describing undirected graphs without cycles, which are called trees. Indeed, in \cite{sarkar2011low}, it was shown that these trees can be algorithmically embedded into hyperbolic space with low distortion. Additionally, \cite{ganea2018hyperbolic} constructed a class of hyperbolic neural networks (HNNs) which can exploit trees represented in these spaces, and \cite{kratsios2023capacity} demonstrated that finite trees can indeed be embedded with arbitrarily low distortion. However, in \cite{borde2023neural}, it was shown that an isometric embedding is not possible, except at "infinity" in the hyperbolic boundary of such a space \cite{Dyubina_Polterovich_2001_BulLondMathSoc__ExplicitUniversalRTrees} (which lies outside the manifold itself). The issue arises because hyperbolic space is a Riemannian manifold, and thus, it is incompatible with the branches in the tree itself.

This is not the case for the fractal geometry implemented by the neural spacetime and neural snowflake of \cite{de2023neural}. Indeed, in \citep[Theorem 3.6]{maehara2013euclidean}, it was shown that any finite tree can be isometrically embedded in Euclidean space $\mathbb{R}^d$, for $d$ large enough, but with the "snowflaked" Euclidean distance between any two points $x,y\in \mathbb{R}^d$ given by $\|x-y\|^{\alpha}$ where $\alpha=1/2$. Since neural spacetime and neural snowflake can implement such fractal geometries, they are better suited to the geometry of trees.

In this paper, we propose a framework to represent DAGs that do not contain directed cycles. However, the underlying undirected graph of a DAG can contain cycles. This is for example the case for the DAG containing four nodes $\{ \nu_i \}_{i=1}^4$ that satisfy only the following partial orders: $\nu_1 \preccurlyeq \nu_2 \preccurlyeq \nu_4$ and $\nu_1 \preccurlyeq \nu_3 \preccurlyeq \nu_4$, without causality relation between $\nu_2$ and $\nu_3$. 
The underlying undirected graph is an undirected cycle $\nu_1, \nu_2, \nu_4, \nu_3, \nu_1$, which is not appropriate for hyperbolic geometry.

\section{Proofs}
\label{s:Proofs}

\subsection{Properties of Neural Spacetimes}
\label{s:Proofs__ss:PropertiesNS}

\begin{proof}[Proof of Proposition~\ref{prop:neuralspacetime}]
Fix $T\in \mathbb{N}_+$ and set $D=0$. Let $\mathcal{T}:\mathbb{R}^{D+T}\to \mathbb{R}^T$ be a mapping with representation~\eqref{eq:neural_spacetime}. We define the relation 
\[
x \lesssim^{\mathcal{T}} y \quad \text{if and only if} \quad \mathcal{T}(x)_t \le \mathcal{T}(y)_t \quad \text{for each } t=1,\dots,T.
\]
We now show that $\lesssim^{\mathcal{T}}$ is a partial order on $\mathbb{R}^{D+T}$ by verifying the three required properties.

\textbf{Reflexivity:}  
For any $x \in \mathbb{R}^{D+T}$, we have $\mathcal{T}(x)_t \le \mathcal{T}(x)_t$ for every $t=1,\dots,T$. Thus, $x \lesssim^{\mathcal{T}} x$.

\textbf{Antisymmetry:}  
Suppose $x, y \in \mathbb{R}^{D+T}$ satisfy 
\[
x \lesssim^{\mathcal{T}} y \quad \text{and} \quad y \lesssim^{\mathcal{T}} x.
\]
Then, for every $t=1,\dots,T$, 
\[
\mathcal{T}(x)_t \le \mathcal{T}(y)_t \quad \text{and} \quad \mathcal{T}(y)_t \le \mathcal{T}(x)_t,
\]
which implies that $\mathcal{T}(x)_t = \mathcal{T}(y)_t$ for all $t$. Since each matrix $V_j\in I_T^+$ is injective (as noted in~\eqref{eq:Gershgorin__setup}), and both the leaky ReLU function and the affine shifts are injective, the composition $\mathcal{T}$ is injective. Therefore, $\mathcal{T}(x)=\mathcal{T}(y)$ implies that $x=y$.

\textbf{Transitivity:}  
Let $x, y, z \in \mathbb{R}^{D+T}$ be such that 
\[
x \lesssim^{\mathcal{T}} y \quad \text{and} \quad y \lesssim^{\mathcal{T}} z.
\]
Then, for every $t=1,\dots,T$, we have
\[
\mathcal{T}(x)_t \le \mathcal{T}(y)_t \quad \text{and} \quad \mathcal{T}(y)_t \le T(z)_t.
\]
By the transitivity of the standard order on $\mathbb{R}$, it follows that $\mathcal{T}(x)_t \le T(z)_t$ for all $t$. Hence, $x \lesssim^{\mathcal{T}} z$.

Since $\lesssim^{\mathcal{T}}$ is reflexive, antisymmetric, and transitive, it is a partial order on $\mathbb{R}^{D+T}$.
\end{proof}

\begin{proof}
[{\hypertarget{proof:prop:metric_geometry}
{Proof of Proposition~\ref{prop:metric_geometry}}}]
\hfill\\
\textbf{Comment:} \textit{The main challenge in this proof is showing that $\mathcal{D}$ separates points in $\mathbb{R}^D$; that is, for any $x,y\in \mathbb{R}^D$ we have $\mathcal{D}(x,y)=0$ if and only if $x=y$. Note that in the main text $\mathcal{D}$ takes as input the spatial coordinates of the Euclidean embeddings $\hat{x}_u,\hat{x}_v\in\mathbb{R}^{D+T}$ instead, but we avoid this notation here for simplicity.}

\textbf{Symmetry of $\mathcal{D}$:}
\hfill\\
The map $\mathcal{D}$ is symmetric since 
\[
        \big(
            |\sigma_{s_0,l_0}(x)-\sigma_{s_0,l_0}(y)| 
        \big)_{i=1}^d
    =
        \big(
            |\sigma_{s_0,l_0}(y)-\sigma_{s_0,l_0}(x)| 
        \big)_{i=1}^d
;
\]
meaning that the map $u\mapsto u_0$ is symmetric.  Consequentially, $\mathcal{D}$ is symmetric.

\textbf{Non-Negativity of $\mathcal{D}$:}
\hfill\\
Since each of the matrices $\mathsf{W}_1,\dots,\mathsf{W}_J$ have non-negative entries then they map $[0,\infty)^D$ to itself. To see this note that for any $d,D\in \mathbb{N}_+$ if $u\in [0,\infty)^D$, $\mathsf{W}\in \mathbb{R}^{d\times D}$, and $(\mathsf{W})_{i,k}\ge 0$ for each $i=1\dots,D$ and $k=1,\dots,d$ then: for $i=1,\dots,D$ we have that
\[
    (\mathsf{W}u)_i = \sum_{j=1}^D\, \mathsf{W}_{i,j}u_j \ge 0
.
\]
Thus, $\mathcal{D}(x,y)\ge 0$ for each $x,y\in \mathbb{R}^D$.

\textbf{Case I : $\mathcal{D}$ Separates Points if $T=0$:}
\hfill\\
Suppose that $T=0$; otherwise we will show that $\mathcal{D}$ does not separate points.

It is sufficient to show the result for $J=1$, with the general case following directly by recursion.
Consider the map $\mathcal{D}:\mathbb{R}^{D+T}\times \mathbb{R}^{D+T}\to [0,\infty)$ with iterative representation given in~\eqref{eq:neural_snowflake_v2}; i.e.\ for each $x,y\in \mathbb{R}^{D+T}\cong \mathbb{R}^D$
\allowdisplaybreaks
\begin{align*}
\mathcal{D}(x,y)
& \eqdef 
\mathsf{W}_J
\sigma_{s_{J+1},l_{J+1}}(u_J),
\\
u_j & \eqdef \mathsf{W}_j 
\sigma_{s_j,l_j}\bullet (u_{j-1})
\qquad\qquad\mbox{ for } j=1,\dots,{J-1},
\\
u_0 & \eqdef |
            \sigma_{s_0,l_0}( x )
        - 
            \sigma_{s_0,l_0}( y )
    |,
\end{align*}
where, we recall~\eqref{sn_activation}, which states that for each $s,l>0$ and $x\in \mathbb{R}$ (the activation function is applied pointwise)
\[
        \sigma_{s,l}(x) 
    \eqdef
        \begin{cases}
             \operatorname{sgn}(x)\, 
             |x|^s & \mbox{ if } |x|<1
        \\
             \operatorname{sgn}(x)\, 
             |x|^l & \mbox{ if } |x|\geq 1
        \end{cases}
.
\]
Since $
    \mathsf{W}_1
= 
    \lambda I_D
    +
    |\tilde{\mathsf{W}}|
$ 
and some matrix $\tilde{\mathsf{W}}\in \mathbb{R}^{D\times D}$; then, the entries of $\mathsf{W}$ are non-negative and for $i,j=1,\dots,D$ we have that
\begin{equation}
\label{eq:Gershgorin__setup}
    (\mathsf{W}_1)_{i,i}
    =
            \lambda 
        +
            \sum_{j=1}^D\, 
                |\tilde{\mathsf{W}}_{i,j}|
    >
        \sum_{j=1}^D\, 
                |\tilde{\mathsf{W}}_{i,j}|
    \ge 
        \sum_{j=1;\,j\neq i}^D\, 
                |\tilde{\mathsf{W}}_{i,j}|
    \eqdef 
        R_i
,
\end{equation}
where, we emphasize that the strictness of the first inequality is due to the positivity of $\lambda$. Thus, the Gershgorin circle theorem, see \cite{gershgorin_1931_uber}, implies that the eigenvalues of $\mathsf{W}_1$ belong to the set $\Lambda_1\subseteq \mathbb{C}$ defined by
\begin{equation*}
    \Lambda_1
\eqdef 
    \bigcup_{i=1}^D\, \overline{\operatorname{B}}_2\big(
    (\mathsf{W}_1)_{i,i}, R_i
    \big)
\end{equation*}
where, for $u\in \mathbb{C}$ and $r\ge 0$ we define $\overline{\operatorname{B}}_2\big(u,r\big)\eqdef \{z\ \in \mathbb{C}:\, \|u-z\|\le r\}$.  Since the computation in~\eqref{eq:Gershgorin__setup} showed that $R_i < (\mathsf{W}_1)_{i,i}$ for each $i=1,\dots,D$ then $0\not \in \Lambda_1$.  Thus, $\mathsf{W}_1$ is invertible.  

Consequentially, $\mathbb{R}^D\ni u\mapsto \mathsf{W}_1u \in \mathbb{R}^D$ is injective.  

For any specification of $s,l>0$ the (componentwise) map $\sigma_{s,l}\bullet:\mathbb{R}^D\ni u\to 
(\sigma_{s,l}(u_i))_{i=1}^D\in 
\mathbb{R}^D$ is injective as it is componentwise monotone increasing.  
In the notation of~\eqref{eq:neural_snowflake_v2}:
Since $u_0=0$ if and only if $x=y$ then the map 
\[
    \mathbb{R}^D\times \mathbb{R}^D \ni (x,y) 
\mapsto \mathsf{W}_1
\mapsto u_1\in \mathbb{R}^D
\]
is equal to the zero vector if and only if $x=y$.  Since $\mathsf{W}_2\in [0,\infty)^d$ then $\mathcal{D}(x,y)\ge 0$; thus, $\mathcal{D}(x,y)=0$ if and only if $x=y$.  We have just shown that $\mathcal{D}$ is point-separating (i.e.\ $\mathcal{D}(x,y)=0$ if and only if $x=y$) and in the process have seen that $\mathcal{D}$ is positive (i.e.\ $\mathcal{D}(x,y)\ge 0$).  
That is, $\mathcal{D}$ separates points in $\mathbb{R}^{D}$.  

\textbf{Case II: Pseudo-Metric for Positive Time Dimensions if $T>0$:}
\hfill\\
If instead $T\in \mathbb{N}_+$, then let $\mathbf{0}_T$ denote the zero vector in $\mathbb{R}^T$, and $x,y\in \mathbb{R}^D$ with $x\neq y$.  Then, $\mathcal{D}
\big(
(\mathbf{0}_T,x) 
,
(\mathbf{0}_T,y) 
\big)=0$.  Thus, $\mathcal{D}$ does not separate points whenever $T>0$. Note that from a computational perspective, we account for this using masking during training, see Section~\ref{Computational Implementation} and Appendix~\ref{Additional Computational Implementation Details of Neural Spacetimes}.

\textbf{Relaxed Triangle Inequality:}
\hfill\\
It remains to show that a relaxed triangle inequality holds.  Again, we consider the case when $T=0$, with the general case following identically up to a more cumbersome notation.
Let $x,y\in \mathbb{R}^D$, using the notation of~\eqref{eq:neural_snowflake_v2}, for $j=0,\dots,J$ define the constant
\[
\beta_{j}\eqdef \max\{\max\{s_j,l_j\}-1,0\}.
\]
Note that, $\beta_j=0$ whenever both $s_j$ and $l_j\le 1$ and it equals to $(\max\{s_j,l_j\}-1)$ otherwise.  Further, note that if $s_j=l_j>1$, then $\beta_j=s_j-1$.

By definition of the operator norm of each matrix $\mathsf{W}_j$ we have that: for $j=1,\dots,J$
\begin{equation}
\label{eq:matrix_boundsj}
    \|u_j\| 
\le 
    \|\mathsf{W}_j\|_{op}\, \|\sigma_{s_j,l_j}(u_{j-1})\|
.
\end{equation}
By~\citep[Example 2.2]{xia2009geodesic}: for each $j=1,\dots,J$ we have that 
\begin{equation}
\label{eq:sigma_boundsj}
    \|\sigma_{s_j,l_j}(u_{j-1})\|
\le 
    2^{\beta_j}\, \|u_{j-1}\|
.
\end{equation}
Upon combining the bounds in~\eqref{eq:matrix_boundsj} and~\eqref{eq:sigma_boundsj} for each $j=1,\dots,J$ we arrive at
\begin{equation}
\label{eq:final_triangle_bounds__A}
    \mathcal{D}(x,y) 
\le 
    \prod_{j=1}^J\, 
        \Big(
            \|\mathsf{W}_j\|_{op}\, 
            2^{\beta_j}
        \Big)
        \|u_0\|
=
    2^{\sum_{j=1}^J\, \beta_j}
        \Biggr(
            \prod_{j=1}^J\, 
            \|\mathsf{W}_j\|_{op}\, 
        \Biggl)
        \,
        \|u_0\|
.
\end{equation}
Again using~\citep[Example 2.2]{xia2009geodesic} we have that 
\begin{equation}
\label{eq:final_triangle_bounds__B}
\begin{aligned}
    \|u_0\|
= &
    \Biggl(
        \sum_{i=1}^d\,
            |
                \sigma_{s_0,l_0}(x_i) 
                - 
                \sigma_{s_0,l_0}(y_i) 
            |^2
    \Biggr)^{1/2}
\\
\le & 
    \Biggl(
        \sum_{i=1}^d\,
            \big(
                2^{\beta_0}
                |
                        x_i
                    - 
                        y_i
                |
            \big)^2
    \Biggr)^{1/2}
\\
= & 
    2^{\beta_0}
    \,
    \Biggl(
        \sum_{i=1}^d\,
                |
                        x_i
                    - 
                        y_i
                |^2
    \Biggr)^{1/2}
\\
= & 
    2^{\beta_0}\, \|x-y\|
.
\end{aligned}
\end{equation}
Upon combining the estimates in~\eqref{eq:final_triangle_bounds__A} with those in~\eqref{eq:final_triangle_bounds__B} we find that
\begin{align}
\nonumber
    \mathcal{D}(x,y) 
\le &
    2^{\sum_{j=1}^J\, \beta_j}
        \Biggr(
            \prod_{j=1}^J\, 
            \|\mathsf{W}_j\|_{op}\, 
        \Biggl)
        \,
        \|u_0\|
\\
\le &
    2^{\sum_{j=1}^J\, \beta_j}
        \Biggr(
            \prod_{j=1}^J\, 
            \|\mathsf{W}_j\|_{op}\, 
        \Biggl)
        \,
        2^{\beta_0}\, \|x-y\|
\\
= &
\label{eq:final_triangle_bound__END}
    2^{\sum_{j=0}^J\, \beta_j}
        \Biggr(
            \prod_{j=1}^J\, 
            \|\mathsf{W}_j\|_{op}\, 
        \Biggl)
        \,
        \|x-y\|
.
\end{align}
Finally, notice that if for each $j=1,\dots,J$ the matrix $\mathsf{W}_j$ is orthogonal and $0<s_j,l_j\le 1$ then~\eqref{eq:final_triangle_bound__END} becomes $1\,\|x-y\|$; in which case $\mathcal{D}$ is a metric.
This concludes the proof.
\end{proof}

\subsection{Embedding Results}
\label{s:Proofs__ss:Embeddings}

We will routinely use the following ordering.
\begin{definition}[Product Order]
\label{defn:Product_Order}
Let $T\in \mathbb{N}_+$.  The product, or coordinate, order $\lesssim^{\times}$ on $\mathbb{R}^T$ is defined for each $x,y\in \mathbb{R}^T$ by
\[
        x\lesssim^{\times} y 
    \Leftrightarrow
        x_t\le y_t \mbox{ for all }t=1,\dots,T
    .
\]
Equivalently, $x\lesssim^{\times} y$ if $1=\prod_{t=1}^T\, I_{x_t\le y_t}$. (We use $\lesssim^{\mathcal{T}}$ to refer to the ordering given by the neural partial order specifically).
\end{definition}

A key step in our main results is the ability of neural spacetimes to encode the product order in time and snowflake metrics in space.  This is quantified by the following helper lemma.
\begin{lemma}[An Implementation Lemma for Neural Spacetimes]
\label{lem:NSembedding_spacetime}
Let $\alpha>0$, $1\le p<\infty$, and $T,D\in \mathbb{N}_+$.  Then, there exist maps $\mathcal{T}:\mathbb{R}^{D+T}\to \mathbb{R}^T$ and $\mathcal{D}:\mathbb{R}^{D+T} \times \mathbb{R}^{D+T}\to [0,\infty)$, with respective representations~\eqref{eq:neural_snowflake_v2} and~\eqref{eq:neural_spacetime}, such that for all $x,y\in \mathbb{R}^{D+T}$
\begin{itemize}
    \item[(i)] \textbf{Implementation of Product Ordering:}
    \[
        (x)_{D+1:D+T}\lesssim^{\mathcal{T}} (y)_{D+1:D+T}
    \Leftrightarrow
        \mathcal{T}(x)_t \le \mathcal{T}(y)_t 
        \,\, \mbox{ for } \,t=D+1,\dots,D+T
    \]
    \item[(ii)] \textbf{Implementation of Snowflake of the $\ell^p$ metric:} 
    \[
        \mathcal{D}(x,y) = \|(x)_{1:D}-(y)_{1:D}\|_p^{\alpha}
    \]
\end{itemize}
Furthermore, the parametric complexity of $\mathcal{T}$ and $\mathcal{D}$ are given by:
\begin{enumerate}
    \item[(a)] \textbf{Depth:} $\operatorname{Depth}(\mathcal{T})=1$ and $\operatorname{Depth}(\mathcal{D})=2$
    \item[(b)] \textbf{Width:} $\operatorname{Width}(\mathcal{T})=T$ and $\operatorname{Width}(\mathcal{D})=D$
    \item[(c)] \textbf{No. Non-zero Parameters:} $\operatorname{No. Param}(\mathcal{T})=T+2$ and $\operatorname{No. Par}(\mathcal{D})=2(3+D)$.
\end{enumerate}
\end{lemma}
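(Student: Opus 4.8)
The plan is to build the two networks independently on the two coordinate blocks, since $\mathcal{T}$ reads only the temporal block $(\cdot)_{D+1:D+T}$ and $\mathcal{D}$ reads only the spatial block $(\cdot)_{1:D}$; there is no coupling between the constructions. For part (i) I would force $\mathcal{T}$ to act coordinate-wise by a \emph{strictly increasing scalar map}, so that it simultaneously preserves and reflects the product order $\lesssim^{\times}$. For part (ii) I would realize the snowflaked norm $\|\cdot\|_p^\alpha=\big(\sum_{i=1}^D|\cdot|^p\big)^{\alpha/p}$ as a depth-two neural (quasi-)metric, exploiting that $\sigma_{s,l}$ with $s=l=\beta$ restricts to the pure power $r\mapsto r^{\beta}$ on $[0,\infty)$, and using the two positive weight matrices first to sum-and-broadcast the $p$-th powers and then to read off the answer.

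For $\mathcal{T}$ I take depth $\tilde J=1$, $\mathsf{V}_1=I_T$ (which lies in $I^+_T$ via $\lambda=1$, $\tilde{\mathsf{W}}=0$), $b_1=0$, and $\tilde s_1=1$, so that $\sigma_{1,1}$ is the identity and $\mathcal{T}(x)_t=\textrm{LR}\big((x)_{D+t}\big)$ coordinate-wise. Since LeakyReLU is a strictly increasing bijection of $\mathbb{R}$, it is both order-preserving and order-reflecting on each coordinate, giving $\mathcal{T}(x)_t\le\mathcal{T}(y)_t\Leftrightarrow (x)_{D+t}\le (y)_{D+t}$ for every $t$, which is exactly the equivalence in (i). (This is the sharper statement behind Proposition~\ref{prop:neuralspacetime}, which only guarantees that $\lesssim^{\mathcal{T}}$ is \emph{some} partial order.)

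For $\mathcal{D}$ I take $J=2$ and choose the three activation pairs as $\sigma_{s_0,l_0}=\mathrm{id}$ (i.e.\ $s_0=l_0=1$), then $s_1=l_1=p$, then $s_2=l_2=\alpha/p$. This yields $u_0=\big(|x_i-y_i|\big)_{i=1}^D$, and $\sigma_{p,p}$ turns each coordinate into $|x_i-y_i|^p$ (the sign is always $+1$ since every intermediate quantity is non-negative). The crucial step is the middle matrix: I set $\mathsf{W}_1=\mathbf{1}\mathbf{1}^{\top}$, the all-ones matrix, which belongs to $I^+_D$ (write $\mathsf{W}_1=\lambda I_D+|\tilde{\mathsf{W}}|$ with $\lambda=1$ and $\tilde{\mathsf{W}}$ carrying ones off the diagonal). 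This broadcasts the scalar $S\eqdef\sum_{i=1}^D|x_i-y_i|^p$ into every coordinate of $u_1$. Applying $\sigma_{\alpha/p,\alpha/p}$ coordinate-wise then produces $S^{\alpha/p}$ in each slot, and the positive read-out vector $\mathsf{W}_2=\tfrac1D\mathbf{1}^{\top}\in(0,\infty)^{1\times D}$ averages these equal entries back to $S^{\alpha/p}=\|x-y\|_p^\alpha$, which is exactly (ii).

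The main obstacle is precisely this outer exponent $\alpha/p$: because the neural (quasi-)metric applies its final activation coordinate-wise \emph{before} the final linear read-out, one cannot apply a nonlinearity directly to the scalar sum $S$; the only dimension-reducing map is $\mathsf{W}_2$, and it comes last. The sum-and-broadcast trick with $\mathbf{1}\mathbf{1}^{\top}\in I^+_D$ is what lets the outer power be applied coordinate-wise while still acting on $S$. Everything else is routine (symmetry and non-negativity of $\mathcal{D}$, and that $\sigma_{\beta,\beta}$ is $r\mapsto r^{\beta}$ on $[0,\infty)$, as in Proposition~\ref{prop:metric_geometry}). Finally, the depth and width figures of (a)--(b) are immediate from $\tilde J=1$, $J=2$ and the block widths $T,D$, while the non-zero-parameter counts of (c) are read off the explicit construction by tallying the active entries of the chosen weight matrices together with the finitely many activation exponents and the LeakyReLU slope.
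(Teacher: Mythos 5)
Your constructions for (i) and (ii) are functionally correct, and your part (i) is essentially the paper's own argument (identity weight matrix, zero bias, unit exponent) — in fact handled more carefully, since the paper asserts $\mathcal{T}(x)=(x)_{D+1:D+T}$ and thereby silently drops the LeakyReLU appearing in \eqref{eq:neural_spacetime}, whereas your strict-monotonicity argument is the correct repair. For part (ii) you genuinely diverge from the paper: the paper takes $\mathsf{W}_1=I_D$, $\mathsf{W}_2=\mathbf{1}_D$ and evaluates $u_2=\sigma_{\alpha/p,\alpha/p}\big(\mathbf{1}_D u_1\big)$, i.e.\ it places the final activation \emph{after} the final read-out matrix (and its ``$s_2=l_2\eqdef 1/(p\alpha)$'' is a typo for $\alpha/p$), which is not literally the form \eqref{eq:neural_snowflake_v2}; had it followed the template exactly it would have produced $\sum_{i}|x_i-y_i|^{\alpha}$ rather than $\|x-y\|_p^{\alpha}$. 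Your sum-and-broadcast trick with $\mathsf{W}_1=\mathbf{1}\mathbf{1}^{\top}$ is exactly what it costs to respect the activation-before-matrix order of \eqref{eq:neural_snowflake_v2}, so on this point your construction is the more faithful one.

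However, that cost is precisely where your proof has a genuine gap: part (c) fails for your construction. The all-ones matrix $\mathbf{1}\mathbf{1}^{\top}$ has $D^2$ non-zero entries, so tallying your parameters gives $D^2+D+\mathcal{O}(1)$ non-zero parameters for $\mathcal{D}$ — quadratic in $D$ — which cannot equal the claimed $3(2+D)$ (nor the $2(3+D)$ that the paper's own construction yields); this count is not cosmetic, as it is quoted as the geometric complexity in Theorem~\ref{theorem:Embedding_Lemma}. Your closing sentence asserts that (c) can be ``read off'' from the construction, but the read-off contradicts the statement. A secondary issue: $\mathbf{1}\mathbf{1}^{\top}$ is singular (rank one for $D\ge 2$), so while it satisfies the literal Definition~\ref{defn:IPD}, it is not an \emph{invertible} positive matrix, which is the property of $I^+_D$ the paper relies on elsewhere (e.g.\ injectivity and point separation in Proposition~\ref{prop:metric_geometry}); this is harmless for the present lemma, since $\|\cdot\|_p^{\alpha}$ separates points on its own, but it does mean your $\mathcal{D}$ exits the class of networks the paper believes it is working with. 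To actually obtain (c) you would need to do what the paper does — take $\mathsf{W}_1=I_D$, $\mathsf{W}_2=\mathbf{1}_D$, exponents $1,p,\alpha/p$, and let the last nonlinearity act on the scalar $\mathbf{1}_D u_1$ — accepting the laxer reading of \eqref{eq:neural_snowflake_v2} in which the outermost layer applies its activation after the read-out.
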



\begin{proof}[{Proof of Lemma~\ref{lem:NSembedding_spacetime}}]
In what follows we use, for any $i\in \mathbb{N}_+$, let $I_i$ denotes the $i\times i$ identity matrix, let $\mathbf{0}_i$ denote the zero vector in $\mathbb{R}^i$, and we use $1_{\mathbb{R}^i}$ to denote the identity map on $\mathbb{R}^i$.  

\textbf{Step 1 - Implementation of the Product Order}\hfill\\

Observe that the product ordering (Definition~\ref{defn:Product_Order}) and the ordering~\eqref{eq:spacetime_order} coincide if $\mathcal{T}(x)=(x)_{D+1:D+T}$ for all $x\in \mathbb{R}^{D+T}$.
Consider the map $\mathcal{T}:\mathbb{R}^{D+T}\to \mathbb{R}^T$ defined for any $x\in \mathbb{R}^{D+T}$ by 
\begin{align}
    \mathcal{T}(x)
    & \eqdef I_T \sigma_{1,1}\bullet (x)_{D+1:D+T} + \mathbf{0}_T
\end{align}
is of the form of~\eqref{eq:neural_spacetime} and by construction $\mathcal{T}(x)=x_{D+1:D+T}$ for all $x\in \mathbb{R}^{D+T}$.
By construction, $\mathcal{T}$ has: depth $1$, width $T$, and $T+2$ non-zero parameters (the identity matrix diagonal entries plus $s$ and $l$).

\textbf{Step 2 - Implementation of the $\alpha$-Snowflake of the $\ell^p$-Quasi-Metric}\hfill\\
Fix $J=2$, set $s_0=l_0\eqdef 1$,
$s_1=l_1\eqdef p$, $\mathsf{W}_1=I_D$, and 
$s_2=l_2\eqdef \alpha/p$, $\mathbf{1}_D\eqdef \mathsf{W}_2 \in \mathbb{R}^{1\times D}$ with $(\mathsf{W}_2)_i=1$ for each $i=1,\dots,D$.  
Therefore, 
\begin{equation}
\label{eq:u0_to_u2}
\begin{aligned}
u_0 & 
\eqdef 
    \big(
        |
                \sigma_{1,1}(x_{i})
            - 
                \sigma_{1,1}(y_{i})
        |
    \big)_{i=1}^d
= 
    \big(
        |
                x_{i}
            - 
                y_{i}
        |
    \big)_{i=1}^d
\\
\therefore\,\,
u_1 & \eqdef 
I_D
    \sigma_{p,p}\bullet (u_0)
=    
    \big(
        |
                x_{i}
            - 
                y_{i}
        |^p
    \big)_{i=1}^d
\\
\therefore\,\,
u_2 & \eqdef \sigma_{\alpha/p,\alpha/p} \big(\mathbf{1}_D u_1\big) 
= 
    \biggl(
        \sum_{i=1}^D\,
            |
                    x_{i}
                - 
                    y_{i}
            |^p
    \bigg)^{\frac{\alpha}{p}}
=
    \|(x)_{1:D}-(y)_{1:D}\|_p^{\alpha}
.
\end{aligned}
\end{equation}
Consequentially, the map $\mathcal{D}(x,y)\eqdef u_2$, with $u_2$ defined by~\eqref{eq:u0_to_u2}, is of the form of~\eqref{eq:neural_snowflake_v2} and satisfies: for each $x,y\in \mathbb{R}^{D+T}$
\[
    \mathcal{D}(x,y)
    =
    \|(x)_{1:D}-(y)_{1:D}\|_p^{\alpha}
.
\]
Observe that $\mathcal{D}$ has depth $2$, width $d$, and $2\cdot 3+D+D=2(3+D)$ non-zero parameters.
\end{proof}

The following is a more technical version of Theorem~\ref{theorem:Embedding_Lemma}, which we prove here as it implies the version found in the main text of our manuscript. Note that so far we have used $x,y\in\mathbb{R}^{D+T}$ for our proofs and derivations. Next, we use $x_u,x_v\in\mathbb{R}^N$ to denote the original node features of two given nodes $u$ and $v$. Recall the identification $V\in v\leftrightarrow x_v\in \mathbb{R}^N$ from Section~\ref{s:Prelims__ss:Causal}. $x,y$ used thus far would correspond to the encoded node features $\mathcal{E}(x_u),\mathcal{E}(x_v)$.
\begin{thm}[Universal Spacetime Embeddings]
\label{theorem:Embedding_Lemma__appendixversion}
Fix $W,N,k\in \mathbb{N}_+$, $K>0$, and let $(P,\lesssim,d)$ be a $k$-point causal metric space with doubling constant $K$, width $W$, and feature encoding $P\eqdef \{x_v\}_{v\in V}\subseteq \mathbb{R}^N$.  
There exists a $D\in \mathcal{O}(\log(K))$, $T\eqdef W$, an MLP $\mathcal{E}:\mathbb{R}^N\to \mathbb{R}^{D+T}$, $\mathcal{T}:\mathbb{R}^{D+T}\to \mathbb{R}^T$ with representation~\eqref{eq:neural_spacetime}, and $\mathcal{D}:\mathbb{R}^{D+T}\times \mathbb{R}^{D+T}\to [0,\infty)$ with representation~\eqref{eq:neural_snowflake_v2} such that: for each $x_u,x_v\in P$
\begin{enumerate}
    \item[(i)] \textbf{Order Embedding:} $x_u\preccurlyeq x_v$ if and only if $\mathcal{E}(x_u)\lesssim^{\mathcal{T}}\mathcal{E}(x_v)$,
    \item[(ii)] \textbf{Metric Embedding:} 
    $d(x_u,x_v)
    \leq 
        \mathcal{D}\big(
                \mathcal{E}(x_u)
            ,
                \mathcal{E}(x_v)
        \big)
    \le 
        \mathcal{O}\big(
            \log(K)^5
        \big)
        \,
        d(x_u,x_v),
    $
\end{enumerate}

Furthermore, there is some $D \le k$ such that (ii) can be improved to
\[
        d(x_u,x_v)
    =
        \mathcal{D}\big(
                \mathcal{E}(x_u)
            ,
                \mathcal{E}(x_v)
        \big)
\]
In either case, we have the following parametric complexity estimates:
\begin{enumerate}
    \item[(i)] \textbf{Geometric Complexity:} 
    Together, $\mathcal{D}$ and $\mathcal{T}$ are defined by a total of 
    \[
        D+T+8
    \]
    non-zero parameters,
    \item[(ii)] \textbf{Encoding Complexity:} $\mathcal{E}$ depends on 
    \[
        \mathcal{O}\biggr(
            k^{5/2}D^4N\,\log(N)
            \,\,
                 \log\biggr(
                    \frac{
                    k^2\,
                    \operatorname{diam}(P,d)}{\operatorname{sep}(P,d)}
                  \biggl)
        \biggl)
    \]
    non-zero parameters.
\end{enumerate}
Consequentially, together, $\mathcal{D}$, $\mathcal{T}$, and $\mathcal{E}$ depend at-most on a total of 
\[
    \mathcal{O}\biggr(
            D
        +
            T
        +
            k^{5/2}D^4N\,\log(N)
            \,\,
                 \log\biggr(
                    \frac{
                    k^2\,
                    \operatorname{diam}(P,d)}{\operatorname{sep}(P,d)}
                  \biggl)
    \biggl)
\]
non-zero parameters.
\end{thm}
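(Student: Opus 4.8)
The plan is to exploit the product structure of the neural spacetime, treating the causal (temporal) and metric (spatial) requirements independently and then stitching them together through the encoder $\mathcal{E}$. Since $\lesssim^{\mathcal{T}}$ depends only on the last $W$ coordinates and $\mathcal{D}$ only on the first $D$ coordinates, I would design the temporal block $\mathcal{T}$ and the spatial block $\mathcal{D}$ on disjoint coordinate ranges and then build a single MLP $\mathcal{E}:\mathbb{R}^N\to\mathbb{R}^{D+W}$ whose output concatenates the two coordinate systems. By Lemma~\ref{lem:NSembedding_spacetime} it suffices, for the geometry, to realize a product order on $\mathbb{R}^W$ and a snowflaked $\ell^p$ metric on $\mathbb{R}^D$; the problem therefore reduces to (a) an exact coordinate order-embedding of the poset into $(\mathbb{R}^W,\lesssim^{\times})$, (b) a low-distortion coordinate metric-embedding of $(P,d)$ into a snowflaked $\mathbb{R}^D$, and (c) memorizing these finitely many target coordinates with a ReLU MLP.

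For the temporal part I would use a chain-based coordinate embedding. By Dilworth's theorem the poset $(P,\lesssim)$ of width $W$ decomposes into $W$ chains $C_1,\dots,C_W$; for each $p\in P$ set $\phi_i(p)\eqdef \#\{c\in C_i:\,c\preccurlyeq p\}$. Counting is monotone along chains, so $p\preccurlyeq q \Rightarrow \phi_i(p)\le \phi_i(q)$ for every $i$, while the converse follows by taking $i$ to be the index of the chain containing $p$: the elements of $C_i$ below $p$ form an initial segment ending at $p$, so $\phi_i(p)\le\phi_i(q)$ forces $p\preccurlyeq q$. Hence $\phi=(\phi_1,\dots,\phi_W)$ is an exact order-embedding into $(\mathbb{R}^W,\lesssim^{\times})$, which Lemma~\ref{lem:NSembedding_spacetime}(i) realizes via $\mathcal{T}$ on the temporal coordinates, and Proposition~\ref{prop:neuralspacetime} certifies that the induced $\lesssim^{\mathcal{T}}$ is a genuine partial order. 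This gives part (i) with exactly $T=W$ time dimensions.

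For the spatial part I would invoke a quantitative Assouad-type embedding theorem for doubling spaces: since $(P,d)$ has doubling constant $K$, the snowflake $(P,d^{1/2})$ admits a bi-Lipschitz embedding $\Psi$ into $(\mathbb{R}^D,\|\cdot\|_p)$ with $D\in\mathcal{O}(\log K)$ and distortion $\mathcal{O}((\log K)^{5/2})$, so that $\|\Psi(\cdot)-\Psi(\cdot)\|_p\asymp d^{1/2}$. Applying the snowflake exponent $1/\alpha=2$ available in $\mathcal{D}$ then turns $\|\Psi(\cdot)-\Psi(\cdot)\|_p^{2}$ into an approximation of $d$ itself, and since raising to the second power squares the multiplicative distortion this produces exactly the claimed $\mathcal{O}(\log(K)^5)$ factor; Lemma~\ref{lem:NSembedding_spacetime}(ii) implements $\|\cdot\|_p^{2}$ through $\mathcal{D}$ and Proposition~\ref{prop:metric_geometry} confirms it is a quasi-metric on the spatial block. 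For the exact isocausal refinement I would instead take $D=k$ and place the points via a Fréchet/Kuratowski coordinate embedding $x\mapsto(d(x,p_j))_{j}$, which is isometric for the $\ell^\infty$ geometry, and then use the full expressivity of representation~\eqref{eq:neural_snowflake_v2} (two free exponents per layer together with positive weight matrices) to reproduce the finite metric exactly at the cost of $D=k$ dimensions.

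Finally I would assemble $\mathcal{E}$ as a single ReLU MLP interpolating the $k$ feature vectors $\{x_v\}_{v\in V}\subset\mathbb{R}^N$ to the concatenated targets $(\Psi(x_v),\phi(x_v))\in\mathbb{R}^{D+W}$; a quantitative MLP memorization bound supplies the $\tilde{\mathcal{O}}(k^{5/2}D^4 N\log N\,\log(\dots))$ encoder parameter count, while Lemma~\ref{lem:NSembedding_spacetime}(a)--(c) bounds the geometric blocks $\mathcal{D},\mathcal{T}$ by $\mathcal{O}(D+W)$ parameters, giving the total. The main obstacle I anticipate is the spatial step: securing simultaneously the logarithmic dimension $D\in\mathcal{O}(\log K)$ and the sharp $\mathcal{O}(\log(K)^5)$ distortion requires the correct quantitative Assouad embedding and careful bookkeeping of how the exponent $1/\alpha$ amplifies distortion, and the exact-isometry refinement genuinely needs a finite-metric realization argument beyond the generic $\ell^p$-snowflake of the Lemma.
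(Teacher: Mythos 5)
Your overall architecture matches the paper's proof: Dilworth's theorem for the temporal coordinates, an Assouad-type (Naor--Neiman) snowflake embedding for the spatial coordinates with the snowflake exponent undone by the power available in $\mathcal{D}$, a ReLU-MLP memorization step for $\mathcal{E}$, and Lemma~\ref{lem:NSembedding_spacetime} to realize the product order and the $\ell^p$-snowflake inside the NST. Your chain-counting coordinates $\phi_i(p)=\#\{c\in C_i:\,c\preccurlyeq p\}$ are in fact a correct, explicit rendering of the order embedding that the paper only cites abstractly, and the argument you give for both directions of the equivalence is sound. On the spatial side your numerology differs slightly from the paper (which takes snowflake exponent $1-\alpha=1/4$ and $\delta=1/4$ so that the distortion exponent $(1+\delta)/(1-\alpha)$ equals exactly $5$): for a fixed exponent $1/2$, Naor--Neiman gives distortion $\mathcal{O}((\log K)^{1+\delta})$ with $\delta\le 1$, not $\mathcal{O}((\log K)^{5/2})$, but after squaring you still land at $\mathcal{O}((\log K)^{4})\subseteq\mathcal{O}(\log(K)^5)$ with $D\in\mathcal{O}(\log K)$, so the low-distortion claim (ii) survives your bookkeeping.

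The genuine gap is in the exact (isocausal) refinement with $D\le k$. You place the points by the Fr\'echet/Kuratowski map $x\mapsto(d(x,p_j))_{j=1}^{k}$, which is isometric only for the $\ell^\infty$ norm, and then assert that the ``full expressivity'' of representation~\eqref{eq:neural_snowflake_v2} reproduces the metric exactly. This is precisely the step that fails: every $\mathcal{D}$ of the form~\eqref{eq:neural_snowflake_v2} acts on the componentwise differences $u_0=(|x_i-y_i|)_{i}$ through matrices in $I^+_D$ (strictly positive diagonal) and strictly monotone power activations, with a strictly positive output layer, so $\mathcal{D}$ is strictly increasing in every coordinate of $u_0$; the sup-metric $\max_i|x_i-y_i|$ is locally constant in each non-maximal coordinate, so no map of this class can implement it, and consistently Lemma~\ref{lem:NSembedding_spacetime}(ii) only supplies snowflakes $\|\cdot\|_p^{\alpha}$ with $p<\infty$. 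One could retreat to asking that the finitely many values $d(x_u,x_v)$ merely be interpolated at the Kuratowski coordinates rather than representing $\ell^\infty$ as a function, but you give no argument for that, and nothing in the paper supports such a finite-interpolation property of neural (quasi-)metrics. The paper closes this case with a different ingredient: for any finite $k$-point metric space the snowflake $d^{\alpha}$ with $\alpha=\log_2(1+1/(k-1))/2$ embeds \emph{isometrically} into a Euclidean space (the result it cites on isometric embeddability of snowflaked finite metric spaces), whose span has dimension at most $k$; then $\mathcal{D}$ implements $\|\cdot\|_2^{1/\alpha}$ via Lemma~\ref{lem:NSembedding_spacetime}(ii), giving $d(x_u,x_v)=\mathcal{D}(\mathcal{E}(x_u),\mathcal{E}(x_v))$ exactly. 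Your proposal needs this Euclidean-snowflake realization (or an equivalent substitute) in place of the $\ell^\infty$ route; as written, the ``furthermore'' clause of the theorem is unproven.
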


\begin{proof}
[{\hypertarget{proof:theorem:Embedding_Lemma}
{Proof of Theorem~\ref{theorem:Embedding_Lemma}}}]
Let $\alpha \in (1/2,1)$ and $\delta \in (0,1]$, both of which will be fixed retroactively.

\textbf{Step 1 - Causal Embedding:}
\hfill\\
By \citep[Theorem 1.1]{DilworthTheorem_1950_AnnMath}, $(P,\lesssim)$ has width $W$ only if there exists an order embedding $\tilde{\mathcal{E}}^{\star}:P\to (\{0,1\}^W,\lesssim^{\times})$.  
Since the inclusion $\iota$ of $(\{0,1\}^W,\lesssim^{\times})$ into 
$(\mathbb{R}^W,\lesssim^{\times})$ trivially defines an order embedding, then $\mathcal{E}^{(1)}\eqdef \iota\circ \tilde{\mathcal{E}}:(P,\lesssim) \to (\mathbb{R}^W,\lesssim^{\times})$ is an order embedding.  

\textbf{Step 2 (Case I) - Metric Embedding - Low-Distortion Case:}
\hfill\\
By Naor and Neiman's Assouad embedding theorem, as formulated in~\citep[Theorem 1.2]{naor2012assouad}, there exists an absolute constant $c>0$ such that for each $1/2<\alpha<1$ and $0<\delta\le 1$, there exists a bi-Lipschitz embedding $\mathcal{E}^{(2)}(P,d^{1-\alpha})\to (\mathbb{R}^D,\|\cdot\|_2)$ satisfying: for each $x_u,x_v\in P$
\begin{equation}
\label{eq:biLipEmbedding}
            d^{1-\alpha}(x_u,x_v)
    \leq 
        \|
            \mathcal{E}^{(2)}(x_u)
            -
            \mathcal{E}^{(2)}(x_v)
        \|_2
    \leq 
        c\Big(\frac{\log(K)}{1-\alpha}\Big)^{1+\delta}
        \,
        d^{1-\alpha}(x_u,x_v)
\end{equation}
where\footnote{Note that $\varepsilon\eqdef 1-\alpha$ in the notation of \cite{naor2012assouad}.}
$D\in \mathcal{O}(\log(K)/\delta)$.
Equivalently, for each $x_u,x_v\in P$
\begin{equation}
\label{eq:biLipEmbedding__v2}
            d(x_u,x_v)
    \leq 
        \|
                \mathcal{E}^{(2)}(x_u)
            -
                \mathcal{E}^{(2)}(x_v)
        \|_2^{1/(1-\alpha)}
    \leq 
        c^{1/(1-\alpha)}
        \,
        \Big(\frac{\log(K)}{1-\alpha}\Big)^{
            \frac{
                1+\delta
            }{
                1-\alpha
            }
        }
        \,
        d(x_u,x_v)
.
\end{equation}

\textbf{Step 2 (Case II) - Metric Embedding - Isometric Case:}

Since $(P,d)$ is such that $P$ is finite, i.e.\ $k = \#P<\infty$ then,~%
\footnote{The isometric embeddability into a Euclidean snowflake characterizes finite metric spaces; see~\cite [Corollary 2.2]{LeDonneetAlIsometricEmbedding}.}~%
 implies that setting $\alpha \eqdef \gamma(k-1)/2 \eqdef \log_2(1+1/(k-1))/2\in (0,1]$ (with the case where $\alpha=1$ only being achieved when $P$ is a singleton) there exists some $\tilde{D}\in \mathbb{N}_+$ and $\tilde{\mathcal{E}}^{(2)}:P\to \mathbb{R}^{\tilde{D}}$ such that: for each $x_u,x_v\in P$ 
\begin{equation}
\label{eq:biLipEmbedding__v2btilde___qualitative}
            d(x_u,x_v)^{\alpha}
    =
        \|
                \tilde{\mathcal{E}}^{(2)}(x_u)
            -
                \tilde{\mathcal{E}}^{(2)}(x_v)
        \|_2
.
\end{equation}
Since $
D\eqdef 
\operatorname{dim}\big(\operatorname{span}\{\tilde{\mathcal{E}}(x_u)\}_{x_u\in P}\big)\le \#P=k
$ and since all $D$ dimensional linear of $\mathbb{R}^{\tilde{D}}$ subspaces are isometrically isomorphic to the $D$-dimensional Euclidean space; then there exists some linear surjection $T:\mathbb{R}^{\tilde{D}}\to \mathbb{R}^D$ which restricts to a bijective isomorphism from $\operatorname{span}\{\tilde{\mathcal{E}}(x_u)\}_{x_u\in P}$ to $\mathbb{R}^D$ (both equipped with Euclidean metrics).  Then, the composite map $
\mathcal{E}^{(2)}
\eqdef 
T
\circ 
\tilde{\mathcal{E}}^{(2)}:(P,d^{\alpha})\to 
(\mathbb{R}^D,\|\cdot\|)
$ is an isometric embedding.
Consequentially: for each $x_u,x_v\in P$ we have that
\begin{equation}
\label{eq:biLipEmbedding__v2b___qualitative}
            d(x_u,x_v)
    =
        \|
                \mathcal{E}^{(2)}(x_u)
            -
                \mathcal{E}^{(2)}(x_v)
        \|_2^{1/\alpha}
\end{equation}
and we emphasize that $D\le k$.

\textbf{Step 3 - Interpolation:}
\hfill\\
Define the ``spacetime embedding'' $\mathcal{E}:P\to \mathbb{R}^{D+W}$ by: for each $x_u\in P$
\begin{equation}
\label{eq:embedding}
\mathcal{E}(x_u) \eqdef \big(\mathcal{E}^{(1)}(x_u),\mathcal{E}^{(2)}(x_u)\big).
\end{equation}
We memorize/interpolate $\mathcal{E}$ using \citep[Lemma 20]{kratsios2023small}; thus, there exists a ReLU MLP $\hat{\mathcal{E}}:\mathbb{R}^N\to \mathbb{R}^{D+W}$ satisfying: for each $x_u\in P$ we have that
\begin{equation}
\label{eq:memorization}
\hat{\mathcal{E}}(x_u) = \mathcal{E}(x_u)
.
\end{equation}

Furthermore,~\citep[Lemma 20]{kratsios2023small} guarantees that number of trainable parameters defining $\mathcal{E}$ is
\[
\begin{aligned}
&
		\mathcal{O}\Big(
            k^{5/2}D^4N\,\log(N)
            \,\,
                 \log\big(
                    k^2\,
                    \operatorname{aspect}(P,d)
                  \big)
		\Big)
,
\end{aligned}
\]
where similarly to~\cite{KrauthgameLeeNaor2004} the \textit{aspect ratio} of the finite metric space $(P,d)$ is defined by
\[
	   \operatorname{aspect}(P,d)
	\eqdef 
    	\frac{
    		\max_{x_u,x_v\in P}\,
    		d(x_u,x_v)
    	}{
    		\min_{x_u,x_v \in P\,
    			;\,
    			x_u\neq x_v}\,
    		d(x_u,x_v)
    	}
    \eqdef 
        \frac{\operatorname{diam}(P,d)}{\operatorname{sep}(P,d)}
.
\]
Thus, the number of non-zero parameters determining $\mathcal{E}$ is at-most 
\[
    \mathcal{O}\biggr(
        k^{5/2}D^4N\,\log(N)
        \,\,
             \log\biggr(
                \frac{
                k^2\,
                \operatorname{diam}(P,d)}{\operatorname{sep}(P,d)}
              \biggl)
    \biggl)
.
\]

Furthermore, by~\eqref{eq:biLipEmbedding__v2} (in case I) and by definition of $\mathcal{E}$, we have that: for each $x_u,x_v\in P$
\begin{align}
\label{eq:biLipEmbedding__vfinal__pre}
            d(x_u,x_v)
    & \leq 
        \|
            \pi \circ 
                \mathcal{E}(x_u)
            -
            \pi\circ 
                \mathcal{E}(x_v)
        \|_2^{1/(1-\alpha)}
    \leq 
        c^{1/(1-\alpha)}
        \,
        \Big(\frac{\log(K)}{1-\alpha}\Big)^{
            \frac{
                1+\delta
            }{
                1-\alpha
            }
        }
        \,
        d(x_u,x_v)
\\
\label{eq:causalEmbedding__vfinal__pre}
            x_u\lesssim x_v
    & \Leftrightarrow
            \mathcal{E}(x_u) \lesssim^{\mathcal{T}} \mathcal{E}(x_v)
.
\end{align}
Retroactively set $\delta \eqdef 1/4 \in (0,1]$ and $\alpha \eqdef 3/4 \in (1/2,1)$.  Then,~\eqref{eq:biLipEmbedding__vfinal__pre}
and~\eqref{eq:causalEmbedding__vfinal__pre} become
\begin{align}
\label{eq:biLipEmbedding__vfinal}
            d(x_u,x_v)
    & \leq 
        \|
            \pi \circ 
                \mathcal{E}(x_u)
            -
            \pi\circ 
                \mathcal{E}(x_v)
        \|_2^{1/(1-\alpha)}
    \leq 
        \big(
            1024
            \,
            c^{4}
            \,
            \log(K)^5
        \big)
        \,
        d(x_u,x_v)
\\
\label{eq:causalEmbedding__vfinal}
            x_u\lesssim x_v
    & \Leftrightarrow
            \mathcal{E}(x_u) \lesssim^{\mathcal{T}} \mathcal{E}(x_v)
.
\end{align}

In case II, instead, ~\eqref{eq:biLipEmbedding__v2b___qualitative} implies that: for each $u,v\in V$
\begin{align}
\label{eq:biLipEmbedding__vfinal___qualitative}
            d(x_u,x_v)
    & =
        \|
            \pi \circ 
                \mathcal{E}(x_u)
            -
            \pi\circ 
                \mathcal{E}(x_v)
        \|_2^{1/\alpha}
\\
\label{eq:causalEmbedding__vfinal___qualitative}
            x_u\lesssim x_v
    & \Leftrightarrow
            \mathcal{E}(x_u) \lesssim^{\mathcal{T}} \mathcal{E}(x_v)
.
\end{align}

\textbf{Step $4$ - Encoding Product Order in time and snowflake in space as $(\mathcal{T},\mathcal{D})$}

In either case, since $\mathbb{R}^{D+W}$ is equipped with the product order on its last $W$ (temporal) dimensions and the $1/(1-\alpha)$ (resp.~$1/\alpha$)-snowflake of the Euclidean $(\ell^2)$ metric on its first $D$ (spatial) dimension, then the conditions for Lemma~\ref{lem:NSembedding_spacetime} are met.  Applying Lemma~\ref{lem:NSembedding_spacetime} concludes the proof.
\end{proof}

\begin{proof}
[{\hypertarget{proof:prop:impossiblity__twotime_dimensions}
{Proof of Proposition~\ref{prop:impossiblity__twotime_dimensions}}}]
Set $W\in \{1,2\}$ and $D\in \mathbb{N}_+$.  
By definition, of the order on $\mathbb{R}^{D+W}$, given in Lemma~\ref{lem:NSembedding_spacetime} (i), a spacetime embedding $\mathcal{E}:P\to \mathbb{R}^{D+W}$ exists if and only if $p\circ \mathcal{E}:P\to \mathbb{R}^W$ is an order embedding into $(\mathbb{R}^W,\lesssim^{\times})$ where $\lesssim^{\times}$ is the product order and $p:\mathbb{R}^{D+W}\ni (x_i)_{i=1}^{D+W}\to (x_i)_{i=1}^W\in \mathbb{R}^W$ is the canonical projection.  By \citep[Theorem 6.1]{Baker_FishburnRoberts_LowerBound_1970_PosetDim2} an order embedding of $(P,\lesssim)$ into $(\mathbb{R}^2,\lesssim^{\times})$ exists if and only if $(P,\lesssim)$ has a planar Hasse diagram.
\end{proof}

\begin{proof}[{\hypertarget{proof:thrm:low_dim_efficient}{Proof of Theorem~\ref{thrm:low_dim_efficient}}}]
\textbf{Step 1 - Time Embedding}
\hfill\\
Since the Hasse diagram of $(P,\lesssim)$ has been assumed to be planar, then~\citep[Theorem 6.1]{Baker_FishburnRoberts_LowerBound_1970_PosetDim2} implies that there exists an other embedding 
\[
    \mathcal{E}^{(1)}:(P,\lesssim)\to (\mathbb{R}^2,\lesssim^{\times}).
\]

\textbf{Step 2 - Spatial Embedding}
\hfill\\
Since we have assumed that the Hasse diagram of $(P,\lesssim)$ is planar, then we may instead use \citep[Theorem 9]{rao1999small} to deduce that there exists a $D\in \mathbb{N}_+$ and an injective map $\mathcal{E}^{(2)}:P\to \mathbb{R}^D$ such that: for each $x_u,x_v\in P$
\begin{equation}
\label{eq:Raoembedding}
        \frac1{c\,\sqrt{\log(k)}}
        d_{\operatorname{H}}(x_u,x_v) 
    \le 
        \|\mathcal{E}^{(2)}(x_v)-\mathcal{E}^{(2)}(x_u)\|_2
    \le 
        d_{\operatorname{H}}(x_u,x_v) 
,
\end{equation}
where $c>0$ is an absolute constant.  Resealing $\mathcal{E}^{(2)}$ by a factor of $c\sqrt{\log(k)}$, and multiplying across~\eqref{eq:Raoembedding} by $c\sqrt{\log(k)}$, we find that: for each $x_u,x_v\in P$
\begin{equation}
\label{eq:Raoembedding_v2}
        d_{\operatorname{H}}(x_u,x_v) 
    \le 
        \|\mathcal{E}^{(2)}(x_v)-\mathcal{E}^{(2)}(x_u)\|_2
    \le 
        c\,\sqrt{\log(k)}
        \,
        d_{\operatorname{H}}(x_u,x_v) 
.
\end{equation}
Furthermore, by remark at the beginning of~\citep[Section 4]{rao1999small} by the Johnson-Lindenstrauss lemma, as formulated in \citep[Theorem 2.1]{DubhashiPanconesi_2009_Concentration}, one may take $k\eqdef \#P$ an incur an additional factor of $\tilde{c}\sqrt{\log(k)}$, for an absolute constant $\tilde{c}>0$, in the distortion in~\eqref{eq:Raoembedding_v2}; that is
find that: for each $u,v\in P$
\begin{equation}
\label{eq:Raoembedding_V3}
        d_{\operatorname{H}}(x_u,x_v) 
    \le 
        \|\mathcal{E}^{(2)}(x_v)-\mathcal{E}^{(2)}(x_u)\|_2
    \le 
        C\, \log(k)
        \,
        d_{\operatorname{H}}(x_u,x_v) 
,
\end{equation}
where $C\eqdef c\tilde{c}>0$.  Set $p\eqdef 1$ and recall that $\|\cdot\|_2 \le \|\cdot\|_1\le \tilde{C}\sqrt{\log(k)} \|\cdot\|_2$ on $\mathbb{R}^{\tilde{C}\log(k)}$, for any $\tilde{C}>0$.  Thus,~\eqref{eq:Raoembedding_V3} implies that: for each $x_u,x_v\in P$ we have 
\begin{equation}
\label{eq:Raoembedding__final}
        d_{\operatorname{H}}(x_u,x_v) 
    \le 
        \|\mathcal{E}^{(2)}(x_v)-\mathcal{E}^{(2)}(x_u)\|_1^p
    \le 
        C^{\prime} \log(k)^2
        \,
        d_{\operatorname{H}}(x_u,x_v) 
,
\end{equation}
where $C^{\prime}\eqdef C\tilde{C}>0$ and $p\eqdef 1$.

\textbf{Step 3 - Interpolation Embedding}
\hfill\\
Pick some $x^{\star}\in \mathbb{R}^{D+2}\setminus 
[\cup_{i=1}^2\, \mathcal{E}^{(i)}(P)].
$
Since $P\subset \mathbb{R}^N$ then consider the map $\mathcal{E}:\mathbb{R}^N\to \mathbb{R}^{D+2}$ defined for each $x\in \mathbb{R}^N$ by
\begin{align*}
    \mathcal{E}(x)\eqdef 
    \begin{cases}
        \big(\mathcal{E}^{(1)}(x), \mathcal{E}^{(2)}(x)\big) &\, \mbox{ if } x \in P\\
        x^{\star} & \mbox{ if } x\not\in P
.
    \end{cases}
\end{align*}
We memorize/interpolate $\mathcal{E}$ using \citep[Lemma 20]{kratsios2023small} over the finite set $P$.  Whence, there exists a ReLU MLP $\hat{\mathcal{E}}:\mathbb{R}^N\to \mathbb{R}^{D+W}$ satisfying: for each $x_u\in P$
\begin{equation}
\label{eq:memorizationB}
\hat{\mathcal{E}}(x_u) = \mathcal{E}(x_u)
.
\end{equation}
Again, as in the proof of Theorem~\ref{theorem:Embedding_Lemma},~\citep[Lemma 20]{kratsios2023small} guarantees that number of trainable parameters defining $\mathcal{E}$ is
\[
\begin{aligned}
    \mathcal{O}\Big(
        k^{5/2}D^4N\,\log(N)
        \,\,
             \log\big(
                k^2\,
                \operatorname{aspect}(P,d_{\operatorname{H}})
              \big)
    \Big)
,
\end{aligned}
\]
where, in this case, \textit{aspect ratio} of the finite metric space $(P,d_{\operatorname{H}})$ is defined by
\[
	   \operatorname{aspect}(P,d_{\operatorname{H}})
	\eqdef 
    	\frac{
    		\max_{x_u,x_v\in P}\,
    		d_{\operatorname{H}}(x_u,x_v)
    	}{
    		\min_{x_u,x_v \in P\,
    			;\,
    			x_u\neq x_v}\,
    		d_{\operatorname{H}}(x_u,x_v)
    	}
    \eqdef 
        \frac{\operatorname{diam}(P,d_{\operatorname{H}})}{\operatorname{sep}(P,1)}
    =
        \operatorname{diam}(P,d_{\operatorname{H}})
,
\]
where we used the fact that the minimal edge weights between adjacent distance nodes are equal to $1$ in an unweighted graph.
Consequentially, the number of non-zero parameters determining $\mathcal{E}$ is
\[
    \mathcal{O}\Big(
        k^{5/2}D^4N\,\log(N)
        \,\,
             \log\big(
                k^2\,
                \operatorname{diam}(P,d_{\operatorname{H}})
              \big)
    \Big)
.
\]

\textbf{Step $4$ - Encoding Product Order in time and snowflake in space as $(\mathcal{T},\mathcal{D})$}

Since $\mathbb{R}^{D+2}$ is equipped with the product order on its last two (time) dimensions and the $p\eqdef 1/\alpha$-snowflake of the Euclidean $(\ell^2)$ metric on its first $D$ (space) dimension, then the conditions for Lemma~\ref{lem:NSembedding_spacetime} are met.  
Applying Lemma~\ref{lem:NSembedding_spacetime} concludes the proof.
\end{proof}

\section{Computational Implementation of Neural Spacetimes}
\label{Additional Computational Implementation Details of Neural Spacetimes}

In this appendix, we provide an extended discussion and additional details regarding the computational implementation of neural spacetimes. First, we explain what makes an activation function fractalesque, followed by an analysis of the behavior of our proposed activation function for neural (quasi-)metrics and neural partial orders. Moreover, we provide an algorithmic description of the entire pipeline, extend the discussion on causality loss enforcement and time embeddings, and propose optimization strategies at both the local and global geometry levels. Additionally, we explore potential classical algorithms for computing the optimal embedding dimensions, weight initialization strategies for the networks, and the algorithmic differences between neural snowflakes and neural (quasi-)metrics.

\subsection{What makes an Activation Function Fractalesque?}
\label{whatmakes}

The terminology \textit{snowflake} arises as follows.  Consider the activation function $\sigma:\mathbb{R}\to \mathbb{R}$ given by $\sigma(x)=|x|^{\alpha}$ and set $\alpha=\log(4)/\log(3)$.  Then, $d_{\alpha}(x,y)\eqdef \sigma(|x-y|)$ defines a metric on $\mathbb{R}$ and the metric space $(\mathbb{R},d_{\alpha})$ is \textit{isometric} to the (von) Koch snowflake fractal (Figure~\ref{fig:koch})
$\mathcal{X}\subset \mathbb{R}^2$ endowed with the distance obtained by restriction of the Euclidean distance on $\mathbb{R}^2$ to $\mathcal{X}$. 
Here, $\alpha$ is chosen such that $(\mathcal{X},d_{\alpha})$ has positive and finite $\alpha$-Hausdorff measure.

\begin{figure}[hbpt!]
\centering
\begin{subfigure}{0.3\linewidth}
  \centering
  \includegraphics[width=\linewidth]{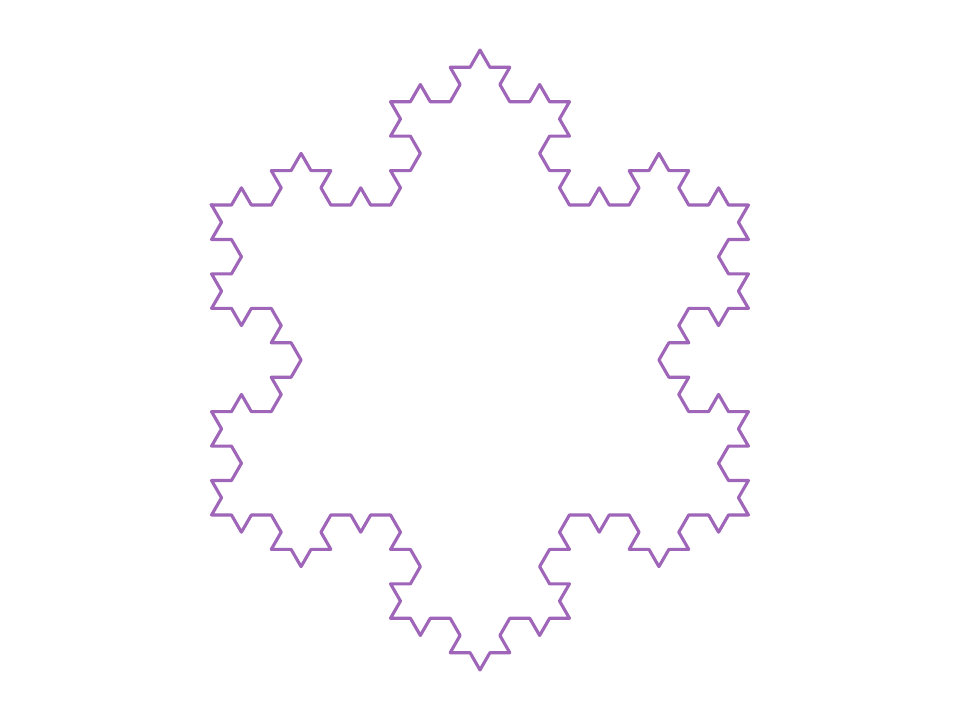}
  \caption{}
\end{subfigure}
\begin{subfigure}{0.3\linewidth}
  \centering
  \includegraphics[width=\linewidth]{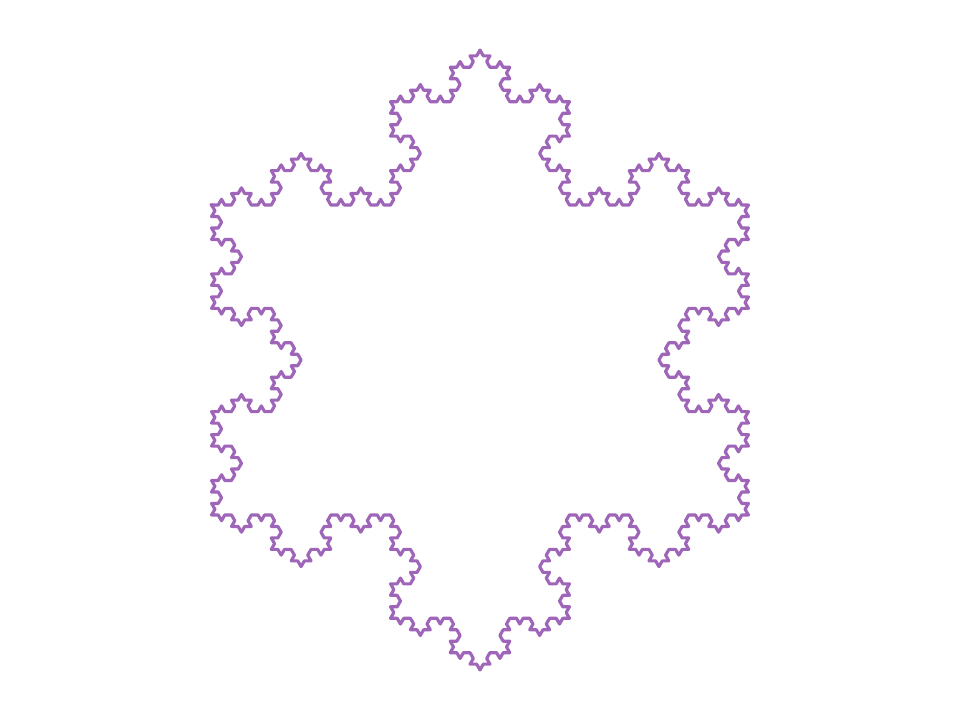}
  \caption{}
\end{subfigure}
\begin{subfigure}{0.3\linewidth}
  \centering
  \includegraphics[width=\linewidth]{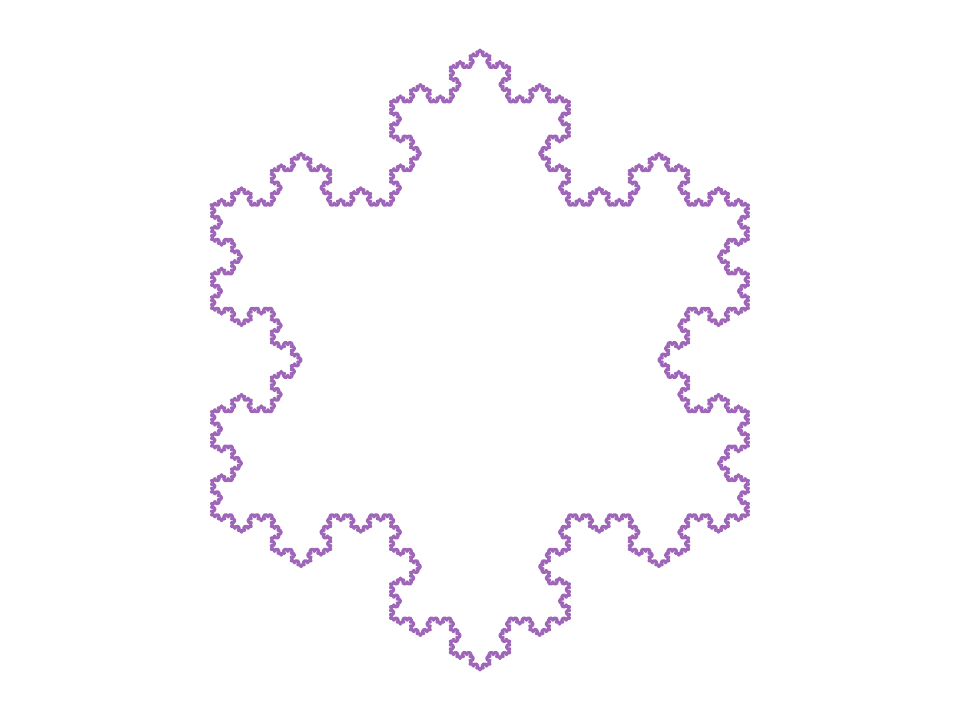}
  \caption{}
\end{subfigure}

\caption{Koch snowflake with increasing number of refinement iterations from left to right.}
\label{fig:koch}
\end{figure}

Indeed, for any general $\alpha \in (0,1)$ one can show that $d_{\alpha}$ defines a metric on $\mathbb{R}$ with the property that any line segment has \textit{infinite length}, see~\cite{semmes1996nonexistence}.  Thus, intuitively, the snowflake space $(\mathbb{R},d_{\alpha})$ contains infinitely more space to place points in while also maintaining a similar geometry to the Euclidean line; see e.g.~\cite[Lemma 7.1]{acciaio2024designing}.  See~\cite{tyson2005characterizations} for an intrinsic characterization of metric spaces which arise as snowflakes of a subset of a (possibly multidimensional) Euclidean space.

More generally, consider a continuous activation function $\sigma:\mathbb{R}\to \mathbb{R}$
.  
Since $\sigma$ is continuous and $[0,1]$ is compact then $\sigma$ admits a modulus of continuity $\omega:[0,\infty)\to \mathbb{R}$ on $[0,1]$; i.e.\ for each $x,y \in \mathbb{R}$
\begin{equation}
\label{eq:modolus}
        |\sigma(x)-\sigma(y)|
    \le 
        \omega(|x-y|)
.
\end{equation}
We think of $\sigma$ as being \textit{fractalesque} if it is sub-H\"{o}lder and non-Lipschitz near $0$.  That is, there is an $\alpha\in (0,1)$ (note that $\alpha<1$) and some $L>0$, such that the right-hand side of~\eqref{eq:modolus} can be bounded above as
\begin{equation}
\label{eq:modolus_Holder}
        |\sigma(x)-\sigma(y)|
    \le 
        \omega(|x-y|)
    \le 
        L\,|x-y|^{\alpha}
.
\end{equation}
Thus, if the upper-bound in~\eqref{eq:modolus_Holder} holds for an activation function $\sigma$, e.g.\ our snowflake activation function defined in~\eqref{eq:activation}, then the Euclidean distance between points in the image of the componentwise application of $\sigma$ are comparable to those of the snowflaked space $(\mathbb{R}^d,\|\cdot\|^{\alpha})$; making it fractalesque.

\subsubsection{The activation function \texorpdfstring{in~\eqref{sn_activation}}{}}
\label{fractalactivation}

In this work, we leverage fractalesque activation functions, which exhibit different training dynamics from typical activation functions used in artificial neural networks. We aim to visualize this type of activations and to gain an intuitive understanding of their nature. 

Neural spacetimes leverage the following equation:

\begin{equation*}
        \sigma_{s,l}(x) 
    \eqdef
        \begin{cases}
             \operatorname{sgn}(x)\, 
             |x|^s & \mbox{ if } |x|<1
        \\
             \operatorname{sgn}(x)\, 
             |x|^l & \mbox{ if } |x|\geq 1,
        \end{cases}
\end{equation*}

where the sign function $\operatorname{sgn}$ returns $1$ for $x~\ge~0$ and $-1$ for $x<0$. Both neural (quasi-)metrics and neural partial orders in neural spacetimes implement variations of this expression.

We visualize the activation using different values for $s$ and $l$ in Figure~\ref{fig:v2sl}. As we can see from the plot, the function is antisymmetric about the y-axis, monotonically increasing, and behaves differently depending on the magnitude of the input. The network can learn to optimize $s$ and $l$ alongside linear projection weights, which can route inputs to different regions of the function, to model different scales independently.

\begin{figure}[hbpt!]
\centering
\begin{subfigure}{0.36\linewidth}
  \centering
  \includegraphics[width=\linewidth]{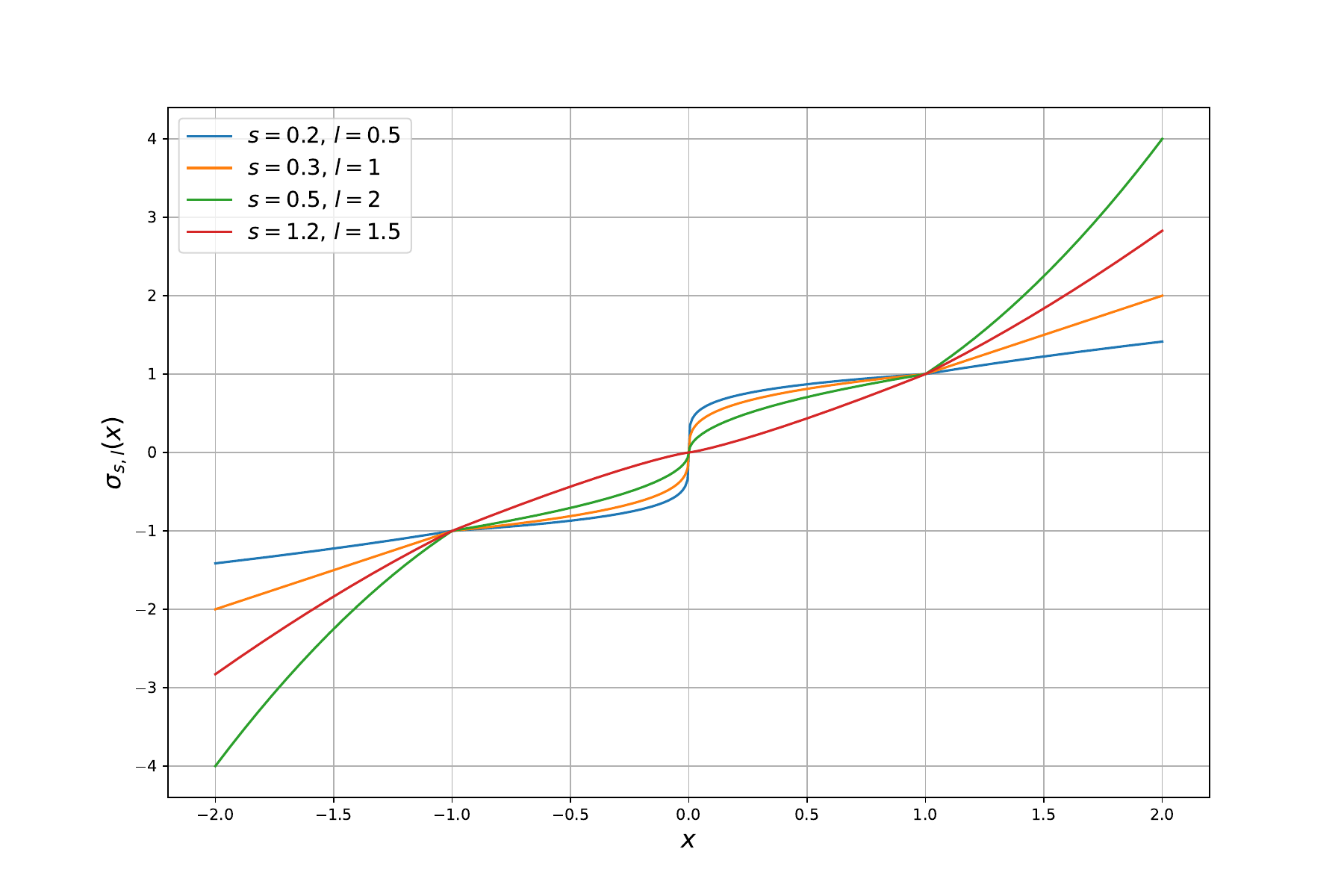}
  \caption{Neural (quasi-)metric activation function used for spatial embeddings (equation~\ref{sn_activation}).}
  \label{fig:v2sl}
\end{subfigure}
\begin{subfigure}{0.36\linewidth}
  \centering
  \includegraphics[width=\linewidth]{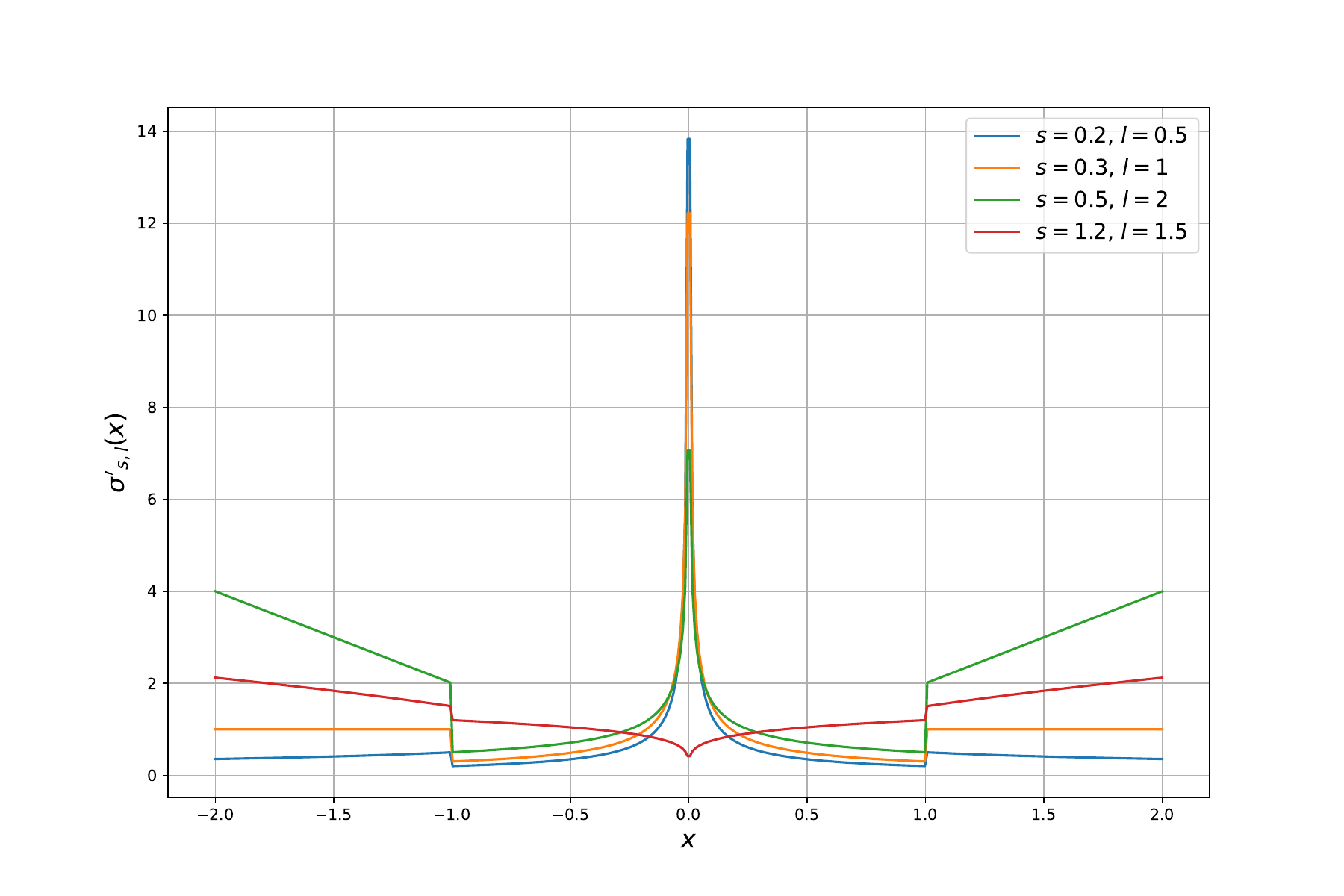}
  \caption{Neural (quasi-)metric activation function derivative.}
  \label{fig:v2slderivative}
\end{subfigure}
\begin{subfigure}{0.36\linewidth}
  \centering
  \includegraphics[width=\linewidth]{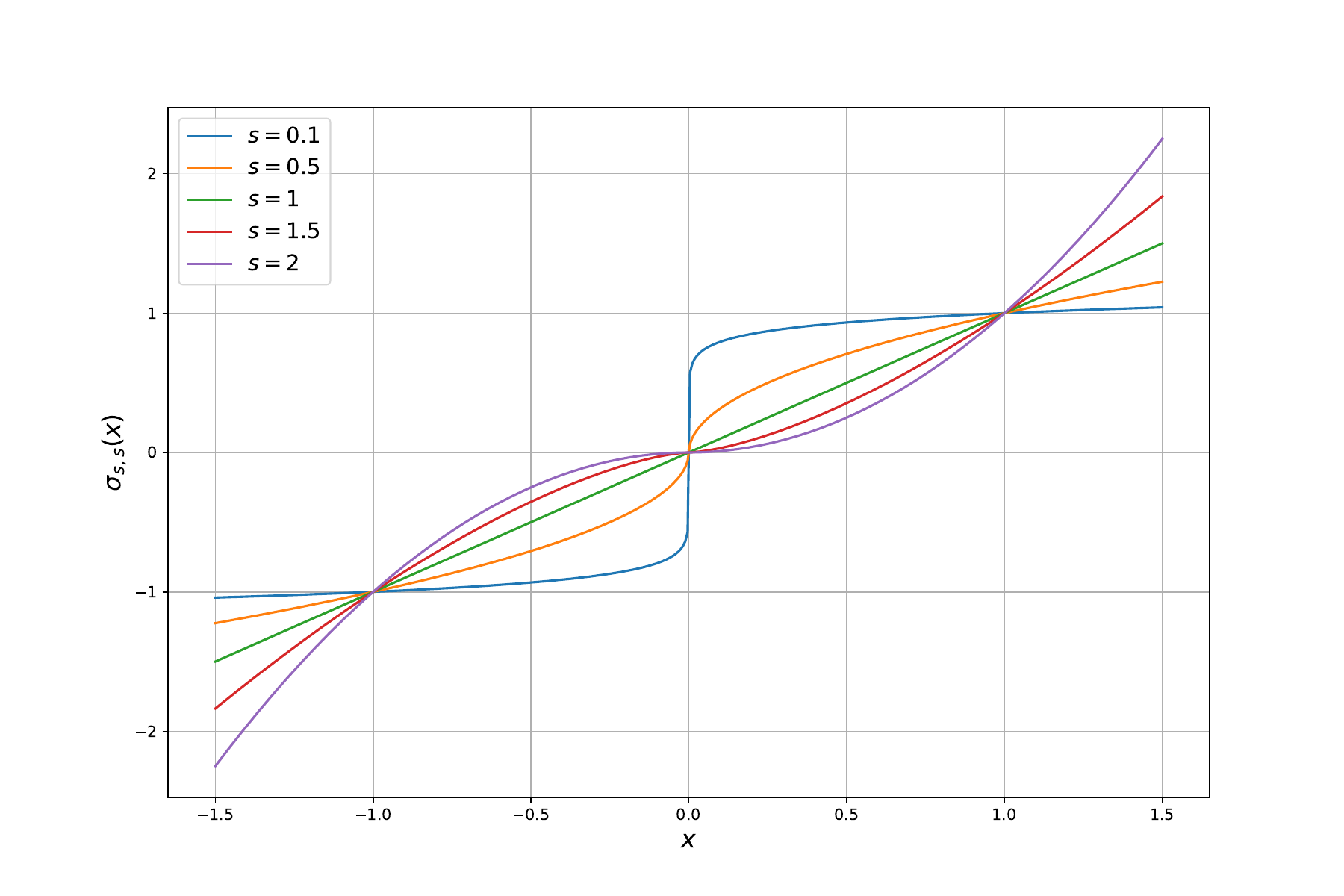}
  \caption{Activation function used by neural partial order without LeakyReLU.}
  \label{fig:partialorder_activation_noleaky}
\end{subfigure}
\begin{subfigure}{0.36\linewidth}
  \centering
  \includegraphics[width=\linewidth]{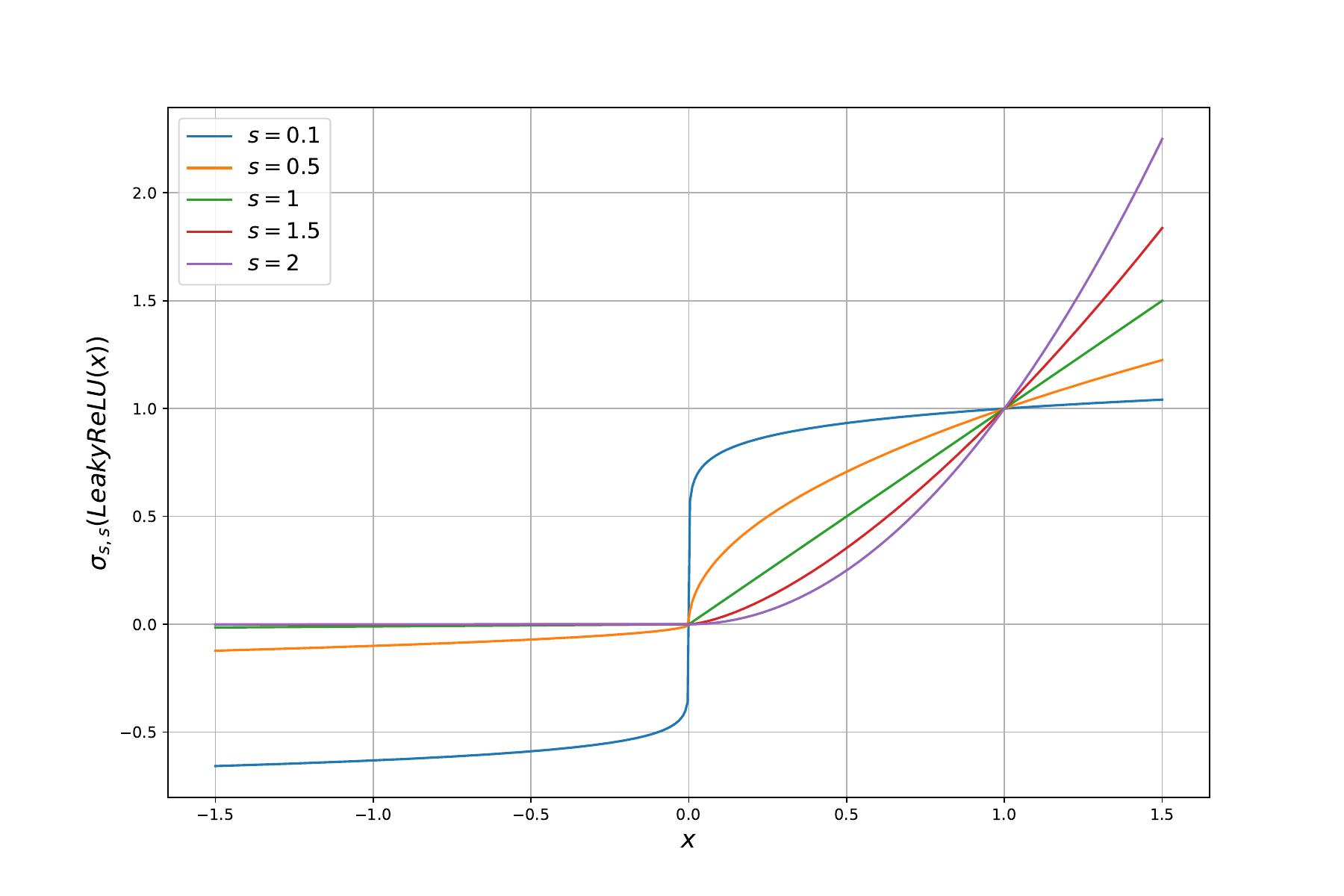}
  \caption{Activation function used by neural partial order (equation~\ref{eq:neural_spacetime}).}
  \label{fig:partialorder_activation}
\end{subfigure}
\caption{NST activation visualizations.}
\label{fig:NSTactivations}
\end{figure}

In Figure~\ref{fig:v2slderivative} we plot a finite difference approximation of its derivative. The derivative rapidly increases for absolute values of the input near 0 for exponents $s<1$, meaning it has a very high second derivative around these values. This can indirectly destabilize training. Moreover, the absolute value of the derivative itself is also large. This can easily make training unstable as well, especially if we compose this function with itself over multiple layers; when computing backpropagation using the chain rule, the gradients can easily explode.

In the case of the neural partial order, we implement a variation of the activation which is not able to distinguish between small and large scales since $s=l$. Although this would be theoretically enough, and corresponds to the activation in Figure~\ref{fig:partialorder_activation_noleaky}, we empirically found it to be slow at training. Instead, we compose the fractalesque activation with a LeakyReLU (refer back to equation~\ref{eq:neural_spacetime}), which we find aids optimization and accelerates learning. As we can see in Figure~\ref{fig:partialorder_activation}, this results in a clear difference in the behavior of the function on the negative and positive axes of the input. By changing the exponential coefficient, which is learned by gradient descent, the network can substantially alter the behavior of the activation. As before, for exponential coefficients smaller than 1, this activation can lead to instabilities for small input values. For $s=1$ we recover LeakyReLU.

Based on these observation we restrict all activations to learn $s$ and $l$ coefficients greater than 1 only. These allows us to use fractalesque activations while restristing the functions to behave more similarly to activations used in the literature such as ReLU, LeakyReLU or SiLU functions.

\subsection{Algorithmic Description}
\label{Algorithmic Description}

We complement the mathematical descriptions of neural spacetimes in Section~\ref{Neural Spacetimes} with the following algorithm summaries. As discussed in the main text, let $N \in \mathbb{N}_+$ be the dimensionality of the feature vectors $x_u, x_v \in \mathbb{R}^{N}$ associated with nodes $u, v \in V$, where $V$ represents the set of nodes of the digraph $G_D = (E_D, V, W_D)$. Fix a space dimension $D \in \mathbb{N}_+$ and a time dimension $T \in \mathbb{N}_+$. A \textit{neural spacetime} is a learnable triplet $\mathcal{S} = (\mathcal{E}, \mathcal{D}, \mathcal{T})$, where:

\begin{itemize}
    \item $\mathcal{E} : \mathbb{R}^{N} \rightarrow \mathbb{R}^{D+T}$ is an \textit{encoder network} (MLP),
    \item $\mathcal{D} : \mathbb{R}^{D+T} \times \mathbb{R}^{D+T} \to [0, \infty)$ is a learnable \textit{quasi-metric} on $\mathbb{R}^{D}$, and
    \item $\mathcal{T} : \mathbb{R}^{D+T} \to \mathbb{R}^{T}$ is a learnable \textit{partial order} on $\mathbb{R}^{T}$.
\end{itemize}

In particular, we implement $\mathcal{D}$ and $\mathcal{T}$ as two distinct artificial neural networks inspired by neural snowflakes, which process the space and time dimensions of encoded feature vectors $\hat{x}_u, \hat{x}_v \eqdef \mathcal{E}(x_u), \mathcal{E}(x_v) \in \mathbb{R}^{D+T}$ in parallel.

\begin{algorithm}[hbpt!]
\caption{Neural (quasi-)metric, $\mathcal{D}$}\label{alg:neural_snowflake2}
\begin{algorithmic}
\SetAlgoLined
\Require $\hat{x}_u,\hat{x}_v$ \Comment{Two Encoded Node Features Vectors} \\
\Return $s_{uv}$ \Comment{Distance}

\DontPrintSemicolon
    \State $u_0 \gets |
            \sigma_{s_0,l_0}\bullet (\hat{x}_u)_{1:D}
        - 
            \sigma_{s_0,l_0}\bullet (\hat{x}_v)_{1:D} 
    |$
    \State \SetKwBlock{ForParallel}{For $j = 1$ to $J$}{end}
    \ForParallel{\State 
        \State $u_j \gets \mathsf{W}_j\sigma_{s_j,l_j}\bullet(u_{j-1})$
    }
    \State $s_{uv} \gets u_J$ 
\end{algorithmic}   
\end{algorithm}

\begin{algorithm}[hbpt!]
\caption{Neural Partial Order, $\mathcal{T}$}\label{alg:neural_po}
\begin{algorithmic}
\SetAlgoLined
\Require $\hat{x}_u$ \Comment{Encoded Node Feature Vector} \\
\Return $t_{u}$ \Comment{Temporal Encoding}

\DontPrintSemicolon
    \State $z_0 \gets (\hat{x}_u)_{D+1:D+T}$
    \State \SetKwBlock{ForParallel}{For $j = 1$ to $J$}{end}
    \ForParallel{\State 
        \State $z_j \gets \mathsf{V}_j\sigma_{\tilde{s}_j}\circ \operatorname{LeakyReLU}\bullet(z_{j-1})+b_j$ 
    }
    \State $t_{u} \gets z_J$
\end{algorithmic}   
\end{algorithm}

\begin{algorithm}[hbpt!]
\caption{Neural Spacetime, $\mathcal{S} = (\mathcal{E}, \mathcal{D}, \mathcal{T})$ (Forward Pass)}\label{alg:neural_spacetime}
\begin{algorithmic}
\SetAlgoLined
\Require $x_u,x_v$ \Comment{Two Node Features Vectors} \\
\Return $s_{uv},t_u,t_v$ \Comment{Distance and Temporal Encodings}

\DontPrintSemicolon
    \State $\hat{x}_u,\hat{x}_v \gets \mathcal{E}(x_u),\mathcal{E}(x_v)$ \Comment{Enconde Feature Vectors}
    \State $s_{uv} \gets \mathcal{D}(\hat{x}_u,\hat{x}_v)$ \Comment{Compute Distance}
    \State $t_{u},t_{v} \gets \mathcal{T}(\hat{x}_u),\mathcal{T}(\hat{x}_v)$ \Comment{Apply Temporal Encoding}
\end{algorithmic}   
\end{algorithm}

\subsection{Causality Loss and Time Embedding}
\label{Causal Loss and Time Embedding}

In Section~\ref{Computational Implementation} we introduced the procedure used to optimize our neural spacetime model. The metric embedding in space is relatively simple an analogous to previous works~\citep{de2023neural,kratsios2023capacity}. In this appendix we expand on the computational approach used for embedding causality and optimizing the time embedding.

The causality loss is given in the main text by:

\begin{equation*}
    \mathcal{L}^{C}_{uv}\eqdef \, 
        A_{uv}\mathcal{L}^{*}_C
        \Big(
        \sum_{t=1}^T\,
                \operatorname{SteepSigmoid}
                (
                    \mathcal{T}(\hat{x}_u)_t-
                    \mathcal{T}(\hat{x}_v)_t
                )
        \Big),
\end{equation*}

with $\operatorname{SteepSigmoid}(x) = \frac{1}{1+e^{-10x}}$ (the value 10 was found experimentally, making the function too steep can lead to training instabilities), and where $\mathcal{L}^{*}_C$ slightly modifies the expression (equation~\ref{causality_loss_final}):

\begin{equation*}
        \sum_{t=1}^T\,
                \operatorname{SteepSigmoid}
                (
                    \mathcal{T}(\hat{x}_u)_t-
                    \mathcal{T}(\hat{x}_v)_t
                ).
\end{equation*}

For the sake of understanding, let us focus on the equation above first. 

\[
\operatorname{SteepSigmoid}(x) \to 0 \quad \text{as} \quad x \to -\infty.
\]

In particular, \(\operatorname{SteepSigmoid}(x)\) tends to 0 faster than \(\operatorname{Sigmoid}(x)\) as \(x \to -\infty\) (see Figure~\ref{fig:steepsigmoid}). In asymptotic notation:

\[
\lim_{x \to -\infty} \frac{\operatorname{SteepSigmoid}(x)}{\operatorname{Sigmoid}(x)} = 0.
\]

Importantly, $\operatorname{SteepSigmoid} \left( \mathcal{T}(\hat{x}_u)_t - \mathcal{T}(\hat{x}_v)_t \right) \approx 0 \forall \left( \mathcal{T}(\hat{x}_u)_t - \mathcal{T}(\hat{x}_v)_t \right) < 0$ even if $\left| \mathcal{T}(\hat{x}_u)_t - \mathcal{T}(\hat{x}_v)_t \right|$ is small. As discussed in Section~\ref{Computational Implementation}, the loss for two causally connected events \(u \preccurlyeq v\) in the first neighborhood of each other (\(A_{uv}=1\)) is minimized when \(\hat{x}_u \lesssim^{\mathcal{T}} \hat{x}_v\) is satisfied~(equation~\ref{eq:spacetime_order}). 

In our mathematical formulation, the exact magnitude of the negative difference is not important. At first glance, a more straightforward way of imposing this condition is to use $\operatorname{ReLU}$ activation functions instead. However, these are discontinuous and lead to unstable training, which we verified experimentally. Hence, we want a continuous function that is easy to optimize and reaches zero quickly as the partial order is satisfied. Remember that we are optimizing the distance loss for the spatial component of the spacetime embedding and the time embedding using the causality loss simultaneously. If the causality loss does not reach zero quickly enough, we will be wasting computation trying to make the difference between partial embeddings more negative for no reason and, as a consequence, failing to further optimize the metric distortion of the embedding when we have already satisfied causality.

Finally, to ensure that the causality loss provides good gradients while being zero as soon as the partial order is satisfied we use the following expression, which we converged to empirically and that satisfies all our requirements:

\begin{equation}
    \mathcal{L}^C_{uv}\eqdef \, 
        A_{uv}
        \Big(
        \sum_{t=1}^T\,
                \operatorname{SteepSigmoid}
                (
                    \mathcal{T}(\hat{x}_u)_t-
                    \mathcal{T}(\hat{x}_v)_t
                )
        \Big)\times(1-\operatorname{TotalCorrect}),
        \label{causality_loss_final}
\end{equation}

where the second term is not differentiable but makes the loss zero when all the directed edges have been correctly embedded. To compute this we can indeed use the $\operatorname{ReLU}$ function:

\begin{equation*}
    \operatorname{TotalCorrect}\eqdef \, 
        \frac{\sum_{u=u_1}^{u_M} \sum_{v=v_1}^{v_M} A_{uv} \cdot \mathbb{I} \left( \sum_{t=1}^T \operatorname{ReLU}(\mathcal{T}(\hat{x}_u)_t - \mathcal{T}(\hat{x}_v)_t) = 0 \right)}{|E_D|},
\end{equation*}

where the expression in the enumerator counts how many times the function evaluates to zero for connected nodes.

\begin{figure}[hbpt!]
\centering
\centerline{\includegraphics[width=0.5\linewidth]{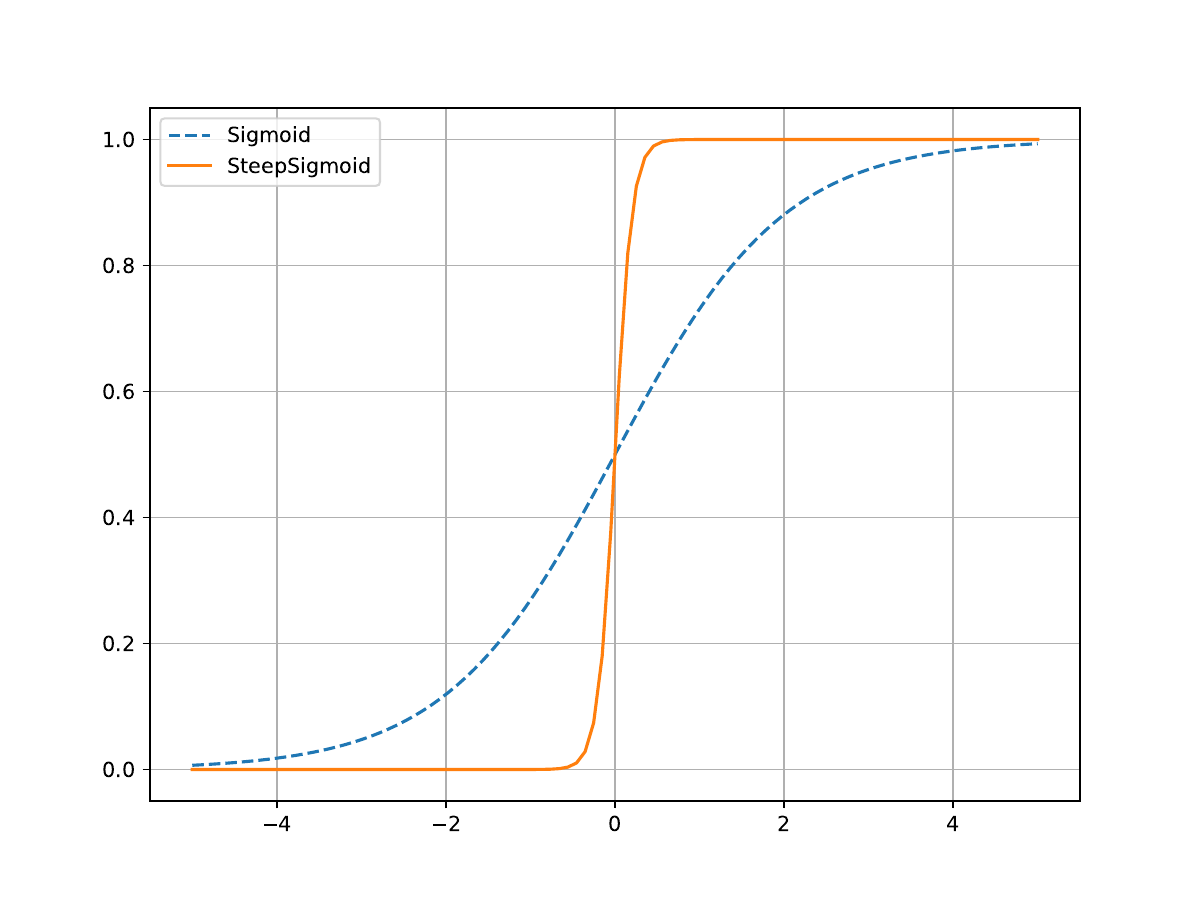}}
\caption{Comparison between (standard) Sigmoid and SteepSigmoid function.}
\label{fig:steepsigmoid}
\end{figure}

\subsection{How to Theoretically Enforce the Global Causal Geometry}

In Section~\ref{Computational Implementation}, we discuss that for practical purposes, we only focus on encoding local geometry, which in turn optimizes the causal geometry of the neural spacetime implicitly for causally connected events due to transitivity. For anti-chains, the \textit{no causality} condition will be satisfied with high probability, especially as the number of time dimensions increases. For completeness, here we provide a loss function to enforce the global causal geometry of DAGs if the causal connectivity in the input geometry was easily computable. Note that for large graphs, this becomes very computationally expensive since we need to verify that for anti-chains, there is no path between those two given nodes.
\begin{equation}
    \mathcal{L}^{C}_{uv}\eqdef \, 
        \underbrace{
        A^{\prime}_{uv}
        \Big(
        \sum_{t=1}^T\,
                \operatorname{ReLU}
                (
                    \mathcal{T}(\hat{x}_u)_t-
                    \mathcal{T}(\hat{x}_v)_t
                )
        \Big)
        }_{\text{Check: Causality}}
            +
        \underbrace{
                B_{uv}
                \big(
                   \min_{t} \big(
                        \operatorname{ReLU}
                        (
                           \varepsilon +  \mathcal{T}(\hat{x}_v)_t
                            -
                            \mathcal{T}(\hat{x}_u)_t
                        )
                    \big)
                \big)
        }_{\text{Check: No Causality}},
        \label{causality_loss_A}
\end{equation}

where $\mathbf{B}\eqdef (I_{(u \cancel{\preccurlyeq} v \land v \cancel{\preccurlyeq} u)})_{uv}$ with entries $B_{uv}$ for two events $u$ and $v$, which can be deduced from $\mathbf{A}^{\prime}$, and where $\varepsilon > 0$ is a margin. $A^{\prime}_{uv}$ in this case is not the adjacency matrix, but a mask which is $1$ for causally connected nodes or events. Note that here unlike in the main text we are not restricting causality to the first hop. Also, $\mathbf{B}$ is symmetric: $B_{uv}=B_{vu}$.

The loss above enforces causal connectivity and lack of causal connectivity.  Both $\hat{x}_u \cancel{\lesssim}^{\mathcal{T}} \hat{x}_v$ and $\hat{x}_v \cancel{\lesssim}^{\mathcal{T}} \hat{x}_u$ must be satisfied by our representation for anti-chains. $T>1$ is required, otherwise it is not possible to avoid causality in at least one direction. In the simplest case, when $T=2$ and $t\in\{t_1,t_2\}$ the following must be satisfied if we associate $\hat{x}_u$ with $u$ and $\hat{x}_v$ with $v$: $\mathcal{T}(\hat{x}_v)_{t_1}>\mathcal{T}(\hat{x}_u)_{t_1}$ and $\mathcal{T}(\hat{x}_u)_{t_2}>\mathcal{T}(\hat{x}_v)_{t_2}$, or $\mathcal{T}(\hat{x}_v)_{t_2}>\mathcal{T}(\hat{x}_u)_{t_2}$ and $\mathcal{T}(\hat{x}_u)_{t_1}>\mathcal{T}(\hat{x}_v)_{t_1}$. To satisfy the no causality condition when $T>2$, as long as one of the time dimensions breaks $\lesssim^{\mathcal{T}}$ in equation~\ref{eq:spacetime_order} it suffices. $\varepsilon$ would be included in this hypothetical optimization objective to avoid the collapse of the encoder network to a single embedding. Lastly, we would like to highlight that this causality loss should be optimized alongside a distance loss based on the graph geodesic distance instead of the local distance. Although this is a theoretical exercise, in practice it may be best to substitute $\operatorname{ReLU}$ with $\operatorname{SteepSigmoid}$ for optimization purposes as in Section~\ref{Causal Loss and Time Embedding}.

\subsection{Potential Classical Algorithms to Compute Space and Time Dimensions}

Theorem~\ref{theorem:Embedding_Lemma} shows that the number of time dimensions should be at least equal to the width of the poset being embedded by the NST. In theory, the width of a poset could be computed in polynomial time~\citep{FelsnerRaghavanSpinrad_RecPosetWidth_2003_Order}. Furthermore, for posets with a width of less than $4$, there are algorithms with a runtime of $\mathcal{O}(n\,\log(n))$ (sub-quadratic time) that can detect if the poset has a width of at most $4$. Nevertheless this theoretical results become impractical for large graphs. Similarly, Theorem~\ref{theorem:Embedding_Lemma} shows that the required number of spatial dimensions needed to obtain a faithful low-distortion metric embedding of the graph underlying a poset is determined by its doubling constant. It is known that an exact computation of this doubling dimension is generally NP-hard~\cite{gottlieb2013proximity}. Alternatives algorithms to obtain accurate upper bounds (up to an absolute constant factor) are also discussed in the literature~\citep{har2005fast}, but these are in general impractical for large DAGs.

\subsection{Weight Initialization and Handling the Absence of Normalization}

Weight initialization is important to avoid initial network predictions from shrinking or exploding. Networks with fractalesque activations are particularly susceptible to this problem, as they use activations with exponential coefficients. This is not such a prevalent issue if we use the well-established ReLU activation or its variants (ELU, LeakyReLU, etc.), since this activations only zero out part of the axis and leave the rest of the transformation linear.

In the original neural snowflake model (equation~\ref{eq:neural_snowflake}), the matrices $A$, $B$, and $C$ are initialized with weights sampled from a uniform distribution spanning from 0 to 1, which are then normalized according to the dimensions of each matrix. Specifically, for matrix $A$, the weights are drawn from a distribution ranging between 0 and $1/(d_{A1}d_{A2})$, where $d_{A1}$ represents the number of rows and $d_{A2}$ represents the number of columns of matrix $A$. We follow the same approach to initialize $W$ in equation~\ref{eq:neural_snowflake_v2}. While better initialization alternatives may be possible, it was noted previously in~\citep{de2023neural} that drawing the weights from a normal Gaussian or using Xavier initialization can lead to instabilities and exploding numbers in the forward pass. We also observe that this initialization technique is important to avoid the metric prediction becoming either too small or too large when making the network deeper since we cannot naively apply normalization layers inside our parametric representation of the metric. Additionally, we find that neural (quasi-)metrics are better than the original neural snowflakes in maintaining the same order of magnitude in its distance prediction as the number of layers is increased.

\subsection{Neural Snowflakes vs Neural (Quasi-)Metrics}
\label{Neural Snowflake V1 vs v2}

Next, we discuss similarities and differences between the original neural snowflake and neural (quasi-)metrics. We first provide an algorithmic description of neural snowflakes, which can be used to compare against that presented in Appendix~\ref{Algorithmic Description} for the neural (quasi-)metric model.

\begin{algorithm}
\centering
\caption{Neural Snowflake}\label{alg:cap_neural_snowflake}
\begin{algorithmic}
\SetAlgoLined
\Require $\hat{x}_u,\hat{x}_v$ \Comment{Two Node Features Vectors} \\
\Return $s_{ij}$ \Comment{Distance similarity measure}

\DontPrintSemicolon
    \State $u_0 \gets ||\hat{x}_u-\hat{x}_v||_{2}$\Comment{Euclidean Distance}
    \State \SetKwBlock{ForParallel}{For $j = 1$ to $J$}{end}
    \ForParallel{\State 
        \State $\hat{u}_{j-1} \gets \mathsf{A}_ju_{j-1}$ \Comment{Linear Projection}
        \State $\Sigma_j \gets \psi\hat{u}_{j-1}$ \Comment{Snowflake Activation}
        \State $u_{j} \gets \mathsf{B}_{j}\Sigma_{j}\mathsf{C}_{j}$ \Comment{Linear Projections}
    }
    \State $s_{ij} \gets u_{J}^{1+|p|}$ \Comment{Quasi-metric}
\end{algorithmic}   
\end{algorithm}

\textbf{Embedding Experiments.} In terms of experiments, generally, we find that neural (quasi-)metrics have a smoother optimization landscape. They tend to converge faster for a low number of epochs, and in the case of snowflakes, we experimentally observed sudden drops in MSE loss, similar to those reported in Appendix I of~\citep{de2023neural}. As shown in Section~\ref{Tree Embedding}, the neural (quasi-)metric construction performs better at tree embedding. What we observed is that if trained for a sufficient duration, snowflakes will eventually find a good local minimum. On the other hand, our new model seems more reliable in terms of optimization and achieves better performance for lower epochs. 

\textbf{Optimizing exponents.} In Equation~\ref{eq:activation}, $a$ and $b$ are in principle presented as learnable components via gradient descent. However, in practice, exponents can be unstable to optimize via backprop~\citep{de2023neural}. The original neural snowflake fixes $a=b=1$, to avoid optimization issues. Nevertheless note that $p$, which controls the quasi-metric in Equation~\ref{eq:neural_snowflake} is indeed optimized. Optimizing exponents closer to the final computations of the network is in general more stable. We hypothesize that this could be due to the fact that there are less chain-rule multiplications, reducing the likelihood of gradient explosion. We observe similar behaviour for neural (quasi-)metrics, note that our activation function relies on learning the exponents (equation~\ref{sn_activation}). In general we find that we can learn the exponents for all layers in this configuration. But we also experimented with trying to approximate fractal metrics and observed that under some extreme cases it may be better to only optimize the exponents of the last layer for stability.

\section{Experimental Details}
\label{Experimental Details}

In this appendix we expand on the experimental procedure used to obtain the results presented in the main text in Section~\ref{ExperimentalValidation}. {\color{black} Additionally, we present more validation experiments, including undirected tree embeddings, synthetic DAG embeddings with different metrics and varying levels of connectivity, and large DAG citation network embeddings.}

\subsection{Undirected Tree Embedding}
\label{Tree Embedding}

\textbf{Overview of Results for Preliminary Undirected Embeddings.} Before proceeding to DAGs, following~\cite{kratsios2023capacity}, we first test the spatial component of the NST model, on embedding trees and compare it to Euclidean embeddings implemented by an MLP and to hyperbolic embeddings via HNNs~\cite{ganea2018hyperbolic}. We use the same training procedure as in~\cite{kratsios2023capacity} and perform experiments for both binary and ternary trees. In the case of the snowflake geometry, inputs are mapped using an MLP, and the metric induced by the tree structure is learned by a neural (quasi-)metric with 4 layers operating on the embedding space dimensionality. We measure distortion as the ratio between true and predicted distances. Consistent with the theoretical results in~\citep{Gupta_2000_ACM__QuantitiativefinitedimembeddingsTrees}, we find that in general, the difference in average distortion between metric spaces is not as pronounced as the maximum distortion since most metric spaces are "treelike" on average. Indeed, we find that our proposed model better restricts the maximum distortion and that it is able to achieve an average distortion of 1.00 even with an embedding space of dimension 2 only. For completeness we also test the neural snowflake model in the same task, which although it is able to minimize the MSE loss, the (max) distortion is very high. Additional discussion comparing neural snowflakes and neural (quasi-)metrics can be found in Appendix~\ref{Neural Snowflake V1 vs v2}.

\begin{table}[htbp!]
\caption{Tree Embedding distortion leveraging Euclidean, Hyperbolic, Neural Snowflake and Neural (Quasi-)metric spaces.} 
\centering
\scalebox{0.77}{
\begin{tabular}{ccccccc}
\toprule
\textbf{Tree Type}                & \textbf{Embedding dim}      & \textbf{Geometry}     & \textbf{Avg Distortion} & \textbf{St dev Distortion} & \textbf{Max Distortion} & \textbf{MSE}   \\ \midrule
\multirow{8}{*}{Binary}  & \multirow{4}{*}{2} & Euclidean    & 1.66           & 3.53              & 1224.17        & 26.27 \\ \cline{3-7} 
                         &                    & Hyperbolic   & 1.04           & 1.61              & 402.52         & 12.76 \\ \cline{3-7} 
                         &                    & Neural (Quasi-)Metric & 1.00           & 0.23              & 3.09           & 10.09 \\ \cline{3-7} 
                         &                    & Snowflake & 1.01           & 3.01              & 2261.35        & 9.47  \\ \cline{2-7} 
                         & \multirow{4}{*}{4} & Euclidean    & 1.15           & 0.68              & 159.74         & 10.19 \\ \cline{3-7} 
                         &                    & Hyperbolic   & 1.00           & 0.17              & 11.03          & 4.14  \\ \cline{3-7} 
                         &                    & Neural (Quasi-)Metric & 1.00           & 0.19              & 3.87           & 4.88  \\ \cline{3-7} 
                         &                    & Snowflake & 1.00           & 0.65              & 539.71         & 5.92  \\ \hline
\multirow{8}{*}{Ternary} & \multirow{4}{*}{2} & Euclidean    & 1.69           & 3.17              & 602.96         & 11.55 \\ \cline{3-7} 
                         &                    & Hyperbolic   & 1.09           & 1.23              & 135.81         & 5.55  \\ \cline{3-7} 
                         &                    & Neural (Quasi-)Metric & 1.00           & 0.20              & 6.33           & 3.34  \\ \cline{3-7} 
                         &                    & Snowflake & 1.01           & 4.78              & 4017.99        & 3.64  \\ \cline{2-7} 
                         & \multirow{4}{*}{4} & Euclidean    & 1.17           & 0.82              & 185.73         & 4.82  \\ \cline{3-7} 
                         &                    & Hyperbolic   & 1.00           & 0.15              & 16.56          & 1.37  \\ \cline{3-7} 
                         &                    & Neural (Quasi-)Metric & 1.00           & 0.16              & 4.19           & 1.72  \\ \cline{3-7} 
                         &                    & Snowflake & 1.00           & 0.47              & 237.88         & 2.15  \\ \bottomrule
\end{tabular}}
\end{table}

\textbf{Experimental Procedure for Undirected Tree Embeddings.} Neural spacetimes provide guarantees in terms of global and local embeddings. In general, from a computational perspective, local embedding is more tractable. However, in these particular preliminary experiments for metric embedding, similar to~\citep{kratsios2023capacity}, we embed the full undirected tree geometry. The distance between any two nodes $u,v\in V$ simplifies to the usual shortest path distance on an unweighted graph
\begin{equation}
\label{eq:definition_shortest_path_distance_on_unweighted_graph}
       d_{\mathcal{T}}(u,v)
    =
        \inf\big\{i\, : \,\exists\,\{v, v_1\},\dots\{v_{i-1},u\}\in \mathcal{E}\big\}.
\end{equation}

Using the expression above, we compute the tree induced graph geodesic between nodes. We work with both binary and ternary trees with all edge weights being equal to 1. To find all-pairs shortest path lengths we use Floyd’s algorithm~\citep{floyd}. Note that Floyd's algorithm has a running time complexity of $\mathcal{O}(|V|^3)$ and running space of $\mathcal{O}(|V|^2)$. This makes it not scalable for large graphs.

The networks receive the $x_u$ and $y_u$ coordinates of the graph nodes in $\mathbb{R}^{2}$ and are tasked with mapping them to a new embedding space which must preserve the distance. An algorithm is employed to generate input coordinates, mimicking a force-directed layout of the tree. In this simulation, edges act as springs, drawing nodes closer, while nodes behave as objects with repelling forces, similar to anti-gravity effects. This process continues iteratively until the positions reach equilibrium. The algorithm can be replicated using the \texttt{NetworkX} library and the \texttt{spring layout} for the graph.

The networks need to find an appropriate way to gauge the distance. We adjust the network parameters by computing the Mean Squared Error (MSE) loss, which compares the actual distance between nodes given by the tree topology, \(d_{\text{true}}\), to the distance predicted by the network mappings, \(d_{\text{pred}}\):

\[ \text{Loss} = \text{MSE}(d_{\text{true}},d_{\text{pred}}). \]

For the Multi-Layer Perceptron (MLP) baseline, the predicted distance is calculated as:

\[ d_{\text{pred}} = \|\text{MLP}(x_u,y_u)-\text{MLP}(x_v,y_v)\|_2, \]

where \((x_u,y_u)\) and \((x_v,y_v)\) represent the coordinates in \(\mathbb{R}^2\) of a synthetically generated tree. On the other hand, for the HNN, we utilize the hyperbolic distance between embeddings with fixed curvature of -1:

\[ d_{\text{pred}} = d_{-1}(\text{HNN}(x_u,y_u),\text{HNN}(x_v,y_v)). \]

In the case of HNNs, we employ the hyperboloid model, incorporating an exponential map at the pole to map the representations to hyperbolic space, as detailed in~\cite{borde2023latent}. Notably, we do not even need any hyperbolic biases to discern the performance disparity between MLP and HNN models~\citep{kratsios2023capacity}.

Note that in the previous two baselines, the geometry of the space is effectively fixed, and we only optimize the location of the node embeddings in the manifold. In the case of neural snowflakes, the metric is also parametric and optimized via gradient descent. This means that we change both the location of events in the manifold and the geometry of the manifold itself during optimization. Following~\citep{de2023neural}, the neural (quasi-)metric model also leverages the encoder $\mathcal{E}$ which is implemented as an MLP (same as in the Euclidean baseline). The predicted distance is

\[ d_{\text{pred}} = d_{NQM}(\text{MLP}(x_u,y_u),\text{MLP}(x_v,y_v)), \]

where $d_{NQM}$ is parametrized by a neural (quasi-)metric network. Importantly, feature vectors are passed independently to the metric learning network (unlike in~\citep{de2023neural} where an intermediate Euclidean distance representation is fed instead).

In terms of hyperparameters, we generate trees with 1000 nodes and embed them into features of dimensionality 2 and 4 in different experiments. We employ a batch size of 10,000 to learn the distances, train for 10 epochs with a learning rate of $3\times10^{-3}$ and AdamW optimizer, and apply a max gradient norm of 1. All encoders have a total of 10 hidden layers with 100 neurons and a final projection layer to the embedding dimension. The neural (quasi-)metric has a total of 4 layers, with a hidden layer dimension equal to the event embedding dimensions, that is, either 2 or 4, and the last layer projects the representation to a scalar, i.e., the predicted distance. We use the same configuration for the snowflakes.

\subsection{Metric DAG Embedding}
\label{Metric DAG Embedding}

In this appendix, we provide additional details on the synthetic weighted DAG embedding experiments described in Section~\ref{ExperimentalValidation}, focusing on local embedding. Similar to the previous setup, the networks receive the $x_u$ and $y_u$ coordinates of the graph nodes in $\mathbb{R}^{2}$ and are tasked with mapping these coordinates to a new embedding space that preserves both the distance and directionality of the generated DAG. We train the embedding following Section~\ref{Computational Implementation}. As discussed in the main text, we optimize the spacetime embedding by running gradient descent based on both a distance and a causality loss. The procedure used to train the neural (quasi-)metric in terms of the spatial embedding is identical to that described in Appendix~\ref{Tree Embedding}.

We use the \texttt{Graphviz dot layout} algorithm to generate the input coordinates, positioning the nodes from top to bottom according to the directionality of the DAG. This process can be replicated using the \texttt{NetworkX} library. Additionally, the coordinates are normalized to lie between 0 and 1. The DAG is constructed from a random graph where each node has a 0.9 probability of being connected to other nodes, defining the directionality and connectivity of the DAG (see Figure~\ref{fig:dag_example}). The metric distance is then calculated based on the spatial coordinates of the nodes after they have been embedded in the plane. The specific distance functions used in the experiments are listed in Table~\ref{tab:syntheticDAG} and are based on~\citep{de2023neural}. 

In terms of hyperparameters, we generate DAGs with 50 nodes and embed them into spacetimes of $D=T=2, 4, 10$. The 50 nodes induce $50^2$ possible distances. We do not apply mini-batching; we optimize all of them at the same time. We train for 5,000 epochs with a learning rate of $10^{-4}$ using the AdamW optimizer, and apply a max gradient norm of 1. The initial encoders have a total of 10 hidden layers with 100 neurons each and a final projection layer to the embedding dimension $D+T$. The neural (quasi-)metric and the neural partial order have a total of 4 layers, each with a hidden layer dimension of 10.

\begin{figure}[hbpt!]
\centering
\centerline{\includegraphics[width=0.35\linewidth, trim={40 40 40 40}, clip]{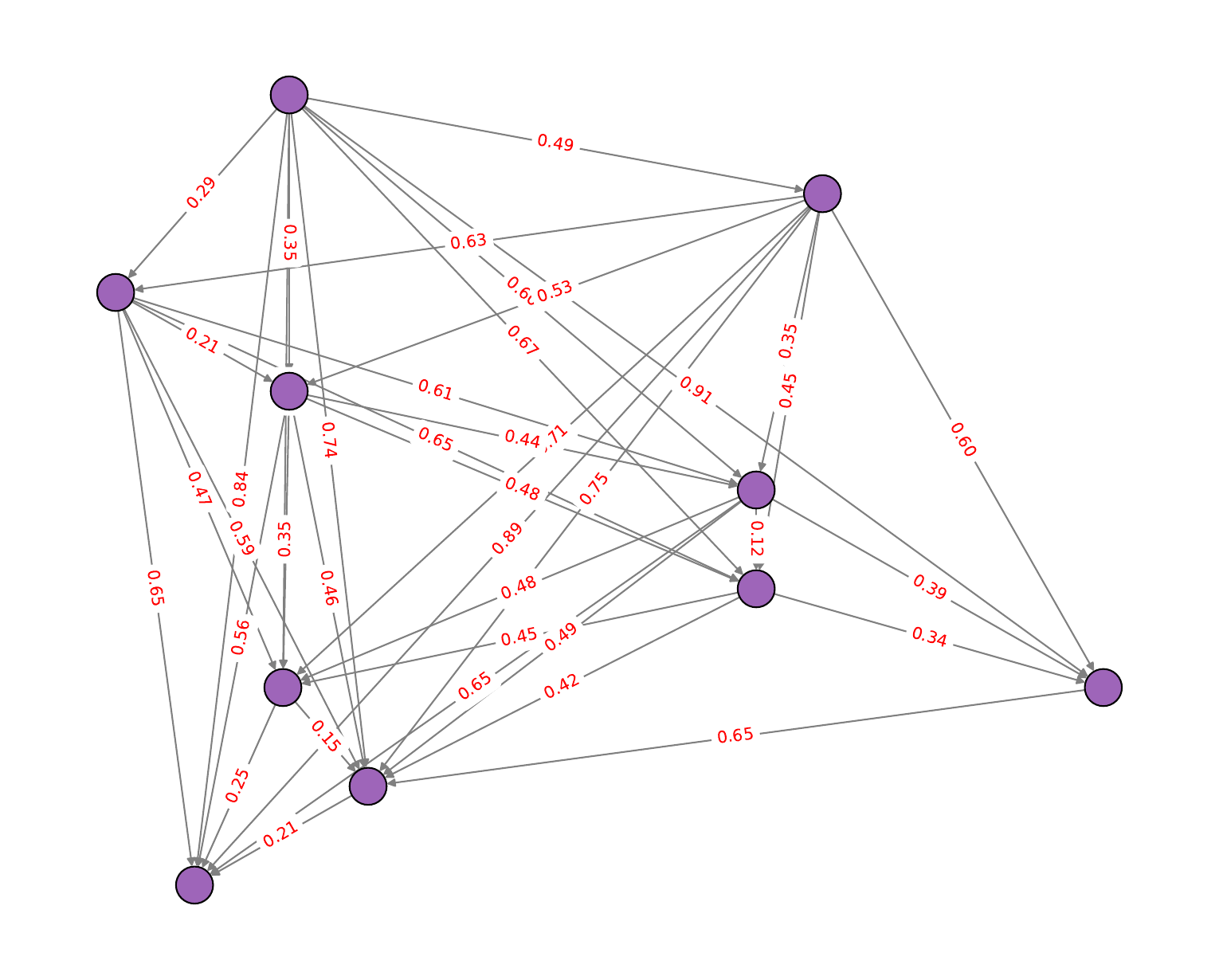}}
\caption{Example DAG with 10 nodes and using the first metric in Table~\ref{tab:syntheticDAG}.}
\label{fig:dag_example}
\end{figure}

\subsubsection{Varying Graph Connectivity}

{\color{black} We conduct additional experiments to analyze the effect of graph topology on the embedding results. Specifically, we fix the synthetic metric to $\|\mathbf{x} - \mathbf{y}\|^{0.5}\log(1+\|\mathbf{x} - \mathbf{y}\|)^{0.5}$ and experiment with graphs having varying connection probabilities, which control the sparsity of the random DAGs, i.e., their connectivity. Across all geometries, we observe that as the DAG becomes more sparsified, its directionality is captured less accurately. However, it also becomes easier to achieve lower distortion, as expected, due to the reduction in the number of edges to encode, see Table~\ref{tab:dag_sparsified}.

\begin{table}[hbtp!]
\caption{Embedding results for arxiv citation network.}
\centering
\scriptsize
\begin{tabular}{cc | ccc | ccc | ccc}\toprule
& & \multicolumn{3}{c}{Neural Spacetime} & \multicolumn{3}{c}{Minkowski} & \multicolumn{3}{c}{De Sitter} \\
\textbf{Connectivity}            & \rotatebox{90}{\textbf{Embed. Dim}} & \rotatebox{90}{\textbf{Distortion}} & \rotatebox{90}{\textbf{Max Distortion}} & \rotatebox{90}{\textbf{Directionality}} & \rotatebox{90}{\textbf{Distortion}} & \rotatebox{90}{\textbf{Max Distortion}} & \rotatebox{90}{\textbf{Directionality}}  & \rotatebox{90}{\textbf{Distortion}} & \rotatebox{90}{\textbf{Max Distortion}} & \rotatebox{90}{\textbf{Directionality}}   \\ \toprule

 \multirow{3}{*}{$90\%$} & 2             &   1.09  $\pm$    0.24                              &      3.18        &   1.00       &   2.86  $\pm$ 5.22                                 &       72.66       &     0.99        &   $\infty$   $\pm$                                  &    $\infty$            &      0.99 \\ \cline{2-11}
   & 4             &  1.02   $\pm$   0.06                              &       1.51       &     1.00     &   1.70  $\pm$ 2.77                                  &          71.09    &  0.99           &   -4.33   $\pm$  816.47                                &      10,235.71          & 0.99 \\  \cline{2-11} 
  & 10             &    1.00 $\pm$   0.03                            &          1.24    &    1.00      &    1.21 $\pm$ 1.33                                 &     35.58         &    0.99         &    288.17  $\pm$  9,794.97                                &     324,027.5           & 0.99  \\ \midrule
\multirow{3}{*}{$50\%$} & 2             &  1.14   $\pm$     0.42                             &          3.94    &     1.00     &     3.35 $\pm$ 10.44                                  &       215.65       &     1.00        &    -238.02  $\pm$    5,432.33                              &     13,879.30           &    0.99   \\ \cline{2-11}
   & 4             &    1.01 $\pm$   0.07                              &       1.38       &      1.00    &   1.27  $\pm$ 0.82                                 &      12.45        &     1.00        &    -12.61  $\pm$ 2,253.31                                 &    43,124.32            &  0.99 \\  \cline{2-11} 
  & 10             &  1.00   $\pm$ 0.18                              &       1.06       &      1.00    &   1.02  $\pm$   0.19                               &      4.03        &  1.00           &  -53.30    $\pm$     1,600.69                             &       18,756.20         &  0.99 \\ 
 
\midrule
\multirow{3}{*}{$10\%$} & 2             &   1.00  $\pm$   0.30                               &          2.65    &    0.98      &     1.01 $\pm$ 2.41                                 &   15.96           &   0.93          &   -1.18   $\pm$     19.43                             &      69.29          &   0.92    \\ \cline{2-11}
  & 4             &    1.00 $\pm$    0.01                             &    1.01          &       1.00   &    1.00 $\pm$ 0.02                                 &        1.03      &  0.96           &     1.09 $\pm$  14.32                                &       130.62         & 0.91 \\  \cline{2-11} 
  & 10             &   1.00  $\pm$ 0.00                               &     1.00         &  0.98        &    1.00 $\pm$ 0.02                                 &     1.08         &  0.96           &     2.39 $\pm$   20.30                               &    176.60            & 0.93  \\ 
\bottomrule
\end{tabular}
\label{tab:dag_sparsified}
\end{table}}

\subsection{Real-World Network Embedding}
\label{Real-World Network Embedding}

\textbf{WebKB datasets.} In these datasets~\citep{musae}, nodes represent web pages, while edges denote hyperlinks between them. Each node's features are given by the bag-of-words representation of its web page. We summarize the datasets in Table~\ref{tab:webkb}.

\begin{table}[ht]
    \centering
    \caption{Statistics of WebKB datasets}
    \scalebox{0.7}{
    \begin{tabular}{lcc}
        \toprule
        \textbf{Name} & \textbf{\#nodes} & \textbf{\#edges}\\
        \midrule
        Cornell    & 183 & 298  \\
        Texas      & 183 & 325  \\
        Wisconsin  & 251 & 515  \\
        \bottomrule
    \end{tabular}}
    \label{tab:webkb}
\end{table}

As discussed in the main text in Section~\ref{ExperimentalValidation}, these datasets are not pure DAGs. As a preprocessing step, we remove self-loops. However, we do not remove cycles, nor do we further process edge directionality. This explains why in Table~\ref{tab:syntheticDAG}, it is not possible to get a perfect embedding in time. The local metric between nodes is computed as the cosine similarity between the node feature embeddings (since the datasets do not have edge weights). Note that here we do not embed graphs in the plane; we already have bag-of-words features that can be passed to the encoder, unlike in the synthetic DAG experiments. In terms of training, the procedure is the same as in Appendix~\ref{Metric DAG Embedding} and using a learning rate of $10^{-4}$.

\vspace{-5pt}
\begin{table}[H]
    \centering
    \caption{Statistics of Dream5 datasets}
    \scalebox{0.7}{
    \begin{tabular}{lcc}
        \toprule
        \textbf{Name} & \textbf{\#nodes} & \textbf{\#edges}\\
        \midrule
        In silico    & 1,565 & 4,012  \\
        Escherichia coli      & 1,081 & 2,066  \\
        Saccharomyces cerevisiae  & 1,994 & 3,940  \\
        \bottomrule
    \end{tabular}}
    \label{tab:dream5}
\end{table}

\textbf{Dream5 datasets.} The Dream5 datasets~\citep{genes} were originally introduced as part of the Network Inference Challenge, where participants were provided with four microarray compendia and tasked with deducing the structure of the underlying transcriptional (gene) regulatory networks. Each compendium encompasses numerous microarray experiments, spanning various genetic, drug, and environmental alterations (or simulations thereof, in the case of the in-silico network). In this work, instead of focusing on network structure prediction, we test our neural spacetime algorithm in the context of graph embedding, following the spacetime representation literature~\citep{law2023spacetime,Sim2021DirectedGE}. We embed the graph topology in $\mathbb{R}^2$ using \texttt{Graphviz dot layout} and compute the strength between the connections based on the cosine similarity, since the original networks do not have weights. In Table~\ref{tab:dream5} we record the number of nodes and edges for each dataset. Note that these are not pure DAGs either (but it is possible to embed over $99\%$ of the directed edges correctly, see Table~\ref{tab:real_results_appendix}). For training, we follow Appendix~\ref{Metric DAG Embedding} with learning rate $10^{-4}$.

{\color{black}\textbf{Ogbn-arxiv Large Citation Graph Embedding.} Given that the previous graphs are relatively small, we conduct an additional experiment on the ogbn-arxiv dataset~\citep{hu2021opengraphbenchmarkdatasets}, which is orders of magnitude larger (see Table~\ref{tab:arxiv}). We use a similar procedure as before: we remove self-loops to avoid returning similarity scores of 1.0 and also remove edges between nodes whose feature vectors have a cosine similarity score greater than 0.99. In this particular dataset, we found that some nodes had almost identical features, which, due to numerical truncation when computing the distance between nodes, resulted in a distance of zero and led to numerical instabilities. This resulted in 269 rejected edges out of 1,166,243, or $0.02\%$.

\vspace{-5pt}
\begin{table}[ht]
    \centering
    \caption{Statistics of Ogbn-arxiv dataset}
    \scalebox{0.7}{
    \begin{tabular}{lcc}
        \toprule
        \textbf{Name} & \textbf{\#nodes} & \textbf{\#edges}\\
        \midrule
        Arxiv   & 169,343 & 1,166,243  \\
        \bottomrule
    \end{tabular}}
    \label{tab:arxiv}
\end{table}

We train all models for 40 epochs, using the AdamW optimizer with a learning rate of $10^{-4}$ and a batch size of 10,000. We report the results in Table~\ref{tab:arxiv_results} below. Once again, we observe that, under the same training conditions, NSTs are superior at embedding compared to their fixed geometry counterparts. However, we observe a degradation in performance compared to previous experiments with smaller graphs. Additionally, note that we do not modify the parameter count of the neural networks.

\begin{table}[hbtp!]
\caption{Embedding results for arxiv citation network.}
\centering
\vspace{-9pt}
\scriptsize
\scalebox{0.92}{
\begin{tabular}{cc | ccc | ccc | ccc}\toprule
& & \multicolumn{3}{c}{Neural Spacetime} & \multicolumn{3}{c}{Minkowski} & \multicolumn{3}{c}{De Sitter} \\
\textbf{Dataset}            & \rotatebox{90}{\textbf{Embed. Dim}} & \rotatebox{90}{\textbf{Distortion}} & \rotatebox{90}{\textbf{Max Distortion}} & \rotatebox{90}{\textbf{Directionality}} & \rotatebox{90}{\textbf{Distortion}} & \rotatebox{90}{\textbf{Max Distortion}} & \rotatebox{90}{\textbf{Directionality}}  & \rotatebox{90}{\textbf{Distortion}} & \rotatebox{90}{\textbf{Max Distortion}} & \rotatebox{90}{\textbf{Directionality}}   \\ \toprule

\multirow{1}{*}{Arxiv} & 2             &    1.88  $\pm$  4.37                                &     1,031.11          &       0.80    &    4.64  $\pm$   7,322.96                                &   1,811,858.75             &     0.83          &    -133.12  $\pm$   16,142.16                               &      1,845,115.5          &    0.77      \\ \bottomrule
\end{tabular}}
\label{tab:arxiv_results}
\end{table}
}

\clearpage

\subsection{Additional Results}
\label{Additional Results}

In this appendix, we provide extended tables of the results presented in the main text.

\begin{table}[!hbpt]
    \centering
\caption{DAG embedding results. Embedding dimension $D=T=2,4,10$.}
\centering
\scalebox{0.9}{
\scriptsize
\begin{tabular}{ccccc}
\hline
          & \multicolumn{4}{c}{\textit{Neural Spacetime}}  \\ \toprule
\textbf{Metric}            & \textbf{Embedding Dim} & \textbf{Distortion (average $\pm$ stand dev)} & \textbf{Max Distortion} & \textbf{Directionality} \\ \toprule
\multirow{3}{*}{$\|\mathbf{x} - \mathbf{y}\|^{0.5}\log(1+\|\mathbf{x} - \mathbf{y}\|)^{0.5}$} & 2             &     1.13 $\pm$ 0.37                                 &             5.82   &      1.0         \\ \cline{2-5} 
                  & 4             &      1.02 $\pm$ 0.06                                &        1.34        &   1.0             \\ \cline{2-5} 
                  & 10            &       1.00 $\pm$ 0.02                               &    1.28            &        1.0         \\ \hline
\multirow{3}{*}{$\|\mathbf{x} - \mathbf{y}\|^{0.1}\log(1+\|\mathbf{x} - \mathbf{y}\|)^{0.9}$} & 2             &     1.16 $\pm$ 0.45                                 &        6.21        &       1.0         \\ \cline{2-5} 
                  & 4             &       1.02 $\pm$ 0.07                               &   1.75             & 1.0               \\ \cline{2-5} 
                  & 10            &          1.00 $\pm$ 0.04                            &                1.47&         1.0       \\ \hline
\multirow{3}{*}{$1- \frac{1}{(1+\|\mathbf{x} - \mathbf{y}\|^{0.5})}$} & 2             &                   1.53 $\pm$ 1.25                   &     14.20           &  1.0              \\ \cline{2-5} 
                  & 4             &             1.10 $\pm$ 0.38                         &     5.81           &       1.0         \\ \cline{2-5} 
                  & 10            &                  1.01 $\pm$ 0.06                    &         1.63       &   1.0             \\ \hline
\multirow{3}{*}{$1 - \exp{\frac{-(\|\mathbf{x} - \mathbf{y}\|-1)}{\log(\|\mathbf{x} - \mathbf{y}\|)}}$} & 2             &    1.51 $\pm$ 1.18                                  &        13.55        &          1.0      \\ \cline{2-5} 
                  & 4             &       1.11 $\pm$ 0.41                               &       8.92         &       1.0         \\ \cline{2-5} 
                  & 10            &           1.01 $\pm$   0.05                        &         1.31       &        1.0        \\ \hline
\multirow{3}{*}{$1- \frac{1}{1+\|\mathbf{x} - \mathbf{y}\|^{0.2}+\|\mathbf{x} - \mathbf{y}\|^{0.5}}$} & 2             &           1.60 $\pm$ 1.42                           &     17.07           &           1.0     \\ \cline{2-5} 
                  & 4             &     1.13 $\pm$ 0.45                             &         4.44       &        1.0        \\ \cline{2-5} 
                  & 10            &         1.00 $\pm$ 0.05                             &      1.44          &     1.0           \\ \midrule
          & \multicolumn{4}{c}{\textit{Minkowski}}  \\ \toprule
\textbf{Metric}            & \textbf{Embedding Dim} & \textbf{Distortion (average $\pm$ stand dev)} & \textbf{Max Distortion} & \textbf{Directionality} \\ \toprule
\multirow{3}{*}{$\|\mathbf{x} - \mathbf{y}\|^{0.5}\log(1+\|\mathbf{x} - \mathbf{y}\|)^{0.5}$} & 2             &      2.86 $\pm$  5.22                                &        72.66        &       0.99        \\ \cline{2-5} 
                  & 4             &    1.70   $\pm$  2.77                               &      71.09          &      0.99          \\ \cline{2-5} 
                  & 10            &     1.21   $\pm$  1.33                              &    35.58            &       0.99          \\ \cline{1-5}
\multirow{3}{*}{$\|\mathbf{x} - \mathbf{y}\|^{0.1}\log(1+\|\mathbf{x} - \mathbf{y}\|)^{0.9}$} & 2             &    6.77  $\pm$  133.68                                &         1669.83       &       0.99         \\ \cline{2-5} 
                  & 4             &   1.70     $\pm$ 5.21                               &           77.03     &     0.99           \\ \cline{2-5} 
                  & 10            &       1.19   $\pm$  1.09                         &               25.18 &   0.99             \\ \cline{1-5}
\multirow{3}{*}{$1- \frac{1}{(1+\|\mathbf{x} - \mathbf{y}\|^{0.5})}$} & 2             &                     21.89 $\pm$   759.05                 &       18753.57         &         0.99       \\ \cline{2-5} 
                  & 4             &        2.87      $\pm$ 20.17                          &  643.71              &     0.99           \\ \cline{2-5} 
                  & 10            &                   1.17 $\pm$   1.26                  &   37.42             &     0.99           \\ \cline{1-5}
\multirow{3}{*}{$1 - \exp{\frac{-(\|\mathbf{x} - \mathbf{y}\|-1)}{\log(\|\mathbf{x} - \mathbf{y}\|)}}$} & 2             &    11.37 $\pm$    114.98                               &         1876.54       &     0.98           \\ \cline{2-5} 
                  & 4             &     2.49   $\pm$ 8.72                               &   198.04             &      0.98          \\ \cline{2-5} 
                  & 10            &       1.18     $\pm$  2.49                         &    82.67            &     0.99           \\ \cline{1-5}
\multirow{3}{*}{$1- \frac{1}{1+\|\mathbf{x} - \mathbf{y}\|^{0.2}+\|\mathbf{x} - \mathbf{y}\|^{0.5}}$} & 2             &            20.65 $\pm$      292.90                      &      5471.07          &      0.96          \\ \cline{2-5} 
                  & 4             &    2.83  $\pm$  9.54                            &     153.91           &           0.98     \\ \cline{2-5} 
                  & 10            &   1.13       $\pm$ 0.91                             &  28.45              &     0.99           \\ \bottomrule
          & \multicolumn{4}{c}{\textit{De Sitter}}  \\ \toprule
\textbf{Metric}            & \textbf{Embedding Dim} & \textbf{Distortion (average $\pm$ stand dev)} & \textbf{Max Distortion} & \textbf{Directionality} \\ \toprule
\multirow{3}{*}{$\|\mathbf{x} - \mathbf{y}\|^{0.5}\log(1+\|\mathbf{x} - \mathbf{y}\|)^{0.5}$} & 2             &    $\infty$   $\pm$                                  &       $\infty$        &            0.99   \\ \cline{2-5} 
                  & 4             &      -4.33 $\pm$ 816.47                                &     10235.71           &       0.99         \\ \cline{2-5} 
                  & 10            &     288.17   $\pm$ 9794.97                               &      324027.5          &  0.99               \\ \cline{1-5}
\multirow{3}{*}{$\|\mathbf{x} - \mathbf{y}\|^{0.1}\log(1+\|\mathbf{x} - \mathbf{y}\|)^{0.9}$} & 2             &     9.40 $\pm$   2226.84                               &          63968.21      &         0.99       \\ \cline{2-5} 
                  & 4             &       174.69 $\pm$ 3637.32                               &  115851.88              &       0.99         \\ \cline{2-5} 
                  & 10            &       -10.66   $\pm$ 739.71                          &    8997.62            &      0.99          \\ \cline{1-5}
\multirow{3}{*}{$1- \frac{1}{(1+\|\mathbf{x} - \mathbf{y}\|^{0.5})}$} & 2             &                      -63.56 $\pm$ 4438.15                  &       43424.54         &         0.94       \\ \cline{2-5} 
                  & 4             &        $\infty$      $\pm$                           &       $\infty$         &       0.94         \\ \cline{2-5} 
                  & 10            &             -333.23       $\pm$     8701.18                &       130111.80        &        0.94        \\ \cline{1-5}
\multirow{3}{*}{$1 - \exp{\frac{-(\|\mathbf{x} - \mathbf{y}\|-1)}{\log(\|\mathbf{x} - \mathbf{y}\|)}}$} & 2             &   -183.71  $\pm$  9600.71                                 &          39648.66      &     0.94           \\ \cline{2-5} 
                  & 4             &     83.04   $\pm$  4313.82                              &   97524.21            &         0.94       \\ \cline{2-5} 
                  & 10            &       418.26     $\pm$  6599.80                         &      150543.73         &        0.94        \\ \cline{1-5}
\multirow{3}{*}{$1- \frac{1}{1+\|\mathbf{x} - \mathbf{y}\|^{0.2}+\|\mathbf{x} - \mathbf{y}\|^{0.5}}$} & 2             &        -542.39     $\pm$  22058.97                          &          114491.45      &     0.94           \\ \cline{2-5} 
                  & 4             &   $\infty$   $\pm$                             &       $\infty$         &          0.94      \\ \cline{2-5} 
                  & 10            &       $\infty$   $\pm$                              &        $\infty$        &         0.94       \\ \bottomrule
\end{tabular}}
\label{tab:syntheticDAG_appendix}
\end{table}

\begin{table}[hbtp!]
\caption{Embedding results for real-world web page hyperlink graph datasets and gene regulatory networks.}
\centering
\scalebox{0.8}{
\begin{tabular}{ccccc}
\midrule
          & \multicolumn{4}{c}{\textit{Neural Spacetime}}  \\ \toprule
\textbf{Dataset}            & \textbf{Embedding Dim} & \textbf{Distortion (avg.\ $\pm$ sdev.)} & \textbf{Max Distortion} & \textbf{Directionality} \\ \toprule
\multirow{3}{*}{Cornell} & 2             &    1.00  $\pm$   0.07                              &        1.31        &       0.93        \\ \cline{2-5} 
                  & 4             &     1.00 $\pm$  0.04                               &    1.08           &       0.94         \\ \cline{2-5} 
                  & 10            &  1.00     $\pm$  0.04                              &       1.08         &      0.94           \\ \hline
\multirow{3}{*}{Texas} & 2             &    1.01  $\pm$   0.10                               &            2.27    &       0.89       \\ \cline{2-5} 
                  & 4             &    1.00    $\pm$  0.01                              &        1.05        &          0.90      \\ \cline{2-5} 
                  & 10            &     1.00    $\pm$   0.00                         &        1.00        &      0.90          \\ \hline
\multirow{3}{*}{Wisconsin} & 2             &                1.00    $\pm$ 0.10                   &        1.67        &      0.89         \\ \cline{2-5} 
                  & 4             &         1.00    $\pm$   0.04                      &       1.16         &      0.89          \\ \cline{2-5} 
                  & 10            &           1.00        $\pm$  0.04                  &        1.20       &     0.89           \\ \hline
\multirow{3}{*}{In silico} & 2             &             1.06    $\pm$    0.47                &      18.54          &        1.00       \\ \cline{2-5} 
                  & 4             &        1.00    $\pm$      0.09                   &     1.73         &      1.00         \\ \cline{2-5} 
                  & 10            &            1.00      $\pm$  0.05                  &      1.32         &          1.00      \\ \hline
\multirow{3}{*}{Escherichia coli} & 2             &           1.02      $\pm$   0.45                 &   15.37             &           1.00    \\ \cline{2-5} 
                  & 4             &      1.00      $\pm$     0.06                    &       2.62       &       1.00        \\ \cline{2-5} 
                  & 10            &                  1.00 $\pm$ 0.05                    & 1.17              &      1.00          \\ \hline
\multirow{3}{*}{Saccharomyces cerevisiae} & 2             &             1.05    $\pm$      0.34              &              10.18  &        1.00       \\ \cline{2-5} 
                  & 4             &    1.00        $\pm$0.07                         &           1.63   &       1.00        \\ \cline{2-5} 
                  & 10            &                  1.00$\pm$    0.05                &   1.57            &      1.00          \\ \midrule
          & \multicolumn{4}{c}{\textit{Minkowski}}  \\ \toprule
\textbf{Dataset}            & \textbf{Embedding Dim} & \textbf{Distortion (avg.\ $\pm$ sdev.)} & \textbf{Max Distortion} & \textbf{Directionality} \\ \toprule
\multirow{3}{*}{Cornell} & 2             &    1.07  $\pm$  0.70                               &      9.43         &     0.94          \\ \cline{2-5} 
                  & 4             &    1.00  $\pm$  0.00                               &          1.01     &      0.94          \\ \cline{2-5} 
                  & 10            &   1.00    $\pm$   0.00                             &           1.00     &   0.94              \\ \hline
\multirow{3}{*}{Texas} & 2             &   1.12   $\pm$ 1.73                                  &       31.27         &         0.90     \\ \cline{2-5} 
                  & 4             &     1.00   $\pm$  0.00                              &  1.00              &   0.90             \\ \cline{2-5} 
                  & 10            &      1.01   $\pm$ 0.01                           &  1.05              &       0.90         \\ \hline
\multirow{3}{*}{Wisconsin} & 2             &     5.07               $\pm$   65.99                 &    1410.03            &        0.90       \\ \cline{2-5} 
                  & 4             &    1.00         $\pm$   0.04                      &  1.19              &        0.90        \\ \cline{2-5} 
                  & 10            &                   1.13 $\pm$ 0.69                   &           16.28    &  0.90              \\ \hline
\multirow{3}{*}{In silico} & 2             &        105.42         $\pm$  4671.85                  &         209248.72       &    0.94           \\ \cline{2-5} 
                  & 4             &        0.25    $\pm$    54.57                     &      1315.76        &    0.95           \\ \cline{2-5} 
                  & 10            &            1.00      $\pm$   0.05                 &      3.69         &   0.99             \\ \hline
\multirow{3}{*}{Escherichia coli} & 2             &                 -4.25 $\pm$  149.61                  &         438.34       &  0.97             \\ \cline{2-5} 
                  & 4             &       1.00     $\pm$   0.01                      &     1.08         &   0.98            \\ \cline{2-5} 
                  & 10            &              1.00     $\pm$ 0.01                    &       1.01        &  0.99              \\ \hline
\multirow{3}{*}{Saccharomyces cerevisiae} & 2             &               -2.38  $\pm$     173.57               &       151.43         &           0.91   \\ \cline{2-5} 
                  & 4             &        1.04    $\pm$ 2.25                         &     63.17         &              0.98 \\ \cline{2-5} 
                  & 10            &            1.01     $\pm$ 0.02                    &      1.39         &      0.99          \\ \midrule
  & \multicolumn{4}{c}{\textit{De Sitter}}  \\ \toprule
\textbf{Dataset}            & \textbf{Embedding Dim} & \textbf{Distortion (avg.\ $\pm$ sdev.)} & \textbf{Max Distortion} & \textbf{Directionality} \\ \toprule
\multirow{3}{*}{Cornell} & 2             &   -55.83   $\pm$    890.45                             &    3950.88           &      0.92         \\ \cline{2-5} 
                  & 4             &    -20.60  $\pm$ 249.49                                &       403.46        &      0.94          \\ \cline{2-5} 
                  & 10            &    0.80   $\pm$ 126.26                               &      1543.07          &             0.93    \\ \hline
\multirow{3}{*}{Texas} & 2             &    -0.29  $\pm$     84.42                             &         818.10       &    0.90          \\ \cline{2-5} 
                  & 4             &    42.03    $\pm$  795.51                              &    13939.25            &        0.90        \\ \cline{2-5} 
                  & 10            &      2.60   $\pm$      70.33                      &       1107.60         &  0.90              \\ \hline
\multirow{3}{*}{Wisconsin} & 2             &                    2.06 $\pm$      63.46              &         1291.31       &   0.89            \\ \cline{2-5} 
                  & 4             &        -0.78     $\pm$ 27.91                        &     114.24           &             0.90   \\ \cline{2-5} 
                  & 10            &            0.04       $\pm$  215.94                  &     2862.19          &            0.89    \\ \hline
\multirow{3}{*}{In silico} & 2             &                 -63.59 $\pm$  1866.69                  &        56626.97        &          0.92     \\ \cline{2-5} 
                  & 4             &       -468.81     $\pm$  33021.14                        &   65289.22           &       0.92        \\ \cline{2-5} 
                  & 10            &           -129.13       $\pm$      9623.30              &     261531.59          &          0.93      \\ \hline
\multirow{3}{*}{Escherichia coli} & 2             &    34.65             $\pm$    2637.50                &   119047.23             &       0.91        \\ \cline{2-5} 
                  & 4             &      -2.00      $\pm$ 3294.81                        &      130509.59        &  0.91             \\ \cline{2-5} 
                  & 10            &              -8.26     $\pm$ 94.57                    &   652.96            &   0.91             \\ \hline
\multirow{3}{*}{Saccharomyces cerevisiae} & 2             &                 55.36 $\pm$  3960.09                  &  160278.39              &    0.90          \\ \cline{2-5} 
                  & 4             &        -28.60    $\pm$  1175.67                       &  63086.54            &        0.90       \\ \cline{2-5} 
                  & 10            &     -121.17            $\pm$      7550.16              &    84724.25           &    0.91            \\ \bottomrule
\end{tabular}
}
\label{tab:real_results_appendix}

\end{table}

\clearpage

{\color{black}\subsection{Further Clarifications on Experiments Using Neural Spacetime Embeddings Compared to Fixed-Geometry Baselines}

In this section, we aim to provide additional intuition to the reader on how NSTs compare to the fixed-geometry embedding baselines and why we expect them to perform better.

For all experiments, we use a feature encoder that maps the graph node features to the relevant manifold. We assume that the output of the feature encoder is in Euclidean space in all cases. We then proceed according to the specific geometry. This approach aligns with previous works such as neural snowflakes~\citep{de2023neural} and other studies on product manifold embeddings~\citep{borde2023latent}.

Intuitively, one can understand the problem as a two-step process, which in practice is learned jointly via end-to-end gradient descent. In the first step, the encoder learns to position the graph nodes in space—that is, it learns to map the original node features to coordinates in the manifold. In the second step, the distance between points on the manifold is evaluated, either based on a given metric when the geometry is fixed or by learning the geometry (the metric itself) in a data-driven fashion.

In the cases of Minkowski and De Sitter space, the geometry is not learned but given. More specifically, in Minkowski space, we use the feature outputs of the Euclidean feature encoder, apply a change in the metric tensor, and recognize that one coordinate has a different signature. In this particular case, we do not require an exponential map since the space is flat. For De Sitter space, the situation is similar: only the positioning of points in the manifold is optimized via gradient descent, while the geometry remains fixed. In terms of implementation and metric calculation, De Sitter space is a curved manifold, making computations more complicated than for Minkowski space. We use an embedding approach that avoids exponential maps: we map points into a De Sitter hyperboloid in a higher-dimensional Minkowski space and compute geodesic distances there. The geometric operations we utilize include normalizing spatial components, computing the De Sitter inner product, and handling both timelike and spacelike separations. Additionally, we would like to highlight that, although the authors in~\citep{law2023spacetime} also use Minkowski and De Sitter spaces, our baselines are more powerful. This is because we employ a neural network feature encoder to optimize the placement of node embeddings within the manifold

NSTs are intrinsically different in that their geometry is not fixed but rather random at initialization and parametrized by a neural network that always outputs a pseudo-quasi-metric by construction. The specific pseudo-quasi-metric it generates is learned during optimization based on the input graph. Hence, in this case, both the position of the points on the manifold and the geometry of the manifold itself are learned jointly via gradient descent. For NSTs, we do not need exponential maps either since the construction is based on warping a norm.
During optimization, the inputs are the graph node features, and the targets are both the distances between nodes (induced by the graph geometry) and the causality structure (given by the edge directions). Since neural spacetimes are able to reshape the geometry of the embedding space as a function of the data, they are inherently more flexible and they are able to embed DAGs with less distortion (the best possible value for distortion is 1).

\vspace{-10pt}
\subsection{Downstream Application Testing: Node classificaition on Embedded Heterophilic graphs}
\vspace{-5pt}

Lastly, we evaluate the downstream performance of our encoding approach in transductive node classification for heterophilic graphs, specifically using the Cornell, Texas, and Wisconsin datasets. It is important to note that this is not a standard application of NSTs, as their original design aims to embed DAGs with minimal distortion. NSTs typically use the feature encoder network as an initial intermediate mapping to a latent space, which is then utilized by the learnable quasi-metric and learnable partial order to encode distance and directionality respectively. While the neural quasi-metric takes two node feature vectors as input and outputs a scalar distance value (making it unsuitable as direct input for a downstream node classification task), the neural partial order operates pointwise and returns a feature vector. Therefore, we adopt the following approach. First, we train the complete NST model to encode the original graph into the continuous geometry, simultaneously optimizing all three networks composing the NST, $\mathcal{E}$, $\mathcal{D}$, and $\mathcal{T}$. Once trained, we use the frozen feature encoder and neural partial order as pretrained featurizers to encode the original node features from the dataset at hand. These transformed features are then fed into a downstream network that is trained as a node-wise classifier. That is, the new node features for the graphs become:

\begin{equation}
    z_v = x_v \parallel (\hat{x}_v)_{1:D} \parallel t_v, \quad \text{where} \quad \hat{x}_v = \mathcal{E}(x_v), \; t_v = \mathcal{T}(\hat{x}_v),
\end{equation}

where we have augmented the original features with NST-based features optimized according to the graph topology. In the equation above $x_v$ corresponds to the original features for node $v$, $\mathcal{E}$ and $\mathcal{T}$ are the frozen feature encoder and partial order, which were originally optimized alongside $\mathcal{D}$. As in the main text, the subscript in $(\hat{x}_v)_{1:D}$ means that we utilize the first $D$ spatial dimensions from the feature vector, and we use $||$ for concatenation. We use pre-trained checkpoints with $D=T=10$.

We display the downstream results in Table~\ref{tab:NST-downstream} below. We experiment with different downstream classifiers, including MLPs, GCNs~\citep{kipf2017semisupervised}, GATs~\citep{veličković2018graph}, CPGNN-MLPs~\citep{zhu2021graph}, and CPGNN-Cheby~\citep{zhu2021graph,Defferrard2016ConvolutionalNN}. Note that CPGNN is not just a network but also a framework for training on heterophilic graphs. It employs an estimator network, such as an MLP or a Chebyshev polynomial-based model, for prediction. The CPGNN method incorporates estimator pretraining and utilizes a compatibility matrix, which is initialized using the training node labels and the training mask. This matrix is further refined using the Sinkhorn-Knopp algorithm for initialization. Furthermore, the compatibility matrix is regularized during training through an additional loss term on top of the cross-entropy loss. We reimplement all baselines and test them on the Cornell, Texas, and Wisconsin datasets using 10-fold splits, based on the masks provided in PyTorch Geometric. Following~\citep{zhu2021graph} we use weight decay of $5\times 10^{-4}$ for all networks and pretrain the estimators in CPGNN-MLP and CPGNN-Cheby for 100 steps. All our downstream classifiers use two layers with a ReLU activation function and are trained with a learning rate of 0.01 for 400 steps using AdamW. We experiment with varying hidden dimensions—10, 20, 30, and 64—the last of which is the hidden dimension used by the baselines in~\citep{zhu2021graph}. In our experiments, we study the effect of adding geometric node features based on the NST encoding to downstream classifiers. We find that the MLP classifier achieves the best downstream performance when augmented with NST features, particularly for the Texas and Wisconsin datasets. For Cornell, the best downstream performance is achieved by the MLP without NST features, which attains $71\%$ accuracy with a hidden dimension of 64. The counterpart augmented with NST features achieves $70\%$ accuracy, making the difference not particularly significant. Overall, in Cornell, we observe that adding NST features is especially beneficial for smaller classifiers with fewer hidden dimensions.

These experiments confirm that the NST has learned meaningful features that can be used by downstream classifiers. However, we invite future research in this direction, as this paper primarily focuses on embeddings, and this appendix is only an initial exploration of the applicability of NSTs to downstream tasks.
}

\begin{table}[hbtp!]
\caption{Node classification results for heterophilic graphs, including the test accuracy mean and standard deviation. \cmark~denotes that the input features to the downstream classifier consist of the original graph input features plus the NST features \( z_v = x_v \parallel (\hat{x}_v)_{1:D} \parallel t_v \). In contrast, \xmark~indicates that only \( x_v \) is passed into the model.}
\centering
\scalebox{0.85}{
\scriptsize
\begin{tabular}{ccc | c | c | c | c | c}\toprule
\textbf{Dataset}            & \textbf{Hidden Dim} & \textbf{NST Features} & \textit{MLP} & \textit{GCN} & \textit{GAT} & \textit{CPGNN-MLP} & \textit{CPGNN-ChebNet} \\ \toprule 

\multirow{6}{*}{Cornell} & 10 & \cmark & 0.55 $\pm$ 0.09 & 0.43 $\pm$ 0.09 & 0.48 $\pm$ 0.08  & 0.54 $\pm$ 0.05 & 0.51 $\pm$ 0.05 \\ \cline{2-7}
 & 10 & \xmark &  0.40 $\pm$ 0.12 & 0.43 $\pm$ 0.06 & 0.48 $\pm$ 0.07 & 0.47 $\pm$ 0.10 & 0.46 $\pm$ 0.11  \\ \cline{2-8}
 & 20 & \cmark & 0.61 $\pm$ 0.09 & 0.44 $\pm$ 0.09 & 0.49 $\pm$ 0.08 & 0.51 $\pm$ 0.05 & 0.50 $\pm$ 0.10 \\ \cline{2-8}
 & 20 & \xmark & 0.56 $\pm$ 0.11 & 0.43 $\pm$ 0.06 & 0.49 $\pm$ 0.09 & 0.43 $\pm$ 0.11 & 0.41 $\pm$ 0.07 \\ \cline{2-8}
 & 30 & \cmark & 0.70 $\pm$ 0.06 & 0.44 $\pm$ 0.09 & 0.47 $\pm$ 0.09 & 0.47 $\pm$ 0.12 & 0.52 $\pm$ 0.07 \\ \cline{2-8}
 & 30 & \xmark & 0.70 $\pm$ 0.05 & 0.43 $\pm$ 0.07 & 0.50 $\pm$ 0.10 & 0.42 $\pm$ 0.11 & 0.41 $\pm$ 0.11 \\ \cline{2-8}
& 64 & \cmark & 0.70 $\pm$ 0.07 & 0.43 $\pm$ 0.09 & 0.50 $\pm$ 0.06 & 0.50 $\pm$ 0.08 & 0.52 $\pm$ 0.07 \\ \cline{2-8}
& 64 & \xmark & $\mathbf{0.71 \pm 0.07}$ & 0.43 $\pm$ 0.07 & 0.50 $\pm$ 0.08 & 0.40 $\pm$ 0.10 & 0.40 $\pm$ 0.11 \\  \midrule
 \multirow{6}{*}{Texas} & 10 & \cmark & 0.58 $\pm$ 0.14 & 0.58 $\pm$ 0.08 & 0.56 $\pm$ 0.09 & 0.61 $\pm$ 0.05 & 0.62 $\pm$ 0.08 \\ \cline{2-8}
 & 10 & \xmark &  0.41 $\pm$ 0.22 & 0.49 $\pm$ 0.08 & 0.50 $\pm$ 0.07 & 0.63 $\pm$ 0.10 & 0.62 $\pm$ 0.10 \\ \cline{2-8}
 & 20 & \cmark & 0.64 $\pm$ 0.07 & 0.56 $\pm$ 0.07 & 0.54 $\pm$ 0.08 & 0.61 $\pm$ 0.06 & 0.63 $\pm$ 0.08 \\ \cline{2-8}
 & 20 & \xmark & 0.57 $\pm$ 0.10 & 0.49 $\pm$ 0.07 & 0.50 $\pm$ 0.10 & 0.61 $\pm$ 0.10 & 0.61 $\pm$ 0.10 \\ \cline{2-8}
 & 30 & \cmark & 0.72 $\pm$ 0.08 & 0.58 $\pm$ 0.08 & 0.52 $\pm$ 0.09 & 0.61 $\pm$ 0.06 & 0.64 $\pm$ 0.06 \\ \cline{2-8}
 & 30 & \xmark & 0.67 $\pm$ 0.09 & 0.49 $\pm$ 0.07 & 0.52 $\pm$ 0.07 & 0.61 $\pm$ 0.10 & 0.62 $\pm$ 0.10  \\ \cline{2-8}
 & 64 & \cmark & $\mathbf{0.74 \pm 0.05}$ & 0.55 $\pm$ 0.07 & 0.54 $\pm$ 0.06 & 0.61 $\pm$ 0.06 & 0.68 $\pm$ 0.07 \\ \cline{2-8}
& 64 & \xmark & 0.68 $\pm$ 0.05 & 0.48 $\pm$ 0.07 & 0.47 $\pm$ 0.07 & 0.57 $\pm$ 0.10 & 0.55 $\pm$ 0.11 \\  \midrule
 \multirow{6}{*}{Wisconsin} & 10 & \cmark & 0.62 $\pm$ 0.13 & 0.49 $\pm$ 0.05 & 0.48 $\pm$ 0.07 & 0.48 $\pm$ 0.05 & 0.51 $\pm$ 0.08 \\ \cline{2-8}
 & 10 & \xmark & 0.54 $\pm$ 0.16 & 0.45 $\pm$ 0.05 & 0.46  $\pm$ 0.05 & 0.61 $\pm$ 0.03 & 0.58 $\pm$ 0.05 \\ \cline{2-8}
 & 20 & \cmark & 0.76 $\pm$ 0.10 & 0.50 $\pm$ 0.06 & 0.49 $\pm$ 0.08 & 0.50 $\pm$ 0.10 & 0.52 $\pm$ 0.11 \\ \cline{2-8}
 & 20 & \xmark & 0.70 $\pm$ 0.09 & 0.44 $\pm$ 0.06 & 0.48 $\pm$ 0.05 & 0.60 $\pm$ 0.06 & 0.58 $\pm$ 0.05 \\ \cline{2-8}
 & 30 & \cmark & 0.78 $\pm$ 0.06 & 0.49 $\pm$ 0.06 & 0.47 $\pm$ 0.09 & 0.50 $\pm$ 0.08 & 0.53 $\pm$ 0.11 \\ \cline{2-8}
 & 30 & \xmark & 0.72 $\pm$ 0.05 & 0.44 $\pm$ 0.07 & 0.49 $\pm$ 0.08 & 0.61 $\pm$ 0.03 & 0.57 $\pm$ 0.04 \\ \cline{2-8}
 & 64 & \cmark & $\mathbf{0.81 \pm 0.05}$ & 0.49 $\pm$ 0.06 & 0.51 $\pm$ 0.10 & 0.51 $\pm$ 0.12 & 0.53 $\pm$ 0.13 \\ \cline{2-8}
& 64 & \xmark & 0.76 $\pm$ 0.08 & 0.43 $\pm$ 0.07 & 0.49 $\pm$ 0.06 & 0.57 $\pm$ 0.04 & 0.57 $\pm$ 0.07 \\  \bottomrule
\end{tabular}}
\label{tab:NST-downstream}
\end{table}

\end{document}